\def\isarxiv{1}

\def\paperTitle{Optimal Decay Spectra for Linear Recurrences}

\ifdefined\isarxiv
\documentclass[11pt]{article}
\usepackage[numbers]{natbib}
\else
\documentclass{article}
\usepackage[preprint]{neurips_2026}
\fi

\ifdefined\isarxiv

\usepackage{amsmath}
\usepackage{mathtools}
\usepackage{amsthm}
\usepackage{amssymb}
\usepackage{algorithm}
\usepackage{subfig}
\usepackage{algpseudocode}
\usepackage{graphicx}
\usepackage{grffile}
\usepackage{wrapfig,epsfig}
\usepackage{url}
\usepackage{xcolor}
\usepackage{epstopdf}

\usepackage{bbm}
\usepackage{dsfont}

\usepackage{booktabs}
\usepackage{multicol}
\usepackage{multirow}
\else 

\usepackage[utf8]{inputenc} 
\usepackage[T1]{fontenc}    
\usepackage{hyperref}       
\usepackage{url}            
\usepackage{booktabs}       
\usepackage{amsfonts}       
\usepackage{nicefrac}       
\usepackage{microtype}      
\usepackage{xcolor}         

\fi

\ifdefined\isarxiv

\else
\usepackage{amsmath}
\usepackage{mathtools}
\usepackage{amsthm}
\usepackage{amssymb}
\usepackage{algorithm}
\usepackage{subfig}
\usepackage{algpseudocode}
\usepackage{graphicx}
\usepackage{grffile}
\usepackage{wrapfig,epsfig}
\usepackage{epstopdf}
\usepackage{bbm}
\usepackage{dsfont}

\usepackage{multicol}
\usepackage{multirow}
\fi
 
\allowdisplaybreaks

\ifdefined\isarxiv

\usepackage{tikz}
\usepackage{hyperref}  
\hypersetup{colorlinks=true,citecolor=blue,linkcolor=blue} 
\usetikzlibrary{arrows}
\usepackage[margin=1in]{geometry}

\else
\definecolor{mydarkblue}{rgb}{0,0.08,0.45}
\hypersetup{colorlinks=true, citecolor=mydarkblue,linkcolor=mydarkblue}
\fi
 
\graphicspath{{./figs/}}

\theoremstyle{plain}
\newtheorem{theorem}{Theorem}[section]
\newtheorem{lemma}[theorem]{Lemma}
\newtheorem{definition}[theorem]{Definition}

\newtheorem{proposition}[theorem]{Proposition}
\newtheorem{corollary}[theorem]{Corollary}

\newtheorem{condition}[theorem]{Condition}

\newtheorem{remark}[theorem]{Remark}

\newcommand{\wh}{\widehat}

\newcommand{\R}{\mathbb{R}}

\renewcommand{\d}{\mathrm{d}}

\renewcommand{\hat}{\wh}

\DeclareMathOperator*{\E}{{\mathbb{E}}}

\DeclareMathOperator*{\Z}{\mathbb{Z}}

\DeclareMathOperator{\diag}{diag}

\newcommand{\calL}{\mathcal{L}}

\newcommand{\norm}[1]{\left\lVert#1\right\rVert}
\newcommand{\abs}[1]{\left\lvert#1\right\rvert}
\newcommand{\inner}[2]{\langle #1, #2 \rangle}

\DeclareMathOperator{\SSD}{SSD}
\DeclareMathOperator{\softplus}{Softplus}

\DeclareMathOperator{\sech}{sech}
\DeclareMathOperator{\clamp}{clamp}

\begin{document}

\ifdefined\isarxiv

\date{}
\title{\paperTitle}
\author{%
Yang Cao\\
\texttt{yang@silifen.com}
}

\else

\title{\paperTitle}

\author{%
  Yang Cao \\
  \texttt{yang@silifen.com} \\
}

\maketitle

\begin{center}
\vspace{-10pt}
\centering
\includegraphics[width=0.3\linewidth]{logo/silifen-dark.pdf}
\end{center}

\fi

\ifdefined\isarxiv
\begin{titlepage}
\maketitle


\begin{abstract}
Linear recurrent models offer linear-time sequence processing but often suffer from suboptimal long-range memory. We trace this to the decay spectrum: for $N$ channels, random initialization collapses the minimum spectral gap to $O(N^{-2})$, yielding sub-exponential error $\exp(-\Omega(N/\log N))$; linear spacing avoids collapse but degrades to $\exp(-O(N/\sqrt{T}))$, practically algebraic over long contexts. We introduce \textbf{Position-Adaptive Spectral Tapering (PoST)}, an architecture-agnostic framework combining two mechanisms: (1)~\emph{Spectral Reparameterization}, which structurally enforces geometrically spaced log-decay rates, proven minimax optimal at rate $O(\exp(-cN/\log T))$; and (2)~\emph{Position-Adaptive Scaling}, the provably unique mechanism that eliminates the \emph{scale mismatch} of static spectra (where only $N\log t/\log T$ of $N$ channels are effective at position $t$) by stretching the spectrum to the actual dependency range, sharpening the rate to $O(\exp(-cN/\log t))$. This scaling natively induces \emph{fractional invariance}: the impulse response becomes scale-free, with channels interpolating between relative and absolute temporal coordinates. PoST integrates into any diagonal linear recurrence without overhead. We instantiate it across Mamba-2, RWKV-7, Gated DeltaNet, Gated Linear Attention, and RetNet. Pre-training at 180M--440M scales shows consistent zero-shot language modeling improvements, significant long-context retrieval gains for Mamba-2 (MQAR and NIAH), and competitive or improved performance across other architectures. Code is available at \url{https://github.com/SiLifen/PoST}.
\end{abstract}
\thispagestyle{empty}
\end{titlepage}

{\hypersetup{linkcolor=black}
\tableofcontents
}
\newpage

\else

\begin{abstract}

\end{abstract}

\fi



\section{Introduction}
\label{sec:intro}

Sequence models are the foundation of modern language processing.  Given a growing sequence of tokens $(x_1, x_2, \ldots)$, the model must predict the next token $x_{t+1}$ using information retained from the entire history $x_1, \ldots, x_t$.  The core challenge is \emph{long-range memory}: as the sequence grows, the model must retain information from increasingly distant positions while processing each new token in bounded time.

Transformer-based architectures~\citep{vsp+17} solve this by explicitly attending to all prior tokens via a key--value cache, but at a quadratic cost in sequence length.  \emph{Linear recurrent models}, including State Space Models (SSMs)~\citep{ggr22, gd23, dg24, llc+26}, RWKV~\citep{pbm+23, pga+24, pzg+25}, gated linear recurrences~\citep{dmr+24}, and linear-attention variants~\citep{syc+23, yws+24}, offer an alternative: the entire context is compressed into a fixed-size latent state $h_t \in \R^N$, and each step updates this state in $O(N)$ time.  The memory horizon is determined by the \emph{decay spectrum}, the collection of per-channel decay rates in the diagonal state update, which controls how quickly each ``memory channel'' forgets past inputs.

Yet linear recurrent models trained at context length $T$ tend to degrade sharply at longer contexts.  We trace this fragility to two independent failure modes, one at initialization and one over long contexts, and address both.

\paragraph{Contributions.}
We propose \textbf{Position-Adaptive Spectral Tapering (PoST)}, an architecture-agnostic framework for scale-free sequential memory. Our core contributions are:
\begin{itemize}
    \item \textbf{Information-Theoretic Blueprint for Sequence Memory.} We establish a design blueprint based on the logarithmic equipartition of information in natural data. We show that ideal memory channels should be distributed geometrically, with timescales systematically spanning from a single token up to the full observed context length.
    \item \textbf{Structural Guarantee via Spectral Reparameterization.} We diagnose the failure modes of existing models: random initializations suffer from sub-exponential approximation errors $\exp(-\Omega(N/\log N))$ due to a severe contraction of the minimum spectral gap $O(N^{-2})$, while linearly spaced grids suffer from exponentially degraded approximation bounds $\exp(-O(N/\sqrt{T}))$ over long contexts. In response, we introduce \emph{Spectral Reparameterization}, a mechanism that structurally guarantees geometrically spaced decay rates. We prove this configuration achieves minimax-optimal exponential approximation for long-range power-law dependencies.
    \item \textbf{Dynamic Mechanism via Position-Adaptive Scaling.} We quantify the \emph{scale mismatch} of static spectra: at position $t$, only $N\log t/\log T$ of $N$ channels contribute, wasting a fraction $1 - \log t/\log T$ of the spectrum. We derive Position-Adaptive Scaling as the provably unique continuous mechanism that eliminates this waste, sharpening the approximation bound from $O(\exp(-cN/\log T))$ to $O(\exp(-cN/\log t))$ at every position. This unique scaling natively induces \emph{fractional invariance}: the model's impulse response becomes scale-free, with channels smoothly interpolating between relative and absolute temporal coordinates, all without computational overhead.
\end{itemize}

We evaluate PoST across Mamba-2~\citep{dg24}, RWKV-7~\citep{pzg+25}, Gated DeltaNet~\citep{ykh25}, Gated Linear Attention (GLA)~\citep{yws+24}, and RetNet~\citep{syc+23}. Pre-training at 180M and 440M scales demonstrates that PoST consistently improves zero-shot language modeling, yields significant gains in long-context retrieval (MQAR and Needle-In-A-Haystack) for Mamba-2 during length extrapolation, and delivers competitive or improved performance across other architectures. Our code is open-sourced at \url{https://github.com/SiLifen/PoST}.

Table~\ref{tab:theory_comparison} summarizes the theoretical landscape governing timescale approximation across the three initialization paradigms.

\begin{table}[h]
    \centering
    \caption{\textbf{Theoretical Landscape of Decay Spectra.} Approximation bounds for power-law kernels across parameterization strategies. Geometric spacing achieves the optimal spectral \emph{shape} but leaves the scale fixed to the design length~$T$; PoST eliminates this scale mismatch via position-adaptive scaling, sharpening the exponent from $\log T$ to $\log t$.}
    \label{tab:theory_comparison}
    \begin{tabular}{lcc}
        \toprule
        Paradigm & Min.\ Spectral Gap & Minimax Error (Power-Law Kernel) \\
        \midrule
        Random & $O(N^{-2})$ & $\exp\left(-\Omega\left(\frac{N}{\log N}\right)\right)$ \\[3pt]
        Linear & $\Theta(1)$ & $\Omega(N^{-\beta})$ \\[3pt]
        Geometric (static) & $O(1/N)$ & $O\left(\exp\left(-c \frac{N}{\log T}\right)\right)$ \\[3pt]
        \midrule
        PoST (Ours) & $O(1/N)$ & $O\left(\exp\left(-c \frac{N}{\log t}\right)\right)$ \\
        \bottomrule
    \end{tabular}
\end{table}

\paragraph{Roadmap.}
Section~\ref{sec:preliminaries} introduces the diagonal linear recurrence framework shared by all target architectures.
Section~\ref{sec:theory} introduces the scale-free information-theoretic model and establishes the geometric design blueprint.
Section~\ref{sec:methodology} diagnoses the failure modes of random and linear initializations, introduces the two-component PoST framework, establishes the minimax optimality of geometric spacing and the uniqueness of position-adaptive scaling, and derives the resulting scale-free impulse response.
Section~\ref{sec:post} gives architecture-specific instantiations for Mamba-2, RWKV-7, Gated DeltaNet, GLA, and RetNet.
Section~\ref{sec:experiments} reports experiments on MQAR, zero-shot language modeling, and Needle-In-A-Haystack retrieval.
Section~\ref{sec:conclusion} concludes.

\section{Preliminaries}
\label{sec:preliminaries}

This section introduces the mathematical framework underlying all subsequent results.


\subsection{Sequence Modeling and Autoregressive Prediction}
\label{subsec:sequence_modeling}

A \emph{sequence model} maps a history of observed tokens $(x_1, x_2, \ldots, x_{t-1})$ to a probability distribution over the next token~$x_t$.  In modern language models, the dominant paradigm is \emph{autoregressive prediction}: at each position~$t$, the model reads a single new token~$x_t$, updates an internal state, and outputs a prediction of $x_{t+1}$.

The fundamental challenge is \emph{memory}: to predict well, the model must retain relevant information from arbitrarily far back in the sequence.  Two broad families address this.  \emph{Transformers}~\citep{vsp+17} attend to all previous tokens via a key--value cache, offering unbounded memory at $O(T^2)$ computational cost for a sequence of length~$T$.  \emph{Linear recurrent models}, including State Space Models (SSMs)~\citep{ggr22, gd23, dg24}, gated linear recurrences~\citep{dmr+24}, and linear-attention variants~\citep{syc+23, yws+24}, compress the entire history into a fixed-size hidden state, yielding linear-time processing at $O(N)$ per step.  The fixed state imposes a finite memory horizon governed by the \emph{decay spectrum}: the collection of per-channel decay rates that control how quickly each ``memory mode'' forgets past inputs.

This paper studies how to design the decay spectrum so that the fixed-size state retains long-range information as effectively as possible, in a manner that applies to \emph{all} linear recurrent models.


\subsection{Diagonal Linear Recurrences}
\label{subsec:linear_recurrence}

We define the general computational primitive shared by all architectures considered in this paper.

\begin{definition}[Diagonal Linear Recurrence]
\label{def:linear_recurrence}
A \emph{diagonal linear recurrence} is a sequence model whose hidden state $S_t \in \R^{N \times d}$ evolves as
\begin{align}
\label{eq:general_recurrence}
    S_t &= \diag(w_t) \cdot S_{t-1} + F(x_t, S_{t-1}), \qquad y_t = G(S_t, x_t),
\end{align}
where $w_t \in (0,1)^N$ is a vector of per-channel \emph{decay gates} (possibly data-dependent), and $F, G$ are architecture-specific input and output maps that do not depend on $w_t$.
\end{definition}

The decay vector $w_t$ controls how quickly each channel forgets past inputs.  Each architecture computes $w_t$ from a distinct combination of learnable base parameters and input-dependent modulations, but they all instantiate the same structural role.

\paragraph{Log-decay parameterization.}

Throughout this paper, we parameterize the decay spectrum via log-decay rates.  For a time-invariant base decay $w_k \in (0,1)$, we define:
\begin{align*}
    p_k := -\log w_k = \log(1/w_k) > 0.
\end{align*}
This maps the unit interval $(0,1)$ to the positive half-line $p_k \in (0, \infty)$ and makes the geometric structure of the spectrum explicit: a geometric progression $w_k = r^{k}$ corresponds to uniform spacing $p_k = -k \log r$ in log-decay space.

\begin{definition}[Timescale]
\label{def:timescale}
The \emph{timescale} of channel~$k$ is $\tau_k := 1/p_k$.  It controls the channel's effective memory horizon: the impulse response $w_k^t$ decays to $1/e$ at lag $\tau_k$.  The collection $\{\tau_1, \ldots, \tau_N\}$, the \emph{decay spectrum}, determines which temporal dependencies the model can represent.
\end{definition}

\paragraph{Spectral coherence.}
We introduce a measure of functional redundancy between memory channels.

\begin{definition}[Spectral Coherence]
\label{def:coherence}
For a diagonal linear recurrence with log-decay parameterization $p_k > 0$, the \emph{spectral coherence} between channels $i$ and $j$ is:
\begin{align*}
    \mu_{ij} := \frac{|\langle h_i, h_j \rangle|}{\|h_i\|_2 \|h_j\|_2} = \sech\left( \frac{|\ln p_i - \ln p_j|}{2} \right),
\end{align*}
where $h_k(s) = e^{-p_k s}$ is the impulse response of channel~$k$, and the inner product is taken in $L^2(\R_{\ge 0})$.  This identity is exact: $\langle h_i, h_j \rangle = \int_0^\infty e^{-(p_i+p_j)s}\d s = (p_i + p_j)^{-1}$ and $\|h_k\|_2^2 = (2p_k)^{-1}$, so
$\mu_{ij} = {2\sqrt{p_i p_j}}/({p_i + p_j}) = \sech\bigl((\ln p_i - \ln p_j)/2\bigr)$,
where the last step follows from $\sech(x) = 2/(e^{x}+e^{-x})$ with $x = \tfrac{1}{2}\ln(p_i/p_j)$.
\end{definition}

When $\mu_{ij} \to 1$, channels $i$ and $j$ become indistinguishable: their impulse responses span nearly the same subspace, wasting one degree of freedom in the state.  Controlling spectral coherence is thus a prerequisite for efficient spectrum design.


\subsection{Architecture Instantiations}
\label{subsec:architectures}

The diagonal linear recurrence~\eqref{eq:general_recurrence} is the common computational primitive underlying a wide range of modern sequence models.  These architectures differ in how they compute the decay gates $w_t$ from learnable parameters, but share the same diagonal decay structure that our theory addresses.

\begin{definition}[PoST-Compatible]
\label{def:post_compatible}
A diagonal linear recurrence is \emph{PoST-compatible} if its decay gates $w_t$ can be decomposed as
\begin{align*}
    \log w_{t,k} = h\bigl(d_{\mathrm{base},k}, x_t\bigr),
\end{align*}
where $d_{\mathrm{base}} \in \R^N$ are learnable base decay parameters and $h$ is an architecture-specific function.  The PoST modification replaces the independent parameterization of $d_{\mathrm{base}}$ with the PoST map (Definition~\ref{def:post_map}) and scales the effective log-decay by a position-adaptive factor.
\end{definition}

This decomposition is satisfied by all major diagonal linear recurrences, including Mamba~\citep{gd23,dg24,llc+26}, RWKV-7~\citep{pzg+25}, RetNet~\citep{syc+23}, GLA~\citep{yws+24}, and Gated DeltaNet~\citep{ykh25}.

\paragraph{Connection to continuous-time memory (HiPPO).}
\label{subsubsec:hippo}

State Space Models (SSMs)~\citep{ggr22, gd23, dg24} arrive at the diagonal linear recurrence~\eqref{eq:general_recurrence} via the discretization of a continuous-time ordinary differential equation (ODE). The theoretical foundation for this approach is the HiPPO framework~\citep{ggr20}.

\begin{definition}[HiPPO Continuous-Time Memory]
\label{def:hippo}
Given a continuous input signal $u(t) \in \R$ and a time-varying measure $\omega^{(t)}$ supported on the past $(-\infty, t]$, the continuous-time memory state $h(t) \in \R^N$ maintains the optimal $L^2$ projection coefficients of the history $u_{\le t}$ onto the basis of orthogonal polynomials associated with $\omega^{(t)}$. The optimal coefficients formally evolve via the linear ODE:
\begin{align}
\label{eq:hippo_ode}
    \dot{h}(t) = A h(t) + B u(t),
\end{align}
where the transition matrix $A \in \R^{N \times N}$ and input matrix $B \in \R^{N \times 1}$ are mathematically determined by the chosen measure $\omega^{(t)}$.
\end{definition}

For the canonical scaled Legendre measure (HiPPO-LegS), $A$ acts as a structured dense operator. Diagonal State Space Models (e.g., S4D~\citep{ggr22b}) systematically simplify this ODE by showing that $A$ can be replaced by a diagonal matrix $\Lambda = \diag(\lambda_1, \ldots, \lambda_N)$ without sacrificing the principled memory compression. For instance, the standard S4D-Real initialization sets:
\begin{align}
    \lambda_k = -(k + 1), \quad k \in \{0, \dots, N-1\}.
\end{align}

Discretizing this diagonal ODE~\eqref{eq:hippo_ode} with a sampling step $\Delta > 0$ transforms the continuous system into the discrete diagonal linear recurrence~\eqref{eq:general_recurrence}, producing analytic decay gates $w_k = e^{\lambda_k \Delta}$. In modern extensions like Mamba~\citep{gd23, dg24}, the sampling step $\Delta$ becomes input-dependent ($\Delta_{k,t} = \softplus(\mathtt{dt\_bias}_k + \mathtt{dt\_proj}(x_t)_k)$), yielding data-dependent decays $w_{k,t} = e^{\lambda_k \cdot \Delta_{k,t}}$. Our PoST framework and memory capacity theorems operate directly on the effective discrete decay spectrum $w_t$, meaning they naturally encompass these continuous-time origins as a special case.


\subsection{Notation}
\label{subsec:notation}
We denote the real numbers by $\R$, the integers by $\Z$, and the integer range $\{1, \ldots, N\}$ by $[N]$.  Vectors are lowercase ($x, h$) and matrices uppercase ($A, B$).  We write $\diag(\cdot)$ for a diagonal matrix formed from a vector, $\norm{\cdot}_2$ for the $\ell_2$ norm, $\abs{\cdot}$ for the absolute value, $\inner{\cdot}{\cdot}$ for the standard inner product, and $\circ$ for the Hadamard (element-wise) product.  $\E[\cdot]$ and $\Pr(\cdot)$ denote expectation and probability.  All logarithms are natural unless otherwise noted.  Model-specific quantities ($p_k$, $\tau_k$, $w_{t,k}$, $\alpha_k$, $\mu_{ij}$) are defined at the point of first use.

\begin{definition}[Softplus]
\label{def:softplus}
The softplus function $\softplus: \R \to \R_{>0}$ is defined as $\softplus(z) = \log(1 + e^z)$.  It provides a smooth, strictly positive approximation to the ReLU.
\end{definition}

\begin{definition}[Hyperbolic Secant]
\label{def:sech}
The hyperbolic secant $\sech: \R \to (0, 1]$ is defined as $\sech(x) = 2/(e^x + e^{-x})$.  It is an even function with $\sech(0) = 1$ and $\sech(x) \to 0$ as $|x| \to \infty$.
\end{definition}
\section{Theoretical Foundations of Scale-Free Memory}
\label{sec:theory}

Before designing a specific neural architecture, we first derive the optimal memory structure from first principles, independent of any model parameterization. What is the ideal decay spectrum for a linear recurrent model? Three modeling conditions on the statistical structure of sequential data uniquely determine both the \emph{shape} (geometric spacing) and the \emph{scale} (position-dependent growth) of the optimal timescale allocation. The resulting theoretical blueprint establishes the mathematical target that the methodology in Section~\ref{sec:methodology} aims to implement.


\subsection{Modeling Conditions}
\label{subsec:conditions}

We model the input as a wide-sense stationary stochastic process $\{x_t\}_{t \ge 1}$ with $\E[x_t] = 0$ and finite variance.  Its autocovariance function is $R: \R_+ \to \R$, $R(s) := \E[x_t  x_{t+s}]$.  We formalize three empirically grounded properties of natural sequential data that together determine the optimal spectral allocation.

\paragraph{Scale invariance of correlations.}
A large body of empirical work establishes that the correlation structure of natural language is approximately scale-free: the power spectral density follows a $1/f^\beta$ law across several decades of frequency~\citep{vc78, ep94, lt17}.  Equivalently, long-range correlations in text decay as a power law in lag, a phenomenon shared with many complex systems and well-described by the renormalization group formalism from statistical physics~\citep{w75, k90}.  We encode this observation as a self-similarity condition on the autocovariance.

\begin{definition}[Block Renormalization Map]
\label{def:block_map}
For an integer $b \ge 2$, the \emph{block renormalization map} $\mathcal{R}_b$ aggregates $b$ consecutive tokens into a single coarse-grained symbol:
$(\mathcal{R}_b x)_n := \phi_b(x_{(n-1)b+1}, \dots, x_{nb})$,
where $\phi_b: \mathcal{X}^b \to \mathcal{Y}_b$ is a measurable aggregation function (e.g., block average).
\end{definition}

\begin{condition}[Hierarchical Stationarity]
\label{cond:hierarchical}
There exists $\beta \in (0, 1)$ such that for every block factor $b \ge 2$, the coarse-grained process $\mathcal{R}_b(\{x_t\})$ is wide-sense stationary with autocovariance satisfying
\begin{align}
\label{eq:hierarchical}
    R_b(s) = b^{-\beta} R(s), \qquad \forall s > 0.
\end{align}
\end{condition}

In words, coarse-graining the sequence does not change the \emph{shape} of the correlation function, only its amplitude.  This is the stochastic analogue of the block-spin renormalization group: the system looks statistically similar at every scale.

\paragraph{Discrete resolution boundary.}
Condition~\ref{cond:hierarchical} characterizes the large-scale structure of the input.  At the small end, natural language is inherently discrete: the smallest meaningful unit is a single token.  Dependencies at sub-token lag carry no additional information, which we formalize as a resolution boundary.

\begin{condition}[Resolution Irreducibility]
\label{cond:resolution}
The minimum resolvable dependency scale is $\sigma_{\min} = 1$ (one token), independent of position.  That is, the single-token lag is the finest temporal granularity that carries predictive information; no sub-token resolution is available.
\end{condition}

This is an information-theoretic Nyquist condition: it anchors the bottom of the timescale range at one token.

\paragraph{Uniform information density across scales.}
Together, Conditions~\ref{cond:hierarchical} and~\ref{cond:resolution} define the dependency range $[1, t]$ at position~$t$.  It remains to specify how predictive information is distributed across this range.  Empirically, language model perplexity decreases approximately logarithmically with context window size~\citep{kmc+20}, suggesting that each multiplicative extension of context contributes a roughly equal amount of new information.  We formalize this as an equipartition property.

\begin{definition}[Octave-Band Predictive Information]
\label{def:octave_info}
For an octave band $[\sigma, 2\sigma]$ with $\sigma \ge 1$, the \emph{octave-band predictive information} is
\begin{align*}
    J(\sigma) := I\bigl(x_t; x_{t-2\sigma:t}\bigr) - I\bigl(x_t; x_{t-\sigma:t}\bigr),
\end{align*}
the incremental mutual information gained by extending the dependency range from $[t-\sigma, t]$ to $[t-2\sigma, t]$, i.e., $J(\sigma) = I(x_t; x_{t-2\sigma:t-\sigma} \mid x_{t-\sigma:t})$.
\end{definition}

\begin{condition}[Logarithmic Information Equipartition]
\label{cond:equipartition}
There exist constants $J_0 > 0$ and $\epsilon \in [0, 1)$ such that the octave-band predictive information satisfies
\begin{align*}
    J_0(1 - \epsilon) \le J(\sigma) \le J_0(1 + \epsilon), \qquad \forall \sigma \ge 1.
\end{align*}
The parameter $\epsilon$ is the \emph{equipartition slack}: $\epsilon = 0$ is exact equipartition; $\epsilon > 0$ allows moderate variation across octaves.
\end{condition}

When $\epsilon = 0$, every doubling of the dependency range contributes the same amount of predictive information; no timescale is privileged.  Empirically, $\epsilon > 0$ is the realistic regime: syntactic structure enriches short-range octaves, while long-range coherence varies with genre~\citep{ep94, lt17}.  The generalized formulation allows the theory to accommodate these deviations with explicit error control (Corollary~\ref{cor:approx_equipartition}).


\subsection{Fundamental Consequences}
\label{subsec:consequences}

\begin{lemma}[Power-Law Autocovariance]
\label{lem:power_law}
Under Condition~\ref{cond:hierarchical}, if $R$ is measurable, then $R(s) = C \cdot s^{-\beta}$ for some constant $C > 0$ and all $s > 0$.
\end{lemma}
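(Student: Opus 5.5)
The plan is to reduce Condition~\ref{cond:hierarchical} to a multiplicative functional equation for $R$ and then solve it with a measurability (Steinhaus-type) argument. The first step makes the renormalization concrete. Lag $s$ in the coarse-grained process $\mathcal{R}_b x$ corresponds to lag $bs$ in the original process. For the block-average aggregation $\phi_b$, the covariance of two blocks separated by $s$ block-units is the average of $R$ over lags in $[b(s-1)+1,\, b(s+1)-1]$. I would take the intended (continuum, large-lag) reading of \eqref{eq:hierarchical} to be
\begin{align*}
R(bs) = b^{-\beta} R(s), \qquad \forall s>0,\ \forall b\in\{2,3,\dots\},
\end{align*}
i.e.\ I identify $R_b(s)$ with $R$ evaluated at the rescaled lag. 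I would state this identification explicitly as the meaning of ``the coarse-grained process has the same correlation shape''. Without it, \eqref{eq:hierarchical} compares $R_b$ and $R$ at the same lag and says nothing about the functional form of $R$.

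The second step is to pass to logarithmic coordinates. Define $g(u) := e^{\beta u} R(e^u)$ for $u\in\R$. The functional equation then becomes $g(u+\log b) = g(u)$ for every integer $b\ge 2$, so $g$ is periodic with every period in the additive group $G$ generated by $\{\log b : b \ge 2\}$. This group contains $\log 2$ and $\log 3$, whose ratio is irrational (since $2^m \neq 3^n$), so $G$ is dense in $\R$.

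The third step, which I expect to be the main obstacle, is to show that a measurable function with a dense group of periods is constant. I would argue as follows.
\begin{itemize}
\item For any rational $q$, the set $\{u : g(u) > q\}$ is measurable and $G$-invariant.
\item A measurable set invariant under a dense subgroup of translations has either null or conull intersection with every interval. One proof uses Lebesgue density points: a density point, translated by elements of $G$, forces density close to $1$ near every point. Alternatively, use ergodicity of the irrational rotation by $\log 3/\log 2$ on $\R/(\log 2)\Z$.
\item Hence $g$ equals a constant $C$ almost everywhere, so $R(s) = C s^{-\beta}$ for almost every $s$.
\end{itemize}
To upgrade to all $s>0$, I would use that an autocovariance of a WSS process (viewed as a function on $\R_+$ in this continuum idealization) is either continuous or may be taken as the measurable representative. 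I would record this mild regularity explicitly, since pure measurability only yields equality almost everywhere.

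Finally, for $C>0$, I would note that $C = R(1)$ by the functional equation, and that $C \ne 0$ because otherwise $R \equiv 0$ and the process has no correlations, which is excluded by the modeling premise. Positivity follows because the scaling relation preserves sign across all lags. A persistent-sign covariance with power-law decay and $\beta\in(0,1)$ must be positive: a non-summable negative covariance would make the variance of block sums $\sum_{|s|<n}(n-|s|)R(s)$ eventually negative, contradicting positive semidefiniteness. This gives $R(s) = C s^{-\beta}$ with $C>0$.
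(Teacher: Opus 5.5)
Your proposal is correct and follows the same route as the paper. Both read $R_b(s)$ as $R(bs)$ to obtain the multiplicative Cauchy equation $R(bs)=b^{-\beta}R(s)$ for all integers $b\ge 2$, then combine density of the group generated by $\log 2$ and $\log 3$ with measurability; the paper simply cites Acz\'el for this last step, which your log-coordinate and density-point argument actually proves.

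Where you go beyond the paper, you are more careful than it is. The paper claims equality for \emph{every} $s>0$ from measurability alone, which is false: $s^{-\beta}\bigl(1+\mathbf{1}_{\mathbb{Q}}(s)\bigr)$ is measurable and satisfies the same equation. It also appeals to the non-dense semigroup $\{b_1^{n_1}b_2^{n_2} : n_i\ge 0\}$ where a group is needed, and it just posits $C=R(1)>0$ instead of deriving the sign from positive semidefiniteness as you do.
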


\begin{proof}
The block renormalization with factor $b$ maps lag-$1$ of the coarse-grained process to lag~$b$ of the original: $R_b(1) = R(b)$.  By~\eqref{eq:hierarchical} with $s = 1$: $R(b) = b^{-\beta} R(1)$ for all $b \ge 2$.  Setting $C := R(1) > 0$ and extending to $s > 0$ via $R_b(s) = R(bs) = b^{-\beta} R(s)$, the function $R$ satisfies the multiplicative Cauchy equation $R(bs) = b^{-\beta} R(s)$ for every integer $b \ge 2$ and all $s > 0$.  The equation holds for every integer $b \ge 2$; since the set $\{b_1^{n_1} b_2^{n_2} : n_i \ge 0\}$ for coprime $b_1, b_2$ is dense in $\R_{>0}$, it extends to all positive reals via the measurability of~$R$. The unique measurable solution of this multiplicative Cauchy equation is $R(s) = Cs^{-\beta}$~\citep{a66}.
\end{proof}

\begin{corollary}[Spectral Density]
\label{cor:spectral}
Under Condition~\ref{cond:hierarchical}, the power spectral density $S(\omega)$, defined as the distributional Fourier transform of~$R$ (since $s^{-\beta} \notin L^1$ for $\beta \in (0,1)$), satisfies $S(\omega) \propto |\omega|^{\beta-1}$ for $\omega > 0$.
\end{corollary}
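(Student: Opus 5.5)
By Lemma~\ref{lem:power_law}, Condition~\ref{cond:hierarchical} forces $R(s) = C s^{-\beta}$ for $s>0$. Extending evenly, the autocovariance is $R(s) = C|s|^{-\beta}$ on $\R$. It therefore suffices to compute the distributional Fourier transform of the homogeneous function $|s|^{-\beta}$.

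The first step is to check that this object is a well-defined tempered distribution. Since $\beta\in(0,1)$, the function $|s|^{-\beta}$ is locally integrable near the origin and has polynomial growth at infinity. Hence $\langle R,\varphi\rangle = \int R\varphi$ defines an element of $\mathcal S'(\R)$, and $S = \widehat R$ makes sense even though $R\notin L^1$.

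The second step is a scaling argument. For $\lambda>0$ the dilation satisfies $R(\lambda s) = \lambda^{-\beta}R(s)$, so $R$ is homogeneous of degree $-\beta$. The Fourier transform intertwines dilation by $\lambda$ with dilation by $1/\lambda$ and a factor $\lambda^{-1}$. Consequently $\widehat R$ is homogeneous of degree $\beta-1$, and it is even because $R$ is even.

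The third step identifies $\widehat R$ away from the origin. On $\R\setminus\{0\}$, an even distribution homogeneous of degree $\beta-1$ must equal $c|\omega|^{\beta-1}$. To see this, restrict to $\omega>0$ and apply the Euler relation $\omega\,\partial_\omega \widehat R = (\beta-1)\widehat R$. Any distributional solution of this first-order ODE on $(0,\infty)$ is a constant multiple of $\omega^{\beta-1}$. Finally, $\beta-1\in(-1,0)$ is not of the form $-1-k$ with $k\ge 0$, so no distribution supported at the origin can be homogeneous of degree $\beta-1$. The identity therefore holds globally: $S(\omega) = c|\omega|^{\beta-1}$.

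The main obstacle is showing $c\neq 0$, which the proportionality claim needs; homogeneity alone gives only the form. I would settle this by an explicit computation. Test against the Gaussian $e^{-\pi s^2}$, or use the Gamma-function identity
\[
\int_0^\infty s^{-\beta}\cos(\omega s)\,\d s = \Gamma(1-\beta)\sin(\pi\beta/2)\,\omega^{\beta-1},
\]
which converges conditionally for $\beta\in(0,1)$. This gives $c = 2C\,\Gamma(1-\beta)\sin(\pi\beta/2) > 0$. The value is consistent with $S\ge 0$, as Bochner's theorem requires of a positive-definite covariance.
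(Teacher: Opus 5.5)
Your proposal is correct and follows the same route as the paper: you invoke Lemma~\ref{lem:power_law} to get $R(s)=Cs^{-\beta}$, then use the fact that the Fourier transform of a degree-$(-\beta)$ homogeneous distribution is homogeneous of degree $\beta-1$. The paper simply cites Gel'fand--Shilov for that step, whereas you supply the details it leaves implicit: temperedness, the Euler-equation classification on $(0,\infty)$, ruling out $\delta$-type terms because $\beta-1\notin\{-1,-2,\dots\}$, and the explicit constant $2C\,\Gamma(1-\beta)\sin(\pi\beta/2)>0$, which the paper never checks and which shows the proportionality is nontrivial.
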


\begin{proof}
Lemma~\ref{lem:power_law} gives $R(s) = Cs^{-\beta}$.  The Fourier transform of a homogeneous function of degree $-\beta$ with $\beta \in (0,1)$ is homogeneous of degree $\beta - 1$~\citep{gs64}, yielding $S(\omega) \propto |\omega|^{\beta-1}$.
\end{proof}

Since $\beta \in (0,1)$, the exponent $\beta - 1$ lies in $(-1, 0)$, strictly between the pink-noise limit $S(\omega) \propto \omega^{-1}$ ($\beta \to 0$) and the white-noise limit $S(\omega) \propto \omega^{0}$ ($\beta \to 1$); natural language lies in this intermediate regime~\citep{ep94, lt17}.  Note that $\beta$ here denotes the autocovariance decay exponent ($R(s) \sim s^{-\beta}$); the conventional $1/f$ noise literature writes $S(\omega) \propto \omega^{-\gamma}$ with spectral exponent $\gamma = 1 - \beta$.

\begin{remark}[Information Budget at Position~$t$]
\label{rem:info_budget}
At position $t$, the model has observed tokens $x_1, \ldots, x_{t-1}$.  By Condition~\ref{cond:equipartition}, each of the $\lfloor \log_2 t \rfloor$ octaves in $[1, t]$ carries between $J_0(1-\epsilon)$ and $J_0(1+\epsilon)$ bits, so the total predictive information accessible at position $t$ lies in $[J_0(1-\epsilon) \log_2 t,\; J_0(1+\epsilon) \log_2 t]$.  This logarithmic growth aligns with the empirically observed log-law improvement of perplexity with context window~\citep{kmc+20}.
\end{remark}

\section{Position-Adaptive Spectral Tapering}
\label{sec:methodology}

Section~\ref{sec:theory} formalized the sequence memory task as a continuous approximation problem under specific scale-free conditions. We now build the \textbf{Position-Adaptive Spectral Tapering (PoST)} framework that realizes the optimal timescale allocation.  The development is constructive: we first examine why standard initialization strategies fail (Section~\ref{subsec:collapse}), then introduce the two synergistic components of our framework: \textbf{Spectral Reparameterization} (Section~\ref{subsec:post}), a purely spatial parameterization that enforces the static geometric structure, and \textbf{Position-Adaptive Scaling} (Section~\ref{subsec:position_adaptive}), a temporal mechanism that dynamically stretches the spectral blueprint to match the expanding context at every position. Finally, we establish that this combined framework preserves computational and representational invariants (Section~\ref{subsec:invariance}).

\subsection{Motivation: The Failure of Unstructured Initialization}
\label{subsec:collapse}

\subsubsection{Minimum Gap Collapse under Random Initialization}

Prior diagonal SSMs such as S5~\citep{sgb+23} and DSS~\citep{ggb22} initialized the log-decay parameters $p_k$ as independent random variables.  We show that this independence causes the minimum spectral gap to collapse to $O(N^{-2})$, causing effective memory capacity to degenerate.

\begin{lemma}[Minimum Gap Collapse]
\label{thm:collapse}
Let $\{p_k\}_{k=1}^N$ be i.i.d.\ random variables with probability density $f_P$ supported on a bounded interval $[a, b]$. Assume $f_P$ is bounded away from zero and infinity: there exist constants $0 < m \le M < \infty$ such that $m \le f_P(t) \le M$ for all $t \in [a, b]$.
Let $p_{(1)} < p_{(2)} < \dots < p_{(N)}$ denote the order statistics. Define the minimum spectral gap $\Delta_{\min}^{(N)} := \min_{1 \le k < N} (p_{(k+1)} - p_{(k)})$.
Then:
\begin{itemize}
    \item {\bf Part 1.} The expected minimum gap satisfies
    \begin{align*}
        \E[\Delta_{\min}^{(N)}] \le \frac{1}{m (N-1)(N+1)}.
    \end{align*}
    \item {\bf Part 2.} The maximum spectral coherence converges to $1$ almost surely:
    \begin{align*}
        \Pr\left( \lim_{N \to \infty} \max_{i \neq j} \mu_{ij} = 1 \right) = 1.
    \end{align*}
\end{itemize}
\end{lemma}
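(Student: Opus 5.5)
The plan is to reduce Part~1 to the uniform case through the probability integral transform, and then to derive Part~2 from Part~1 via a Borel--Cantelli argument.

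\textbf{Part 1.} Let $F_P$ be the CDF of $f_P$ and set $U_k := F_P(p_k)$. The $U_k$ are i.i.d.\ uniform on $[0,1]$, and since $F_P$ is increasing they preserve the order statistics. Because $F_P' = f_P \ge m$ on $[a,b]$, the mean value theorem gives
\begin{align*}
U_{(k+1)} - U_{(k)} \ge m\,(p_{(k+1)} - p_{(k)}),
\end{align*}
hence $\Delta_{\min}^{(N)} \le \frac{1}{m}\min_k (U_{(k+1)} - U_{(k)})$. It therefore suffices to compute $\E$ of the minimum interior uniform spacing.

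For this I use the classical representation of uniform spacings. The $N+1$ spacings $(D_0,\dots,D_N)$, including the two boundary ones, are exchangeable and distributed as $(E_0,\dots,E_N)/\sum_j E_j$ with $E_j$ i.i.d.\ exponential. Alternatively, one can compute directly that $\Pr(\text{all } N-1 \text{ interior spacings} > x) = (1-(N-1)x)_+^N$; this holds because the event is a simplex-shrinking event with volume factor $(1-(N-1)x)^N$. Integrating,
\begin{align*}
\E[\min] = \int_0^{1/(N-1)} (1-(N-1)x)^N\,\d x = \frac{1}{(N-1)(N+1)},
\end{align*}
which gives exactly the stated bound. The main care point in the proof is the volume computation: I would verify it via the Dirichlet representation, in which subtracting $x$ from each of the $N-1$ interior coordinates maps the constrained region onto the simplex scaled by $1-(N-1)x$ in $N$ dimensions.

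\textbf{Part 2.} The coherence satisfies $\mu_{ij} = \sech(|\ln p_i - \ln p_j|/2)$, and $\sech$ is continuous with $\sech(0)=1$. Moreover, $p$ lives in $[a,b]$ with $a>0$ (this positivity is implicit, since $p_k>0$; I would state it as the assumption $a > 0$, or else restrict to $[a,b]\cap(0,\infty)$). On that interval $|\ln p_i - \ln p_j| \le |p_i-p_j|/a$, so it suffices to show $\Delta_{\min}^{(N)} \to 0$ almost surely.

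Markov's inequality on Part~1 alone gives only summability $\sum_N N^{-2}/\varepsilon < \infty$, which is already enough: by Borel--Cantelli, $\Delta_{\min}^{(N)} > \varepsilon$ happens only finitely often for each rational $\varepsilon$, so $\Delta_{\min}^{(N)} \to 0$ almost surely. A simpler alternative is that $\Delta_{\min}^{(N)}$ is nonincreasing in $N$, because adding points only refines the partition. Then convergence in probability, which follows from Markov, upgrades to almost sure convergence by monotonicity.

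Consequently $\max_{i\ne j}\mu_{ij} \ge \sech\bigl(\Delta_{\min}^{(N)}/(2a)\bigr) \to 1$, and since $\mu_{ij} \le 1$ always, the limit equals $1$ almost surely.
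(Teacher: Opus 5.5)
Your proposal is correct and follows the paper's proof essentially step for step: the probability-integral transform with $f_P \ge m$ reduces the problem to uniform spacings, the simplex survival function $(1-(N-1)x)_+^N$ integrates to $1/((N-1)(N+1))$, and Borel--Cantelli gives the almost-sure statement. The one difference there is the source of summability: you apply Markov's inequality to Part~1, while the paper uses that $\Pr(\Delta_{\min}^{(N)} > \epsilon)$ vanishes once $(N-1)m\epsilon \ge 1$, which is really the deterministic pigeonhole bound $\Delta_{\min}^{(N)} \le (b-a)/(N-1)$. Your explicit assumption $a > 0$, together with the bound $\lvert \ln p_i - \ln p_j \rvert \le \lvert p_i - p_j \rvert / a$, fills in a step the paper glosses over when it passes from $\Delta_{\min}^{(N)} \to 0$ to $\max_{i \neq j} \mu_{ij} \to 1$.
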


\begin{proof}
\emph{Step 1 (Reduction to uniform spacings).}
Let $F_P$ denote the CDF of $p_k$. The random variables $U_k := F_P(p_k)$ are i.i.d.\ $\mathrm{Uniform}(0, 1)$, and since $F_P$ is strictly increasing on $[a,b]$, the order statistics satisfy $U_{(k)} = F_P(p_{(k)})$.
By the mean value theorem, for each $k \in [N-1]$ there exists $\xi_k \in (p_{(k)}, p_{(k+1)})$ such that
\begin{align*}
    U_{(k+1)} - U_{(k)} = f_P(\xi_k) \cdot (p_{(k+1)} - p_{(k)}).
\end{align*}
Since $f_P(\xi_k) \ge m$, we obtain
\begin{align}
\label{eq:gap_to_spacing}
    p_{(k+1)} - p_{(k)} \le \frac{1}{m} \bigl(U_{(k+1)} - U_{(k)}\bigr).
\end{align}
Consequently, $\Delta_{\min}^{(N)} \le \frac{1}{m} S_{\min}^{(N)}$, where $S_{\min}^{(N)} = \min_{k \in [N-1]} (U_{(k+1)} - U_{(k)})$ is the minimum spacing of $N$ i.i.d.\ uniform random variables on $[0, 1]$.

\emph{Step 2 (Minimum uniform spacing).}
By the classical theory of order statistics \citep{p65}, the $N+1$ spacings of $N$ uniform points on $[0,1]$ are uniformly distributed on the $N$-simplex.  The minimum of the $N-1$ internal spacings therefore has survival function
\begin{align}
\label{eq:spacing_cdf}
    \Pr(S_{\min}^{(N)} > x) = \bigl(1 - (N-1) x\bigr)_+^{N},
\end{align}
where $(y)_+ = \max(0, y)$. Its expectation is
\begin{align*}
    \E[S_{\min}^{(N)}] = \int_0^{1/(N-1)} \bigl(1 - (N-1)x\bigr)^{N} \d x = \frac{1}{(N-1)(N+1)}.
\end{align*}
Combining with Step~1 yields $\E[\Delta_{\min}^{(N)}] \le \frac{1}{m(N-1)(N+1)}$, proving Part~1.

\emph{Step 3 (Almost sure convergence via Borel--Cantelli).}
Consider the canonical coupling: let $(p_1, p_2, \ldots)$ be an infinite i.i.d.\ sequence drawn from $f_P$ on a single probability space, and for each $N \ge 2$ define $\Delta_{\min}^{(N)}$ as the minimum gap among the order statistics of $(p_1, \ldots, p_N)$.
Fix $\epsilon > 0$. Define $N_0(\epsilon) := \lceil 1/(m\epsilon) + 1 \rceil$. For all $N \ge N_0(\epsilon)$, we have $(N-1)m\epsilon \ge 1$, so $\Pr(\Delta_{\min}^{(N)} > \epsilon) = 0$ by \eqref{eq:gap_to_spacing} and \eqref{eq:spacing_cdf}. For $N < N_0(\epsilon)$:
\begin{align*}
    \Pr(\Delta_{\min}^{(N)} > \epsilon) \le \Pr\bigl(S_{\min}^{(N)} > m\epsilon\bigr) = \bigl(1 - (N-1)m\epsilon\bigr)_+^{N} \le 1.
\end{align*}
Thus the sum $\sum_{N=1}^{\infty} \Pr(\Delta_{\min}^{(N)} > \epsilon) \le N_0(\epsilon) < \infty$.
By the first Borel--Cantelli lemma, only finitely many events $\{\Delta_{\min}^{(N)} > \epsilon\}$ occur almost surely.
Since $\epsilon > 0$ was arbitrary, $\Delta_{\min}^{(N)} \to 0$ a.s.
By definition of spectral coherence (Definition~\ref{def:coherence}), $\mu_{ij} = \sech\bigl(\lvert \ln p_i - \ln p_j\rvert/2\bigr)$.  As $\Delta_{\min}^{(N)} \to 0$, adjacent $p_{(k)}$ converge, so $\ln(p_{(k+1)}/p_{(k)}) \to 0$, and since $\sech(0) = 1$ and $\sech$ is continuous, $\max_{i \neq j} \mu_{ij} \to 1$ a.s., proving Part~2.
\end{proof}

\paragraph{Implication.}
While the minimum gap collapsing to $O(N^{-2})$ creates severe spectral redundancy, the maximum gap expands simultaneously. This forces the approximation error to fall far short of the theoretical limit.

\begin{lemma}[Approximation Penalty of Random Spacing]
\label{thm:random_approx_limit}
Under the conditions of Lemma~\ref{thm:collapse}, the maximum spectral gap $\Delta_{\max}^{(N)} := \max_{1 \le k < N} (p_{(k+1)} - p_{(k)})$ expands asymptotically as $\Omega(\frac{\log N}{N})$. Following Newman's bounds on rational approximation, the minimax error $E_N^{\mathrm{rand}}$ over $[1, T]$ for $K(s) = s^{-\beta}$ is structurally bottlenecked by this maximal spectral gap:
\begin{align*}
    E_N^{\mathrm{rand}} \ge C_1 \exp\left( - C_2 \frac{N}{\log N} \right),
\end{align*}
yielding a sub-exponential convergence rate that is strictly inferior to the optimal geometric rate $O(\exp(-c N/\log T))$. 
\end{lemma}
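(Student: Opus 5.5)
We split the argument into two parts: a probabilistic statement about the maximum gap, and a deterministic approximation-theoretic lower bound driven by that gap.

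\emph{Step 1 (maximum uniform spacing).} We reuse the reduction of Lemma~\ref{thm:collapse}, Step~1, with the reverse inequality. Since $f_P \le M$, the mean value theorem gives
\begin{align*}
p_{(k+1)} - p_{(k)} \ge \tfrac{1}{M}\bigl(U_{(k+1)}-U_{(k)}\bigr),
\end{align*}
so $\Delta_{\max}^{(N)} \ge S_{\max}^{(N)}/M$. By the same simplex representation~\citep{p65}, the spacings can be written as normalized exponentials $E_i/\sum_j E_j$. The maximum of $N+1$ i.i.d.\ $\mathrm{Exp}(1)$ variables is $\log N + O_P(1)$, and $\sum_j E_j = N(1+o(1))$ almost surely. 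Hence $S_{\max}^{(N)} = (\log N)(1+o(1))/N$ in probability and almost surely; this is the classical L\'evy result. The two boundary spacings differ from the internal ones only by a single term, which is negligible. This yields $\Delta_{\max}^{(N)} = \Omega(\log N/N)$ with probability tending to one.

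\emph{Step 2 (a gap forces error).} We approximate $K(s)=s^{-\beta}$ on $[1,T]$ by exponential sums $\sum_k c_k e^{-p_k s}$ with the $p_k$ fixed at the random nodes. The key is the Laplace representation
\begin{align*}
K(s)=\Gamma(\beta)^{-1}\int_0^\infty p^{\beta-1}e^{-ps}\,\d p.
\end{align*}
Consider a largest gap $(p_{(k)}, p_{(k+1)})$ of width $\Delta_{\max}$ and midpoint $p^\ast$. We plan a duality argument. We construct a signed test measure $\nu$ on $[1,T]$ that annihilates every available exponential $e^{-p_j s}$ but has non-negligible pairing with $K$. Then
\begin{align*}
E_N^{\mathrm{rand}} \ge \frac{|\langle K,\nu\rangle|}{\|\nu\|_{TV}}.
\end{align*}
Equivalently, we write $\nu$'s Laplace transform $\hat\nu(p)=\int e^{-ps}\d\nu(s)$. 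This transform must vanish at every $p_j$, and $\int p^{\beta-1}\hat\nu(p)\,\d p$ must stay bounded below. The natural choice is a Blaschke/Newman-type product
\begin{align*}
\hat\nu(p)\approx\prod_j \frac{p-p_j}{p+p_j}.
\end{align*}
Its modulus inside a gap of width $\Delta$ is controlled by how many nodes lie near $p^\ast$ at log-scale. The gap itself contributes an unavoidable factor, roughly $\exp(-\pi^2/\log(\cdot))$ as in Newman's theorem. Heuristically, the Newman/Stahl rate for $N$ free nodes is $\exp(-c\sqrt N)$ without localization, or $\exp(-cN/\log T)$ when the nodes are well distributed on $[1/T,1]$. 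With nodes forced to be near-uniform on $[a,b]$ but having a hole of width $\log N/N$, the effective number of useful nodes is reduced by the factor $\log N$ relative to the gap scale. That reduction produces the exponent $-C_2 N/\log N$.

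\emph{Step 3 (combine).} We substitute $\Delta_{\max}\asymp \log N/N$ into the Step~2 bound, which converts the estimate into $\ge C_1\exp(-C_2N/\log N)$. This holds with probability $1-o(1)$, and hence also in expectation for the stated asymptotic rate. Comparing with the geometric rate $\exp(-cN/\log T)$ completes the statement for $N$ large relative to $T$.

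The main obstacle is Step~2: making rigorous that a single gap of width $\log N/N$ in $p$-space quantitatively limits uniform approximation of $s^{-\beta}$. We would try first a Chebyshev--M\"untz type argument, using M\"untz-system dual functionals with explicit Blaschke-product bounds. If that fails, we would fall back to stating the dependence on $\Delta_{\max}$ as the cited Newman-type bound and proving only Step~1 in full.
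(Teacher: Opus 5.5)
Your Step~1 is sound and matches the paper's first step: both take the classical $\Omega(\log N/N)$ maximum spacing and transfer it through the density bound, and you correctly use $f_P\le M$ for that direction. The gap is Step~2, and it cannot be closed. The inequality you want to certify is false for fixed $T$ as $N\to\infty$, so no test measure $\nu$ exists.

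To see this, take any signed $\nu$ on $[1,T]$ with $\|\nu\|_{TV}\le 1$ annihilating $e^{-p_1s},\dots,e^{-p_Ns}$. The rates are distinct and lie in a disc of center $c$ and radius $r$, since the hypotheses of Lemma~\ref{thm:collapse} confine them to a bounded interval. Then $\widehat{\nu}(z)/\prod_k(z-p_k)$ is entire, and $|\widehat{\nu}(z)|\le e^{RT}$ on $|z-c|=R$. The maximum principle gives
\[
|\widehat{\nu}(p)|\le e^{RT}\Bigl(\frac{|p-c|+r}{R-r}\Bigr)^{N},\qquad |p-c|<R .
\]
Choose $R=N/T$. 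Then $|\widehat{\nu}|\le e^{-N(1-o(1))}$ on $[0,N/(e^2T)]$, while $|\widehat{\nu}(p)|\le e^{-p}$ beyond that. Hence $|\langle K,\nu\rangle|=\Gamma(\beta)^{-1}\bigl|\int_0^\infty p^{\beta-1}\widehat{\nu}(p)\,\d p\bigr|\lesssim e^{-N/(e^2T)}$, so $E_N^{\mathrm{rand}}\le Ce^{-cN/T}$ for \emph{every} configuration of $N$ distinct rates in a bounded interval, random or not.

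For fixed $T$, this falls below $C_1\exp(-C_2N/\log N)$ once $\log N\gtrsim C_2T$, whatever the constants. The convergence is geometric, not sub-exponential, and this is exactly the regime ($N$ large relative to $T$) that your closing remark invokes. Your proposed construction also fails on its own terms. A Blaschke product has poles at $-p_j$, so it is not the Laplace transform of any measure on $[1,T]$. Admissible $\widehat{\nu}$ are entire of exponential type $T$, and that constraint is precisely what forces them to be small.

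The heuristic behind Step~2 is what breaks. A hole of width $\log N/N$ removes only $O(\log N)$ nodes' worth of mass and leaves the limiting node distribution unchanged. That limiting distribution is what governs the $N$-th root behaviour of fixed-node approximation. Newman's theorem concerns free-pole rational approximation of $|x|$ (rate $e^{-c\sqrt n}$), so it gives no gap-dependent lower bound for fixed nodes.

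Your fallback is to quote a bound $E_N^{\mathrm{rand}}\ge C_1\exp(-C_2/\Delta_{\max}^{(N)})$ and substitute the max-gap estimate. That is exactly what the paper's proof does: it asserts this bound, attributed to Newman and a capacity heuristic, without derivation, then plugs in $\Delta_{\max}^{(N)}=\Omega(\log N/N)$. So your outline coincides with the paper's, but both rest on the same unsupported step, and the computation above shows it cannot be repaired. Only the statement about $\Delta_{\max}^{(N)}$ is actually provable.
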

A formal justification is provided in Appendix~\ref{app:proof_random_approx}.


\subsubsection{The Approximation Bottleneck of Linear Spacing}

The HiPPO framework~\citep{ggr20} formulates sequential memory as an online $L^2$ projection of the input history onto a polynomial basis under a time-varying measure (Definition~\ref{def:hippo}).  The diagonal simplification S4D-Real~\citep{ggr22b} distilled this into $\lambda_n = -(n+1)$, placing decay rates on a linear grid; Mamba-2~\citep{dg24} and RWKV-7~\citep{pzg+25} adopted similar schemes.  Linear spacing avoids this minimum gap collapse (the minimum gap is $\Theta(1)$ regardless of~$N$) and was a significant advance over random initialization.

However, HiPPO's objective (input reconstruction) differs from the kernel approximation objective relevant to diagonal recurrences.

\begin{lemma}[Linear Spacing Approximation Limit]
\label{thm:linear_limit}
Consider approximating the power-law kernel $K(s) = s^{-\beta}$, $\beta \in (0, 1)$, on $[1, T]$ using an $N$-term exponential sum. If the decay rates are constrained to a linear grid $p_k = c \cdot k$, the minimax approximation error $E_N^{\mathrm{lin}}$ satisfies:
\begin{align*}
    E_N^{\mathrm{lin}} \ge C_3 \exp\left( - \frac{C_4 N}{\sqrt{T}} \right),
\end{align*}
where $C_3, C_4 > 0$ depend on $\beta$. For modeling regimes where $N \ll \sqrt{T}$, this exponential factor is neutralized, degrading to a practically algebraic convergence rate $\Omega(N^{-\beta})$.
\end{lemma}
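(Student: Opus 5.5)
Because the grid $p_k = ck$ forces every term $e^{-p_k s} = q^{ks}$ with $q := e^{-c}\in(0,1)$, the plan is to turn the problem into polynomial approximation. Substituting $z = q^{s}$, any exponential sum $\sum_{k=1}^N a_k e^{-cks}$ becomes $P(z) = \sum_{k=1}^N a_k z^k$, a polynomial of degree $N$ with $P(0)=0$. The interval $s\in[1,T]$ maps to $z\in[q^T, q]$. The target becomes $g(z) := K(s) = \bigl(\log(1/z)/c\bigr)^{-\beta}$. Hence
\begin{align*}
E_N^{\mathrm{lin}} = \inf_{\deg P\le N,\,P(0)=0}\ \sup_{z\in[q^T,q]} \abs{g(z)-P(z)} \;\ge\; \inf_{\deg P\le N}\ \sup_{z\in[q^T,q]} \abs{g(z)-P(z)},
\end{align*}
which is a classical Chebyshev best-approximation problem on an interval.

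The key step is a lower bound via Bernstein's converse theorem. I first map $[q^T,q]$ affinely onto $[-1,1]$. The function $g$ has a branch-point singularity at $z=1$ (where $\log(1/z)=0$) and an essential singularity at $z=0$. The point $z=0$ sits at distance $q^T$ from the left endpoint, on an interval of length $\approx q$. In normalized coordinates, the singularity at $0$ therefore lies at distance $\delta \asymp q^{T-1}$ outside $[-1,1]$.

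The best polynomial approximation error of a function analytic in the Bernstein ellipse $E_\rho$, but not in any larger one, decays like $\rho^{-N}$. Its $\limsup$ rate is exactly $\rho^{-1}$, with $\rho \approx 1+\sqrt{2\delta}$. This yields
\begin{align*}
E_N \gtrsim \exp\bigl(-N\log\rho\bigr) \approx \exp\bigl(-C N \sqrt{\delta}\bigr).
\end{align*}
I would calibrate the scale so that the relevant ellipse parameter behaves like $1 + C/\sqrt{T}$, giving $E_N^{\mathrm{lin}}\ge C_3\exp(-C_4N/\sqrt T)$. The most natural route to this calibration is to track the continuous-limit (small-$c$) scaling, where the $\sqrt{\cdot}$ of a distance-to-endpoint converts a gap of order $1/T$ into $1/\sqrt T$.

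The main obstacle is making this non-asymptotic and uniform in $N$. Bernstein's converse only gives a $\limsup$ statement, not a bound valid for every $N$. To handle this, I would instead use a direct de la Vallée Poussin / Chebyshev alternation argument with an explicit extremal comparison. Concretely, I would lower bound $\sup|g-P|$ by evaluating the jump in high-order derivatives of $g$ near the left endpoint. Markov–Bernstein inequalities then bound $|P^{(j)}|$ on the interval in terms of $\sup|P|$, $N^{2j}$, and the interval length. Balancing $j$ against $N^2/T$ produces the $\sqrt T$ in the exponent.

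For the final clause, in the regime $N\ll\sqrt T$ the exponential factor is $\Theta(1)$, so the bound does not decay. Separately, I would combine it with the elementary tail argument. All rates $p_k\ge c$ are bounded below, so every exponential term is at most $e^{-cs}$, while $K(s)=s^{-\beta}$ decays algebraically. On $[1,T]$ the sum therefore cannot follow the slow tail beyond lags of order $N/c$. Evaluating at $s\asymp N$ then gives an error of order $N^{-\beta}$, i.e.\ $\Omega(N^{-\beta})$.
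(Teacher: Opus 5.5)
Your reduction to polynomial approximation in $z=q^{s}$ matches the paper's Step~1, and reading the rate off a Bernstein ellipse is also the paper's plan. The problem is that you attach the ellipse to the wrong singularity, and the ``calibration'' to $1+C/\sqrt T$ is exactly where the argument breaks.

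On $[q^T,q]$ the singular points $z=0$ and $z=1$ lie at normalized distances $\epsilon_0=2q^T/(q-q^T)$ and $\epsilon_1=2(1-q)/(q-q^T)$ beyond the two endpoints. The governing parameter is $\rho^\ast\approx 1+\sqrt{2\min(\epsilon_0,\epsilon_1)}$. Your choice $z=0$ gives $\log\rho\asymp e^{-c(T-1)/2}$ (for $cT\gg 1$), which is of order $T^{-1/2}$ only when $cT\approx\log T$. For $c\asymp 1/T$, and a fortiori in the small-$c$ limit you appeal to, $\epsilon_0$ is of order one or larger and carries no $T$-dependence at all.

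The gap of order $1/T$ whose square root produces $T^{-1/2}$ is $\epsilon_1\approx 2c/(1-e^{-cT})$ at the right endpoint. This is the branch point $z=1$, the image of $s=0$ where $s^{-\beta}$ blows up. It is the singularity the paper uses, with $c\asymp 1/T$, to get $\rho_c=1+\Theta\bigl(\sqrt{c/(1-e^{-cT})}\bigr)=1+\Theta(T^{-1/2})$.

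Be aware also that no calibration can be uniform in $c$. As $c$ grows, $\epsilon_0$ decreases and $\epsilon_1$ increases; they balance near $cT\approx\log T$ (precisely $cT\,e^{cT}\approx T$), where $\log\rho^\ast\asymp\sqrt{\log T/T}$. So for any fixed $C_4$ and all sufficiently large $T$, that grid has $E_N^{\mathrm{lin}}<C_3e^{-C_4N/\sqrt T}$ once $N$ is large. The stated bound is only obtainable for $c\lesssim 1/T$, where $\min(\epsilon_0,\epsilon_1)=\epsilon_1\asymp 1/T$. The paper's final $\inf_{c>0}$ is not supported by its own argument either.

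You are right that the ellipse argument only controls $\limsup_N E_N^{1/N}$; the paper does not go beyond this either. Your repair, however, does not work:
\begin{itemize}
\item Both singular points lie off $[q^T,q]$, so $g$ is real-analytic on the whole interval and has no derivative jump to evaluate.
\item A sup-norm bound on $g-P$ gives no control over $g^{(j)}-P^{(j)}$, so Markov--Bernstein bounds on $P^{(j)}$ cannot be turned into a lower bound on $\norm{g-P}_\infty$.
\item The ``balancing $j$ against $N^2/T$'' has no inequality behind it.
\end{itemize}
One route to a bound valid for every $N$ is $E_N(g)\ge\frac{\pi}{4}\abs{\hat g_{N+1}}$, where $\hat g_k$ are the Chebyshev coefficients of $g$ on the normalized interval. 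Combine this with a contour estimate of $\hat g_{N+1}$ around the cut $[1,\infty)$, across which $g$ has a jump of constant sign; you would then need to show that the contribution of the cut at $z=0$ is smaller.

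Finally, the $\Omega(N^{-\beta})$ clause needs no separate argument, since for $N\lesssim\sqrt T$ the main bound is already $\Omega(1)$. Your tail argument is also invalid as written. The coefficients $a_k$ are unconstrained (near-best ones are typically huge and of alternating sign), so $\sum_k a_k e^{-cks}$ is not dominated by $e^{-cs}$.
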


In contrast, geometric spacing avoids this $\sqrt{T}$ degradation entirely, achieving the exponential rate $O(\exp(-c N/\log T))$ (Theorem~\ref{thm:optimality}). Furthermore, since decay parameters evolve independently during training, careful initialization alone provides no guarantee that the initial spacing is preserved.

\paragraph{Geometric Spacing via PoST.}

The preceding analysis reveals two independent failure modes of existing parameterizations: (1)~random initialization causes the minimum spectral gap to collapse to $O(N^{-2})$ (Lemma~\ref{thm:collapse}); (2)~even well-designed linear initialization suffers from severe approximation degradation over long contexts (Lemma~\ref{thm:linear_limit}), and training can further erode the initial structure.  Spectral Reparameterization addresses both simultaneously: it enforces a geometric spectral ordering \emph{structurally}, throughout training and not merely at initialization, and initializes with uniform gaps to realize the minimax-optimal exponential rate from the start.


\subsection{Spectral Reparameterization}
\label{subsec:post}

To resolve this gap collapse limit, we replace the independent parameterization with a recursively defined structure that enforces strict ordering by construction.

\begin{definition}[Spectral Reparameterization Map]
\label{def:post_map}
Let $\theta \in \R$ be an \emph{anchor parameter} and $\delta = (\delta_1, \ldots, \delta_{N-1}) \in \R^{N-1}$ a vector of \emph{gap parameters}. The \emph{Spectral Reparameterization map} $\Phi: \R \times \R^{N-1} \to \R^N$ is defined by the recurrence:
\begin{align*}
    p_1 &= \theta, \\
    p_k &= p_{k-1} + \zeta(\delta_{k-1}), \quad k \in \{2, \dots, N\},
\end{align*}
where $\zeta(x) = \log(1 + e^x)$ is the Softplus function.
\end{definition}

Since $\zeta(x) > 0$ for all $x \in \R$, the Spectral Reparameterization map satisfies $p_1 < p_2 < \cdots < p_N$ for every $(\theta, \delta) \in \R \times \R^{N-1}$, establishing a strict ordering that is maintained throughout optimization.

\begin{proposition}[Non-Degeneracy Guarantee]
\label{thm:nondegeneracy}
For any $c \in \R$, define the constrained parameter space $\mathcal{D}_c = \bigl\{ (\theta, \delta) \in \R \times \R^{N-1} \mid \theta > 0, \delta_k \ge c \ \forall k \in [N-1] \bigr\}$ (the condition $\theta > 0$ is equivalent to requiring a valid anchor decay rate $w_1 = e^{-\theta} \in (0,1)$). Then for any $(\theta, \delta) \in \mathcal{D}_c$, the spectral coherence is uniformly bounded away from $1$:
\begin{align*}
    \sup_{i \neq j} \mu_{ij} \le \sech\left( \frac{1}{2}\ln\left(1 + \frac{\zeta(c)}{\theta}\right) \right) < 1.
\end{align*}
\end{proposition}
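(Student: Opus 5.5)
The plan is to reduce the supremum of $\mu_{ij}$ to a statement about ratios of adjacent log-decay rates, and then bound those ratios from below using the recurrence of Definition~\ref{def:post_map}.

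\emph{Step 1 (the supremum is attained at adjacent channels).} By Definition~\ref{def:coherence}, $\mu_{ij} = \sech\bigl(\tfrac12 \ln(p_j/p_i)\bigr)$ for $i<j$, since every $p_k>0$ and the $p_k$ increase strictly. The function $\sech$ is even and strictly decreasing on $[0,\infty)$. So $\mu_{ij}$ is a decreasing function of the ratio $p_j/p_i \ge 1$. For $i<j$ we have $p_j/p_i \ge p_{i+1}/p_i$, so
\[
\sup_{i\ne j}\mu_{ij} = \max_{k\in[N-1]} \sech\Bigl(\tfrac12\ln\bigl(p_{k+1}/p_k\bigr)\Bigr),
\]
and it suffices to bound each adjacent ratio $p_{k+1}/p_k$ from below.

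\emph{Step 2 (adjacent ratios).} The map gives $p_{k+1}/p_k = 1 + \zeta(\delta_k)/p_k$. Softplus is monotone increasing, so $\delta_k \ge c$ gives $\zeta(\delta_k)\ge \zeta(c)>0$. For $k=1$ we have $p_1=\theta$, so $p_2/p_1 \ge 1+\zeta(c)/\theta$. This yields exactly the claimed bound for the pair $(1,2)$. The strict inequality $<1$ follows from $\ln(1+\zeta(c)/\theta)>0$ and $\sech(x)<1$ for $x\neq0$.

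\emph{Step 3 (extending to all $k$: the main obstacle).} For $k\ge2$ the denominator is $p_k = \theta + \sum_{m<k}\zeta(\delta_m) > \theta$. The lower bound $\zeta(\delta_k)/p_k \ge \zeta(c)/\theta$ therefore does not follow from $\delta_k\ge c$ alone. It requires $\zeta(\delta_k)/\zeta(c) \ge p_k/\theta$, meaning the gaps must grow at least proportionally to the accumulated rate.

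My first attempt would be to look for a monotonicity argument in which the worst case over $\mathcal{D}_c$ puts every gap at its minimum, $\delta_m=c$. In that configuration $p_k=\theta+(k-1)\zeta(c)$, and the tightest adjacent pair is the last one, with ratio $1+\zeta(c)/(\theta+(N-2)\zeta(c))$. This shows that, on $\mathcal{D}_c$ as defined, the uniform constant that the argument actually delivers is obtained by replacing $\theta$ with $p_{N-1}\le \theta+(N-2)\zeta(c)$ in the denominator. Taking large $\delta_1$, which drives $p_2$ up, and then $\delta_2=c$, also shows that $p_3/p_2$ can be pushed arbitrarily close to $1$.

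So the decisive step is Step~3, and I expect it to be where the argument fails. I would finish the proof cleanly for the $k=1$ pair as above, which is the bound stated. I would then record explicitly that covering all pairs needs either $\theta$ replaced by $p_{N-1}$ or an additional constraint that keeps $p_k/\theta$ bounded, such as an upper box $\delta_k\le C$. Either version still gives a coherence bound strictly below $1$ that is uniform over the parameter set, which is the qualitative non-degeneracy the proposition is after.
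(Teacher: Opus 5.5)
Your Step 3 diagnosis is correct. The failure lies in the statement, not in your approach: for $N\ge 3$ the proposition is false. The paper's proof runs through your Steps 1--2. It then closes the remaining case by asserting that, since $p_i\ge\theta$, the pair $(1,2)$ is the worst case, and so $\ln(p_j/p_i)\ge\ln(1+\zeta(c)/\theta)$ for every pair. That inference runs the wrong way. From $p_i\ge\theta$ one gets $1+\zeta(c)/p_i\le 1+\zeta(c)/\theta$, so pair $(1,2)$ has the \emph{largest} guaranteed ratio, not the smallest. The paper's Step 2 bound is therefore unsupported for every pair other than $(1,2)$. Your large-$\delta_1$ construction is a valid counterexample: with $\delta_2=c$ and $\delta_1\to\infty$ you get $p_3/p_2\to 1$ and $\mu_{23}\to 1$. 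Even the configuration $\delta_k\equiv c$, which the paper's tightness remark calls extremal, violates the bound. There $p_3/p_2=1+\zeta(c)/(\theta+\zeta(c))<p_2/p_1$, hence $\mu_{23}>\mu_{12}$, and $\mu_{12}$ is exactly the claimed bound. For $N=2$ your Step 2 is already a complete proof.

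One correction to your repair. On $\mathcal{D}_c$ you have $p_{N-1}=\theta+\sum_{m<N-1}\zeta(\delta_m)\ge\theta+(N-2)\zeta(c)$, not $\le$; equality holds only when $\delta\equiv c$. So the bound $\sup_{i\neq j}\mu_{ij}\le\sech\bigl(\tfrac12\ln(1+\zeta(c)/p_{N-1})\bigr)<1$ is valid for each fixed parameter and uniform over pairs. It is not uniform over $\mathcal{D}_c$, so your closing sentence overstates it. Your own construction shows that $\sup_{\delta}\max_{i\neq j}\mu_{ij}=1$ for every fixed $\theta$ and $c$, so no constant depending only on $(\theta,c,N)$ exists without an extra constraint. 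Only the box version $\delta_k\le C$ delivers such a constant, with denominator $\theta+(N-2)\zeta(C)$.

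A cleaner repair matches how the map is actually used in Algorithms~\ref{alg:post_forward} and~\ref{alg:retnet_post}, where $A_k=-\exp(p_k)$ and $\gamma_h=\exp(-\exp(p_h))$. There the cumulative softplus produces the \emph{logarithm} of the log-decay rate. Adjacent rate ratios are then $e^{\zeta(\delta_k)}\ge e^{\zeta(c)}$, which gives $\sup_{i\neq j}\mu_{ij}\le\sech(\zeta(c)/2)<1$ uniformly on $\mathcal{D}_c$, independent of $\theta$. This is likely the statement the proposition was aiming for.
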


\begin{proof}
\emph{Step 1 (Ratio lower bound).}
For $j > i$, the recursive definition gives $p_j = p_i + \sum_{k=i}^{j-1} \zeta(\delta_k)$.
Since $\zeta$ is strictly increasing and $\delta_k \ge c$, each summand satisfies $\zeta(\delta_k) \ge \zeta(c) > 0$, so $p_j/p_i \ge 1 + \zeta(c)/p_i$.  Since $p_1 = \theta$ is the smallest log-decay rate and $p_i \ge \theta > 0$, we have $p_j/p_i \ge 1 + \zeta(c)/p_i$, and the worst-case (largest) coherence occurs for the adjacent pair $(1, 2)$ with ratio $p_2/p_1 = 1 + \zeta(c)/\theta$.

\emph{Step 2 (Coherence bound).}
By Definition~\ref{def:coherence}, $\mu_{ij} = \sech\bigl(\lvert\ln p_i - \ln p_j\rvert/2\bigr)$. Since $\sech$ is strictly decreasing on $[0, \infty)$ and $\ln(p_j/p_i) \ge \ln(1 + \zeta(c)/\theta) > 0$:
\begin{align*}
    \mu_{ij} \le \sech\left( \frac{1}{2}\ln\left(1 + \frac{\zeta(c)}{\theta}\right) \right) < \sech(0) = 1.
\end{align*}
\end{proof}

\begin{remark}[Tightness]
\label{rem:tightness}
The bound in Proposition~\ref{thm:nondegeneracy} is attained when all gap parameters equal $c$ (i.e., $\delta_k = c$ for all $k$): the coherence between channels $1$ and $2$ equals $\sech\bigl(\frac{1}{2}\ln(1+\zeta(c)/\theta)\bigr)$ exactly.  In the typical regime where $\theta \ll \zeta(c)$ (slow anchor channel), the bound approaches $\sech(\frac{1}{2}\ln(\zeta(c)/\theta))$; when $\theta \gg \zeta(c)$ (fast anchor), it approaches $1 - \zeta(c)^2/(8\theta^2) + O(\theta^{-4})$.
\end{remark}


\subsubsection{Minimax Optimality of Geometric Structure}

We now connect the Spectral Reparameterization to the theoretical blueprint.  When all gap parameters are equal ($\delta_k = \bar{G}$ for all $k$), the Spectral Reparameterization map produces geometrically spaced log-decay rates.  We prove this spacing is minimax optimal.

\begin{theorem}[Minimax Optimality of Geometric Spacing]
\label{thm:optimality}
Let $\Sigma_N$ denote the class of exponential sums with $N$ terms.  Consider the problem of approximating the power-law kernel $K(t) = t^{-\beta}$, $\beta \in (0, 1)$, on the interval $[1, T]$.  Define the minimax error $E_N(K) := \inf_{g \in \Sigma_N} \|K - g\|_{L_\infty[1, T]}$.
\begin{itemize}
    \item {\bf Sufficiency.} There exists a configuration with geometrically spaced decay rates (i.e., uniformly spaced log-decay rates $p_k = \gamma k$) achieving the minimax-optimal exponential rate:
    \begin{align*}
        E_N(K) \le C_5 \exp\left( - \frac{\pi^2 N}{\log T + C_6} \right),
    \end{align*}
    where $C_5, C_6 > 0$ depend on $\beta$ but not on $N$.

    \item {\bf Asymptotic Necessity.} The geometric progression $p_{k+1}^* - p_k^* \to \text{const}$ is asymptotically necessary to attain this minimax-optimal exponential limit. By the Gonchar--Rakhmanov theory~\citep{gr89}, any spectrum that deviates from logarithmic equidistribution (i.e., any non-geometric spacing) forfeits recovering the optimal exponential limit as $N \to \infty$.
\end{itemize}
\end{theorem}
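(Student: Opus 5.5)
The plan for the sufficiency part is to discretize an integral representation of the kernel. We start from the Laplace-transform identity
\begin{align*}
    t^{-\beta} = \frac{1}{\Gamma(\beta)} \int_0^\infty p^{\beta-1} e^{-pt}\, \d p
    = \frac{1}{\Gamma(\beta)} \int_{-\infty}^{\infty} e^{\beta x} \exp\bigl(-e^{x} t\bigr)\, \d x,
\end{align*}
obtained with the substitution $p = e^x$. In the variable $x = \log p$ the integrand is analytic in a horizontal strip $|\operatorname{Im} x| < \pi/2$, because $\operatorname{Re}(e^{x}) > 0$ there. It decays like $e^{\beta x}$ as $x \to -\infty$ and doubly exponentially as $x \to +\infty$, uniformly in $t \ge 1$.

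We then apply the trapezoidal rule with uniform step $h$ on the grid $x_k = x_0 + kh$. These nodes are exactly uniformly spaced log-decay rates, i.e.\ geometric $p_k$. The exponent in the statement ``$p_k = \gamma k$'' should therefore be read as $\log p_k = \gamma k + \text{const}$; we will make this explicit. The classical strip estimate for the trapezoidal rule (Stenger/Trefethen) bounds the discretization error by $C e^{-2\pi d/h} = C e^{-\pi^2/h}$ with $d = \pi/2$. The constant involves the strip $L^1$ norm of the integrand, which is bounded uniformly for $t \in [1,T]$ once we note $|\exp(-e^{x+iy}t)| = \exp(-e^{x} t \cos y)$; away from the strip edge this is harmless, and we may shrink $d$ slightly and absorb the loss into $C_6$.

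Next we truncate the infinite sum to $N$ terms on $[x_-, x_+]$. The left tail is at most $\sum_{x_k < x_-} e^{\beta x_k} \lesssim e^{\beta x_-}/(\beta h)$. The right tail is at most $\exp(-e^{x_+})$ at $t = 1$, since $t \ge 1$ only helps. To make both tails at most $e^{-\pi^2/h}$ we choose
\begin{align*}
    x_- = -\frac{\pi^2}{\beta h} - \log T + O(1), \qquad x_+ = \log\Bigl(\frac{\pi^2}{h}\Bigr).
\end{align*}
The $-\log T$ shift is the subtle point. On $[1,T]$ the slowest channels must resolve lags up to $T$, and it is cleaner to track the error relative to $K(T) = T^{-\beta}$, or simply to keep the absolute $L_\infty$ error. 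In absolute error the left-tail constraint does not actually need $\log T$, so we will choose whichever normalization reproduces the stated form.

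Counting nodes gives $N \approx (x_+ - x_-)/h \approx \pi^2/(\beta h^2) + (\log T)/h + \cdots$, which we solve for $h$. Here we expect the main obstacle. A naive balance gives $e^{-c\sqrt N}$, the familiar Stenger rate, not $e^{-\pi^2 N/\log T}$. To obtain the stated rate we will use the fact that on a \emph{finite} interval $[1,T]$ the effective integration range in $x$ has length about $\log T + O(\log N)$. Concretely, we plan to follow the Braess--Hackbusch construction for $1/x$-type kernels on $[1,R]$, adapted to $x^{-\beta}$: map $[1,T]$ conformally and use the $E_N \sim \exp(-\pi^2 N/\log(cT))$ estimates for exponential sums (cf.\ \citep{gr89}). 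The geometric spacing then appears as the discretized equilibrium measure of the condenser $([1,T], \text{spectrum})$, whose capacity is $\asymp 1/\log T$. If a self-contained derivation proves too delicate, we will cite Braess--Hackbusch for the rate and verify only that their nodes are asymptotically geometric, absorbing $\beta$-dependence into $C_5, C_6$.

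For asymptotic necessity, we invoke the Gonchar--Rakhmanov theory. Optimal $N$-term approximants have node counting measures converging weak-$*$ to the equilibrium measure of the associated condenser, and in logarithmic coordinates this measure has asymptotically constant density over the active range. Any sequence of spectra whose normalized counting measure in $\log p$ converges to a different measure has a strictly smaller Green-energy exponent, so its $\limsup_N \frac{1}{N}\log E_N$ is strictly worse than the optimal one. Consecutive gaps of the optimal spectrum in $\log p$ therefore tend to a constant, which gives the claimed necessity. We will state this at the level of weak-$*$ limits rather than pointwise gaps, since that is what the theory actually delivers.
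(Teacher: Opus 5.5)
Your Sufficiency argument has a genuine gap. The statement asks for an \emph{exactly} geometric configuration with error $C_5\exp(-\pi^2N/(\log T+C_6))$, where $C_6$ is independent of $N$. The only part of your plan that uses exactly geometric nodes, the truncated trapezoidal rule, cannot produce this, and your own bookkeeping shows why. With $\epsilon=e^{-\pi^2/h}$, $x_-\approx-\log T-\beta^{-1}\log(1/\epsilon)$ and $x_+\approx\log\log(1/\epsilon)$, you get
\[
Nh\approx\log T+\beta^{-1}\log(1/\epsilon)+O\bigl(\log\log(1/\epsilon)\bigr),
\qquad\text{i.e.}\qquad
\epsilon\approx\exp\Bigl(-\frac{\pi^{2}N}{\log T+\beta^{-1}\log(1/\epsilon)+\cdots}\Bigr).
\]
So the role of $C_6$ is played by $\beta^{-1}\log(1/\epsilon)$, which grows like $\sqrt{N}$ once $N\gg\log^{2}T$. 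In absolute error the $\log T$ shift is unnecessary, and the same count gives the $T$-independent rate $\exp(-\pi\sqrt{\beta N})$. That rate beats $\exp(-\pi^{2}N/\log T)$ only for $N\lesssim\beta\log^{2}T/\pi^{2}$. Choosing ``whichever normalization reproduces the stated form'' does not change this. Even the effective length $\log T+O(\log N)$ you hope for would only give $C_6=O(\log N)$.

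The fallback does not close the gap either. Braess--Hackbusch-type bounds (adapted to $t^{-\beta}$) can give the displayed inequality for the unrestricted $E_N(K)$, but only with free nodes. Knowing those nodes are asymptotically geometric in the weak-$*$ sense yields at most $n$-th root information: an unquantified $o(1)$ in the exponent and $T$-dependent constants. It transfers to a geometric grid only if the limiting equilibrium measure is \emph{exactly} uniform in $\log p$. Your Necessity argument presupposes the same thing when it concludes that consecutive gaps tend to a constant, and it must be proved. For Zolotarev-type condensers it is false: for $([a,b],[-b,-a])$ the equilibrium density is $\propto\bigl((x^2-a^2)(b^2-x^2)\bigr)^{-1/2}$, flat in $\log x$ only in the bulk, with inverse-square-root singularities at both ends. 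If the relevant measure is of this kind, your strictness argument also disqualifies exactly geometric grids. The two halves can then only be reconciled at the coarse level $\pi^2/(\log T+O(1))$, where strictness no longer separates geometric from non-geometric spectra. What is missing is one of two things: a quantitative construction on an exactly log-uniform grid (optimized rather than quadrature weights, with both tails costing only $O(1)$ in log-length), or a stability lemma bounding the cost of moving near-optimal nodes onto such a grid. The paper's own proof is essentially your fallback: Laplace representation, reduction to rational approximation on $[1/T,1]$, the Gonchar--Rakhmanov rate, and log-equidistribution of the Zolotarev nodes. It bridges exactly this step only by citing Beylkin--Monz\'on for geometric constructions attaining the rate, so it does not supply the missing argument either.
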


\begin{proof}[Proof sketch (full proofs in Appendix~\ref{app:proof_optimality})]
The approximation of $t^{-\beta}$ by exponential sums on $[1, T]$ reduces, via the Laplace transform, to the rational approximation of $s^{\beta-1}$ on a spectral interval $[\Lambda_{\min}, \Lambda_{\max}]$.  By the theory of Gonchar and Rakhmanov \citep{gr89}, the minimax error for rational approximation of functions with branch-point singularities is determined by the logarithmic capacity of the associated condenser.  The optimal decay rates, the Zolotarev nodes, have an asymptotic equidistribution with respect to the logarithmic measure $\d\mu(p) \propto \d p$. This logically dictates that $p_{k+1}^* - p_k^* \approx \text{const}$, proving both the sufficiency and necessity of geometric spacing for the optimal capacity.
\end{proof}

\begin{remark}[Data-Dependent Modulation and Geometric Preservation]
\label{rem:modulation_preserves}
Data-dependent gate modulation acts as a multiplicative perturbation on the spectral structure.  Concretely, if the base log-decay rates form a geometric progression with constant gap $\bar{G}$, then channel-dependent modulation yields $p_{k+1}^{\mathrm{eff}} - p_k^{\mathrm{eff}} = \bar{G} + (\text{channel-dependent perturbation})$; the exponential approximation rate is preserved only when this perturbation is constant across channels.  Standard random initialization strategies generically corrupt the geometric priors, while the Spectral Reparameterization map with uniform gap initialization preserves them.
\end{remark}


\subsection{Position-Adaptive Scaling}
\label{subsec:position_adaptive}

Spectral Reparameterization enforces the geometric \emph{shape} of the decay spectrum but leaves its \emph{scale} fixed.  A static spectrum designed for context length~$T$ distributes $N$ modes uniformly across the log-frequency range $[0, \log T]$.  At an early position $t \ll T$, the relevant dependency range is only $[1, t]$ (Conditions~\ref{cond:resolution}--\ref{cond:equipartition}), so modes with timescales greatly exceeding~$t$ contribute only a near-constant offset to the approximation; during length extrapolation ($t > T$), the spectrum does not reach frequencies below $1/T$, leaving the longest-range structure entirely unresolved.  We now quantify this \emph{scale mismatch} and derive the unique dynamic mechanism that eliminates it.

\begin{proposition}[Scale Mismatch of Static Spectra]
\label{prop:scale_mismatch}
Let $\{p_k\}_{k=1}^N$ be a geometric spectrum with log-decay rates uniformly spanning $[0, \log T]$.  At position $t \le T$:
\begin{itemize}
    \item {\bf Part 1} (Channel waste). The number of channels with timescales in the relevant dependency range $[1, t]$ is
    \begin{align*}
        N_{\mathrm{eff}}(t) = N \cdot \frac{\log t}{\log T}.
    \end{align*}
    The remaining $N - N_{\mathrm{eff}}$ channels have timescales exceeding~$t$; each varies by at most $1 - e^{-1}$ over $[1, t]$, contributing only a near-constant offset to the approximation.
    \item {\bf Part 2} (Exponent degradation). The approximation error for $K(s) = s^{-\beta}$ on $[1, t]$ using the static spectrum satisfies
    \begin{align*}
        E_N^{\mathrm{static}}(t) \le C_5\exp\!\left(-\frac{\pi^2 N \log t}{(\log t + C_6)\log T}\right) \le C_5\exp\!\left(-\frac{\pi^2 N}{\log T + C_6}\right),
    \end{align*}
    which is \emph{independent of~$t$} and suboptimal: with position-adapted allocation, all $N$ channels cover $[1, t]$, achieving the strictly better rate $C_5\exp\!\bigl(-\pi^2 N/(\log t + C_6)\bigr)$.  The ratio of exponents is $\log t / \log T$; at $t = T^{1/2}$, the effective exponent is halved.
\end{itemize}
\end{proposition}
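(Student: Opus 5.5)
Part~1 reduces to counting channels on a uniform grid in log-timescale space. Write the static geometric spectrum as $\log \tau_k = (k-1)\log T/(N-1)$ for $k \in [N]$, i.e.\ $p_k = T^{-(k-1)/(N-1)}$. These are log-decay rates uniformly spaced over $[0,\log T]$ in the $\log\tau$ coordinate, as in Spectral Reparameterization with uniform gaps. The condition $\tau_k \le t$ is equivalent to $(k-1) \le (N-1)\log t/\log T$. Hence the count is $N_{\mathrm{eff}}(t) = \lfloor (N-1)\log t/\log T\rfloor + 1$, which equals $N\log t/\log T$ up to $O(1)$ rounding; I would state Part~1 in this continuum sense.

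For the remaining channels, $\tau_k > t$ means $p_k < 1/t$. Then for $s \in [1,t]$, the oscillation of $h_k(s) = e^{-p_k s}$ is $e^{-p_k} - e^{-p_k t} \le 1 - e^{-p_k t} < 1 - e^{-1}$. Moreover $e^{-p_k s}$ lies within $p_k t$ of its value at $s=0$, so when $\tau_k \gg t$ it is essentially the constant $1$. This gives the ``near-constant offset'' claim.

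For Part~2 I would apply the Sufficiency half of Theorem~\ref{thm:optimality} on the interval $[1,t]$ in place of $[1,T]$, using only the $N_{\mathrm{eff}}(t)$ effective channels. The idea is that the geometric sub-grid $\{p_k : \tau_k \le t\}$ is itself a geometric spectrum spanning $[0,\log t]$ with the same ratio. The approximation construction of Theorem~\ref{thm:optimality} on $[1,t]$ with $N_{\mathrm{eff}}$ nodes then gives
\[
E^{\mathrm{static}}_N(t) \le C_5 \exp\!\bigl(-\pi^2 N_{\mathrm{eff}}/(\log t + C_6)\bigr).
\]
Substituting $N_{\mathrm{eff}} = N\log t/\log T$ yields the first displayed bound. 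The slow channels can be given zero weight, or used to absorb the constant offset, so including them does not worsen the infimum, since the minimax error is monotone in the available dictionary. The second inequality is elementary: $\log t/(\log t + C_6) \cdot 1/\log T \ge 1/(\log T + C_6)$ rearranges to $C_6 \log t \ge 0$ after cross-multiplying, and $\exp(-\cdot)$ is decreasing.

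The comparison with position-adapted allocation applies Theorem~\ref{thm:optimality} on $[1,t]$ with all $N$ channels spread over $[0,\log t]$, giving $C_5\exp(-\pi^2N/(\log t + C_6))$. Taking the ratio of exponents $N_{\mathrm{eff}}/N = \log t/\log T$ and setting $t = T^{1/2}$ gives $1/2$.

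The main obstacle is justifying that Theorem~\ref{thm:optimality}'s constant $C_5$ and node configuration transfer to the restricted sub-grid. The theorem only asserts the existence of some geometric configuration with spacing $\gamma$ tuned to $N$ and the interval, whereas here the spacing is fixed at $\log T/(N-1)$. I would handle this by noting that the sufficiency construction (a sinc/trapezoidal quadrature of $s^{-\beta} = \Gamma(\beta)^{-1}\int_0^\infty u^{\beta-1}e^{-us}\,du$ in the variable $\log u$) has error controlled by the step size and the truncation range. For the sub-grid the step equals $\log T/(N-1)$, which is the step the theorem would choose for $N_{\mathrm{eff}}$ nodes on $[1,t]$ up to constants. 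I would therefore track constants through that quadrature argument rather than invoke the theorem as a black box. I would also accept that the resulting bound holds with the same $C_5, C_6$ only up to $\beta$-dependent adjustments.
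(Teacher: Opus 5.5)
Your route is the same as the paper's. You count the points of a uniform log-timescale grid lying in $[0,\log t]$, note that modes with $\tau_k>t$ are nearly constant on $[1,t]$, apply Theorem~\ref{thm:optimality} on $[1,t]$ with $N_{\mathrm{eff}}$ nodes, and then compare exponents. Your Part~1 is correct and cleaner than the paper's: the count $\lfloor (N-1)\log t/\log T\rfloor+1$ is right, and so is the oscillation bound $e^{-p_k}-e^{-p_k t}<1-e^{-1}$ from $p_k t<1$. You are also right that applying the theorem to a \emph{fixed} sub-grid is not a black-box step.

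The genuine problem is the step you call elementary. Cross-multiplying
\[
\frac{\log t}{(\log t+C_6)\log T}\ \ge\ \frac{1}{\log T+C_6}
\]
gives $\log t\,(\log T+C_6)\ge(\log t+C_6)\log T$, i.e.\ $C_6\log t\ge C_6\log T$, not $C_6\log t\ge 0$. For $t<T$ this is false, so the inequality actually runs the other way:
\[
C_5\exp\bigl(-\pi^2N\log t/((\log t+C_6)\log T)\bigr)\ \ge\ C_5\exp\bigl(-\pi^2N/(\log T+C_6)\bigr),
\]
with equality only at $t=T$. The second ``$\le$'' in the displayed chain therefore cannot be proved as stated. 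The paper's own justification, via $\log t/(\log t+C_6)\le 1$, makes the same directional error, so following it will not rescue the step.

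The $t$-independent bound is nonetheless true, by a different one-line argument. Suppose, as the paper implicitly does, that the static grid is the Theorem~\ref{thm:optimality} configuration for $[1,T]$. Restricting its approximant to $[1,t]\subset[1,T]$ can only decrease the sup norm, so $E_N^{\mathrm{static}}(t)\le C_5\exp\bigl(-\pi^2N/(\log T+C_6)\bigr)$. By the computation above, this then implies the $N_{\mathrm{eff}}$ bound. The valid chain is thus the reverse of the stated one. This restriction argument also makes your ``main obstacle'' (transferring $C_5$ and the node configuration to the sub-grid) unnecessary for the upper bounds.

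Finally, comparing this with $C_5\exp\bigl(-\pi^2N/(\log t+C_6)\bigr)$ only compares two upper bounds. Showing the static spectrum is genuinely suboptimal would need a lower bound on $E_N^{\mathrm{static}}(t)$, which neither your plan nor the paper provides.
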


\begin{proof}
The geometric spectrum places log-decay rates uniformly in $[0, \log T]$.  At position~$t$, the relevant spectral interval is $[0, \log t]$, which contains $N\log t/\log T$ modes, proving Part~1.  A mode with log-decay rate $p_k > \log t$ (timescale $\tau_k = 1/p_k < 1/\log t$) satisfies $|1 - e^{-p_k t}| = 1 - e^{-p_k t} \le 1 - e^{-1}$ since $p_k t \le p_k \cdot t \le T \cdot t / T = t$ only when $p_k \le 1$; more precisely, such modes have $e^{-p_k s}$ nearly constant on $[1, t]$ and contribute at most one effective degree of freedom.  Applying the minimax rate (Theorem~\ref{thm:optimality}) with $N_{\mathrm{eff}}$ well-placed modes on $[1, t]$:
\begin{align*}
    E_{N_{\mathrm{eff}}}(t) \le C_5 \exp\!\left(-\frac{\pi^2 N_{\mathrm{eff}}}{\log t + C_6}\right) = C_5 \exp\!\left(-\frac{\pi^2 N \log t}{(\log t + C_6)\log T}\right).
\end{align*}
Since $\log t \le \log T$ implies $\log t / (\log t + C_6) \le 1$, this is at most $C_5\exp\!\bigl(-\pi^2 N/(\log T + C_6)\bigr)$, proving Part~2.
\end{proof}

Proposition~\ref{prop:scale_mismatch} reveals that the scale mismatch wastes a fraction $1 - \log t/\log T$ of the spectrum at every position $t < T$.  Position-adaptive scaling eliminates this waste by continuously rescaling the spectrum so that all $N$ channels span the actual dependency range $[1, t]$ at every position.  We formalize the requirements that such a scaling must satisfy.

\begin{definition}[Optimality-Preserving Timescale Allocation]
\label{def:allocation_desiderata}
A continuous family of channel timescales $\{\tau_k(t)\}_{k=1}^N$, $t \ge 1$, is \emph{optimality-preserving} if it satisfies:
\begin{itemize}
    \item {\bf Part 1}\label{def:geometric_preservation} (Geometric preservation).  For every $t \ge 1$, the log-timescales $\{\log \tau_k(t)\}_{k=1}^N$ form an arithmetic progression.

    \emph{Justification:} Theorem~\ref{thm:optimality} proves that geometric spacing is minimax-optimal; any deviation forfeits the exponential approximation rate.

    \item {\bf Part 2}\label{def:full_coverage} (Full coverage).  $\tau_1(t) = t$ and $\tau_N(t) = 1$ for every $t \ge 1$.

    \emph{Justification:} the upper boundary $\tau_1 = t$ matches the longest observable dependency at position~$t$ (eliminating the channel waste of Proposition~\ref{prop:scale_mismatch}); the lower boundary $\tau_N = 1$ anchors the fastest mode at the single-token resolution limit (Condition~\ref{cond:resolution}).
\end{itemize}
\end{definition}

\begin{theorem}[Uniqueness of Position-Adaptive Allocation]
\label{thm:dynamic_allocation}
Definition~\ref{def:allocation_desiderata} admits a unique continuous solution: $\tau_k^*(t) = t^{\alpha_k}$ with taper exponents
\begin{align*}
    \alpha_k = \frac{N-k}{N-1}, \qquad k = 1, \ldots, N.
\end{align*}
Equivalently, the effective log-decay rate at position~$t$ is $p_k^{\mathrm{eff}}(t) = p_k \cdot t^{-\alpha_k}$.
\end{theorem}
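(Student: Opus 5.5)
Fix $t \ge 1$ and work with the log-timescales $\ell_k(t) := \log \tau_k(t)$. By Part~1 of Definition~\ref{def:allocation_desiderata}, these form an arithmetic progression in $k$, so $\ell_k(t) = \ell_1(t) + (k-1)\,d(t)$ for some common difference $d(t)$. By Part~2, the two endpoints are pinned: $\ell_1(t) = \log t$ and $\ell_N(t) = \log 1 = 0$.

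These two constraints determine the progression completely. From $\ell_N(t) = \log t + (N-1)d(t) = 0$ we get $d(t) = -\log t/(N-1)$. Substituting back,
\begin{align*}
\ell_k(t) = \log t\left(1 - \frac{k-1}{N-1}\right) = \frac{N-k}{N-1}\,\log t .
\end{align*}
Exponentiating gives $\tau_k(t) = t^{\alpha_k}$ with $\alpha_k = (N-k)/(N-1)$. Uniqueness is immediate, since at each fixed $t$ an arithmetic progression is determined by its first and last terms. Indexing the channels $1,\dots,N$ is harmless because of the ordering fixed by Part~2 ($\tau_1$ is the longest, $\tau_N$ the shortest). We also need $N \ge 2$ so that $N-1 \neq 0$.

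It remains to check that this solution is admissible. Each $t \mapsto t^{\alpha_k}$ is continuous on $[1,\infty)$, so the continuity requirement holds. Uniqueness holds pointwise in $t$, so continuity plays no role in that argument. For the equivalent form, I use the dynamic analogue of Definition~\ref{def:timescale}, $p_k^{\mathrm{eff}}(t) = 1/\tau_k(t) = t^{-\alpha_k}$. To reconcile this with the stated expression $p_k\,t^{-\alpha_k}$, I read $p_k$ as the base rate at $t=1$. At $t=1$ the allocation collapses to $\tau_k(1)=1$, so under the normalized blueprint $p_k = 1$. In general, the scaling is multiplicative on the base spectrum $p_k$ produced by Definition~\ref{def:post_map}.

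The main obstacle is making this last identification rigorous. At $t=1$ the allocation is degenerate (every timescale equals $1$), so the base rates $p_k$ cannot be recovered from it. I would handle this by stating the effective-rate form as a definition: the multiplicative factor $t^{-\alpha_k}$ applied to the learned base rate $p_k$. I would then note that it reproduces $\tau_k^*(t)$ exactly when the base rates are normalized to $1$. The algebraic core of the theorem needs no earlier results beyond Definition~\ref{def:allocation_desiderata}.
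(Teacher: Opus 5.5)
Your proposal is correct and follows essentially the same argument as the paper: Part 1 forces $\log\tau_k(t)$ to be an arithmetic progression in $k$, Part 2 pins its endpoints at $\log t$ and $0$, and solving pointwise in $t$ gives $\alpha_k = (N-k)/(N-1)$, with uniqueness coming from the if-and-only-if nature of the derivation. The paper's proof never justifies the ``equivalently'' clause, and your reading of $t^{\alpha_k}$ as a multiplicative stretch of the learned base timescale $1/p_k$ is the one the paper itself adopts in Corollary~\ref{cor:fractional_invariance}, where the position-dependent rate is defined as the base rate divided by $\tau_k^*(t)$; so treating that clause as a definition rather than a consequence is right.
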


\begin{proof}
Part~1 of Definition~\ref{def:allocation_desiderata} requires $\log \tau_k(t) = \log \tau_1(t) - \frac{k-1}{N-1}\bigl(\log \tau_1(t) - \log \tau_N(t)\bigr)$ for each~$t$.  Substituting the boundary conditions of Part~2 gives $\log \tau_k^*(t) = \frac{N-k}{N-1}\log t$, hence $\tau_k^*(t) = t^{(N-k)/(N-1)}$.  The derivation is an if-and-only-if chain, so the solution is unique.
\end{proof}

\begin{definition}[Position-Adaptive Scaling]
\label{def:position_adaptive_ssm}
For an $N$-channel diagonal linear recurrence, the \emph{position-adaptive decay gate} at sequence position~$t$ is
\begin{align}
\label{eq:position_adaptive_decay}
    \log w_{t,k}^{\mathrm{eff}} := \frac{\log w_{t,k}}{t^{\alpha_k}}, \qquad \alpha_k = \frac{N-k}{N-1}.
\end{align}
\end{definition}

\paragraph{Payoff: scale-free impulse response.}
The unique taper of Theorem~\ref{thm:dynamic_allocation} induces a remarkable behavioral property: the model's impulse response becomes inherently scale-free.

\begin{corollary}[Scale-Free Impulse Response]
\label{cor:fractional_invariance}
Let $\ell_k > 0$ be a base log-decay parameter and define the position-dependent decay rate $\ell_k(t) := \ell_k / \tau_k^*(t) = \ell_k\, t^{-\alpha_k}$.  The continuous impulse response at absolute lag $s > 0$ is
\begin{align*}
    \psi_k(s;\, t) = \exp\bigl(-s\, \ell_k\, t^{-\alpha_k}\bigr) = \exp\bigl(-\phi \cdot \ell_k \cdot t^{1-\alpha_k}\bigr),
\end{align*}
where $\phi := s/t$ is the fractional lag.  In particular:
\begin{itemize}
    \item The slowest channel ($\alpha_1 = 1$) depends only on the fractional coordinate: $\psi_1(s;\, t) = e^{-\ell_1 \phi}$.  It is perfectly scale-free: the same relative lag produces the same response regardless of absolute position.
    \item The fastest channel ($\alpha_N = 0$) depends only on absolute lag: $\psi_N(s;\, t) = e^{-\ell_N s}$.  It resolves token-level features regardless of position.
    \item Intermediate channels interpolate smoothly between these extremes, creating a multi-resolution impulse response that adapts continuously from relative to absolute coordinates.
\end{itemize}
\end{corollary}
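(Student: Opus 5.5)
The plan is a direct substitution; the only real work is fixing the continuous-time reading of the gate and then specialising the exponent $\alpha_k$. First, I will make precise what ``continuous impulse response'' means. For a channel with constant log-decay rate $\ell$, the response at lag $s$ is $h(s)=e^{-\ell s}$, exactly as in Definition~\ref{def:coherence}. By Theorem~\ref{thm:dynamic_allocation} and Definition~\ref{def:position_adaptive_ssm}, the position-adaptive gate at query position $t$ replaces $\ell_k$ by $\ell_k(t)=\ell_k/\tau_k^*(t)=\ell_k t^{-\alpha_k}$. Following the statement, I treat this rate as frozen at the current position $t$ across the lag window $[0,s]$, so the response seen from position $t$ is $\psi_k(s;t)=\exp(-s\,\ell_k t^{-\alpha_k})$. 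I will state this frozen-rate convention explicitly, since it is what makes the formula exact rather than a product over varying per-step rates.

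Second, I substitute $s=\phi t$ and collect powers of $t$: $s\,t^{-\alpha_k}=\phi\, t^{1-\alpha_k}$. This gives the displayed identity $\psi_k(s;t)=\exp(-\phi\,\ell_k\,t^{1-\alpha_k})$.

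Third, I specialise $\alpha_k=(N-k)/(N-1)$ from Theorem~\ref{thm:dynamic_allocation}.
\begin{itemize}
\item For $k=1$, $\alpha_1=1$, so $t^{1-\alpha_1}=1$ and $\psi_1=e^{-\ell_1\phi}$. This depends on $(s,t)$ only through $\phi$, which is the invariance under $(s,t)\mapsto(\lambda s,\lambda t)$.
\item For $k=N$, $\alpha_N=0$, so $\psi_N=e^{-\ell_N s}$, independent of $t$.
\end{itemize}

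Fourth, for the interpolation claim I write the exponent in log coordinates. Using $\log s=\log\phi+\log t$,
\[
\log\bigl(\ell_k s\, t^{-\alpha_k}\bigr)=\log \ell_k+\alpha_k\log\phi+(1-\alpha_k)\log s .
\]
The exponent is therefore a geometric mean that weights the fractional lag by $\alpha_k$ and the absolute lag by $1-\alpha_k$. Since $\alpha_k$ decreases linearly and monotonically in $k$ from $1$ to $0$, this makes ``smooth interpolation'' precise: the family is continuous in $\alpha_k$, which varies monotonically across channels.

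I do not expect a genuine obstacle; steps two through four are algebra. The only delicate point is the modelling step in the first paragraph, namely justifying the use of the rate at the current $t$ rather than the accumulated product $\prod_{u=t-s+1}^{t}$ of per-step decays. I would handle it by stating that the corollary concerns the instantaneous (frozen-$t$) continuous response. I would add a remark that the exact discrete product differs only when $s$ is comparable to $t$: there the rates $u^{-\alpha_k}$ vary, and the exponent becomes an integral $\ell_k\int_{t-s}^{t}u^{-\alpha_k}\,du$. That integral is still homogeneous of degree $1-\alpha_k$ in $(s,t)$, so the same scaling conclusions hold.
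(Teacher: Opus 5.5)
Your proposal is correct and is essentially the paper's proof, which consists only of substituting $\tau_k^*(t)=t^{\alpha_k}$ and $s=\phi t$ and then reading off the cases $\alpha_1=1$ and $\alpha_N=0$. Your additions are sound and go beyond the paper: the rewriting $\ell_k s\,t^{-\alpha_k}=\ell_k\,\phi^{\alpha_k}s^{1-\alpha_k}$ gives the ``interpolation'' bullet a precise meaning, and your homogeneity remark correctly shows that the scaling conclusions survive for the exact discrete product (e.g.\ for $k=1$ one gets $(1-\phi)^{\ell_1}$ instead of $e^{-\ell_1\phi}$, which is still a function of $\phi$ alone).
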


\begin{proof}
Direct substitution of $\tau_k^*(t) = t^{\alpha_k}$ and $s = \phi t$.
\end{proof}

This is the dynamic counterpart of the static geometric structure: just as geometric spacing distributes decay rates uniformly across the log-decay axis at any fixed position (Theorem~\ref{thm:optimality}), the linear taper distributes the \emph{evolution} of these rates uniformly across the spectrum as position varies (Theorem~\ref{thm:dynamic_allocation}).

\begin{remark}[Discrete-Time Validity]
\label{rem:discretization}
In practice, position-varying gates yield the product $\prod_{j=t-s}^{t-1} w_{j,k}^{\mathrm{eff}}$ rather than the constant-gate idealization $(w_{t,k}^{\mathrm{eff}})^s$.  Since $t^{-\alpha_k}$ varies slowly relative to position (fractional change $\alpha_k/t$ per step), the multiplicative discrepancy is $1 + O(\alpha_k s/t)$, which is negligible in the relevant regime $s \ll t$; a detailed energy analysis is given in Theorem~\ref{thm:energy_scaling}.
\end{remark}

\subsubsection{Robustness and Extension to General Spectra}
\label{subsec:general_taper}

The linear taper $\alpha_k = (N-k)/(N-1)$ is derived under ideal conditions: exact equipartition (Condition~\ref{cond:equipartition} with $\epsilon = 0$) and exact geometric spacing.  We now establish that it is robust to both relaxations.

\begin{corollary}[Robustness under Approximate Equipartition]
\label{cor:approx_equipartition}
Under Condition~\ref{cond:equipartition} with slack $\epsilon \in [0, 1)$, the optimal taper exponents satisfy
\begin{align}
\label{eq:approx_poles}
    \left|\alpha_k^* - \frac{N-k}{N-1}\right| \le \frac{2\epsilon}{1-\epsilon} \cdot \frac{N-k}{N-1}, \qquad k = 1, \ldots, N.
\end{align}
In particular, the boundary exponents $\alpha_1^* = 1$ and $\alpha_N^* = 0$ are fixed by the problem constraints independently of~$\epsilon$.
\end{corollary}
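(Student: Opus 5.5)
The plan is to make precise what ``optimal taper exponents'' means under slack $\epsilon$. We treat each $\alpha_k^*$ as the exponent for which channel $k$'s timescale $\tau_k(t)=t^{\alpha_k^*}$ sits at a fixed cumulative information quantile of the dependency range $[1,t]$. Concretely, define the cumulative information profile $\mathcal{I}_t(\sigma) := \sum_{\text{octaves } [2^j,2^{j+1}]\subseteq[1,\sigma]} J(2^j)$, extended piecewise linearly in $\log_2\sigma$. The optimal allocation places channel $k$ at the point where $\mathcal{I}_t(\tau_k)/\mathcal{I}_t(t) = (N-k)/(N-1)$. This is equipartition of the information budget of Remark~\ref{rem:info_budget} across the $N-1$ inter-channel bands. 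In the exact case $\epsilon=0$, $\mathcal{I}_t$ is linear in $\log\sigma$, so the quantile condition gives $\log\tau_k = \frac{N-k}{N-1}\log t$. This recovers Theorem~\ref{thm:dynamic_allocation} and is the consistency check for the definition.

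\emph{Step 1 (boundary exponents).} Full coverage (Part~2 of Definition~\ref{def:allocation_desiderata}) forces $\tau_1(t)=t$ and $\tau_N(t)=1$, so $\alpha_1^*=1$ and $\alpha_N^*=0$ for every $\epsilon$. This is the ``in particular'' clause; it holds because the quantile levels $0$ and $1$ always correspond to the endpoints of $[1,t]$.

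\emph{Step 2 (sandwich).} Write $L=\log_2 t$, $x=\log_2\tau_k$, and $q=(N-k)/(N-1)$. Condition~\ref{cond:equipartition} gives, for $0\le x\le L$,
\begin{align*}
J_0(1-\epsilon)\,x \le \mathcal{I}_t(2^x) \le J_0(1+\epsilon)\,x .
\end{align*}
The same bounds hold at $x=L$. Plugging these into the quantile equation $\mathcal{I}_t(2^x)=q\,\mathcal{I}_t(2^L)$ yields
\begin{align*}
\frac{1-\epsilon}{1+\epsilon}\,qL \le x \le \frac{1+\epsilon}{1-\epsilon}\,qL .
\end{align*}
Dividing by $L$ gives $\alpha_k^* = x/L$ within $[\tfrac{1-\epsilon}{1+\epsilon}q,\ \tfrac{1+\epsilon}{1-\epsilon}q]$. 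The deviation $|\alpha_k^*-q|$ is then at most $\max\bigl(\tfrac{2\epsilon}{1+\epsilon},\tfrac{2\epsilon}{1-\epsilon}\bigr)q = \tfrac{2\epsilon}{1-\epsilon}q$, which is exactly \eqref{eq:approx_poles}. We can additionally intersect with $[0,1]$, since monotonicity of the quantile keeps $\alpha_k^*\in[0,1]$.

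The main obstacle is Step~0, namely justifying that the optimal exponent is the information-quantile allocation. Two further issues come with it. First, $\alpha_k^*$ should be $t$-independent, while the quantile point may drift with $t$ when $\epsilon>0$. I would handle this by defining $\alpha_k^*$ as $\log\tau_k/\log t$ at each $t$ (or its $\limsup$) and noting that the bound above is uniform in $t$. Second, partial octaves at the ends of the range need care. I would argue they contribute an $O(1/L)$ error that is absorbed by extending $\mathcal{I}_t$ linearly within each octave, since each octave's increment $J$ also obeys the two-sided bound.
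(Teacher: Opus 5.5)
Your proposal is correct and follows essentially the same route as the paper. The paper likewise defines an information CDF over log-timescale, places channel $k$ at the $(N-k)/(N-1)$ quantile, sandwiches that quantile with the $\frac{1\pm\epsilon}{1\mp\epsilon}$ bounds to get the $\frac{2\epsilon}{1-\epsilon}\cdot\frac{N-k}{N-1}$ deviation, and takes the boundary exponents from the full-coverage constraint; the only differences are cosmetic, since the paper uses the normalized integral $\int_0^u J(2^s)\,\mathrm{d}s$ on $[0,\lfloor \log_2 t\rfloor]$ instead of your piecewise-linear octave sum on $[0,\log_2 t]$, and it simply asserts the quantile-allocation principle that you correctly flag as the unproved modeling step.
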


\begin{proposition}[Spectrum-Adaptive Taper]
\label{thm:taper_general}
Let $p_1 < p_2 < \cdots < p_N$ be arbitrary learned log-decay rates.  Define the logarithmic offsets $c_k := \log p_k - \log p_1$ and the mean log-gap $\bar{G} := c_N / (N-1)$.  Then the unique taper vector $\alpha_k$ that restores geometric spacing of $\{p_k^{\mathrm{eff}}\}$ at a reference position $t_{\mathrm{ref}} > 1$ is
\begin{align}
\label{eq:general_taper}
    \alpha_k = \frac{N - k}{N - 1} + \frac{c_k - (k-1)\bar{G}}{\log t_{\mathrm{ref}}}, \qquad k = 1, \ldots, N.
\end{align}
The first term is the ideal linear taper; the correction term compensates for deviations of the learned spectrum from exact geometric spacing.
\end{proposition}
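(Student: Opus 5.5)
The idea is to reduce the statement to a linear condition on $\log p_k^{\mathrm{eff}}$ and then solve it channel by channel. By Definition~\ref{def:position_adaptive_ssm}, the effective log-decay at position $t$ is $p_k^{\mathrm{eff}}(t) = p_k\, t^{-\alpha_k}$. Taking logarithms gives
\begin{align*}
    \log p_k^{\mathrm{eff}}(t_{\mathrm{ref}}) = \log p_1 + c_k - \alpha_k \log t_{\mathrm{ref}},
\end{align*}
which is affine in the unknown $\alpha_k$. By Definition~\ref{def:allocation_desiderata}, \emph{geometric spacing} means that $\{\log p_k^{\mathrm{eff}}(t_{\mathrm{ref}})\}$ is an arithmetic progression in $k$.

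To make this determinate, I will pin down the progression by its endpoints, in the spirit of the Full Coverage requirement of Definition~\ref{def:allocation_desiderata} and the uniqueness argument of Theorem~\ref{thm:dynamic_allocation}. I will fix the boundary exponents $\alpha_1 = 1$ and $\alpha_N = 0$, matching Corollary~\ref{cor:approx_equipartition}. This means the slowest channel is rescaled by the full factor $t_{\mathrm{ref}}$ and the fastest channel is left unscaled. The two endpoints are then
\begin{align*}
    \log p_1^{\mathrm{eff}} &= \log p_1 - \log t_{\mathrm{ref}}, \\
    \log p_N^{\mathrm{eff}} &= \log p_1 + c_N = \log p_1 + (N-1)\bar{G}.
\end{align*}
The arithmetic-progression requirement forces the intermediate values by linear interpolation:
\begin{align*}
    \log p_k^{\mathrm{eff}} = \log p_1^{\mathrm{eff}} + \frac{k-1}{N-1}\bigl(\log t_{\mathrm{ref}} + (N-1)\bar{G}\bigr).
\end{align*}

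Next I equate this with the affine expression above and solve for $\alpha_k$, using $\log t_{\mathrm{ref}} > 0$ to divide:
\begin{align*}
    \alpha_k \log t_{\mathrm{ref}} = c_k + \log t_{\mathrm{ref}} - \frac{k-1}{N-1}\log t_{\mathrm{ref}} - (k-1)\bar{G}.
\end{align*}
Since $1 - \frac{k-1}{N-1} = \frac{N-k}{N-1}$, this simplifies to exactly \eqref{eq:general_taper}. As a consistency check, at $k=1$ we have $c_1 = 0$, and at $k = N$ we have $c_N = (N-1)\bar{G}$, so both correction terms vanish and the boundary values $\alpha_1 = 1$, $\alpha_N = 0$ are recovered. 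Another sanity check is exact geometric input spacing, $c_k = (k-1)\bar{G}$: there the correction vanishes identically and the formula reduces to Theorem~\ref{thm:dynamic_allocation}.

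For uniqueness, I note that every step above is an equivalence. Once the endpoints are fixed, an arithmetic progression is determined, and each $\alpha_k$ enters its own equation linearly with nonzero coefficient $\log t_{\mathrm{ref}}$, so each equation has exactly one solution.

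The main obstacle is justifying the choice of normalization. Geometric spacing alone leaves a two-parameter family of solutions, since any affine-in-$k$ shift of $\alpha_k$ also preserves the arithmetic progression. I will therefore state explicitly that uniqueness holds relative to the boundary anchoring: the slowest channel is fully tapered and the fastest is untapered, as in Definition~\ref{def:allocation_desiderata} and Corollary~\ref{cor:approx_equipartition}. Under that convention, the correction term is precisely the deviation of $c_k$ from the straight line through its endpoints.
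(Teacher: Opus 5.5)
Your proof is correct and follows the paper's route: write $\log p_k^{\mathrm{eff}} = \log p_k - \alpha_k \log t_{\mathrm{ref}}$, equate it to the target progression $\log p_1 + (k-1)\bar{G} - \frac{N-k}{N-1}\log t_{\mathrm{ref}}$ (exactly what your endpoint interpolation produces), and solve for each $\alpha_k$ separately. The one difference is that the paper simply posits this target, while you derive it from the anchoring $\alpha_1 = 1$, $\alpha_N = 0$. You also correctly observe that geometric spacing alone fixes $\alpha$ only up to an affine-in-$k$ shift, so your version makes explicit the normalization on which the paper's ``unique'' claim silently relies.
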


\begin{proof}
Geometric spacing at $t_{\mathrm{ref}}$ requires the effective log-decay rates $\log p_k^{\mathrm{eff}} = \log p_k - \alpha_k \log t_{\mathrm{ref}}$ to form an arithmetic progression anchored at $\log p_1$ with common difference $\bar{G}$.  Equating:
\begin{align*}
    \log p_k - \alpha_k \log t_{\mathrm{ref}} = \log p_1 + (k-1)\bar{G} - \frac{N-k}{N-1} \log t_{\mathrm{ref}}.
\end{align*}
Solving for $\alpha_k$ and substituting $c_k = \log p_k - \log p_1$ yields the stated formula.
\end{proof}


\subsection{Invariance Properties}
\label{subsec:invariance}

We prove two invariance properties that hold for \emph{any} compatible diagonal linear recurrence.  Together, they establish that the combined framework (Spectral Reparameterization + PoST) is a free improvement: it constrains the spectral structure without sacrificing any computational or representational property.

\begin{proposition}[Computational Invariance]
\label{thm:complexity_preservation}
Let $\calL$ be a PoST-compatible diagonal linear recurrence with per-layer forward-pass complexity $\Theta(T \cdot N \cdot d)$.  Then the PoST-modified architecture preserves the same per-layer complexity $\Theta(T \cdot N \cdot d)$, the same hidden-state shape $S_t \in \R^{N \times d}$, and the same autoregressive inference cost $\Theta(N \cdot d)$ per step.
\end{proposition}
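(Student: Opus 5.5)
The argument is a direct accounting of the three components of the PoST modification (Definition~\ref{def:post_compatible}). These are (i) the Spectral Reparameterization map $\Phi$ (Definition~\ref{def:post_map}) producing the base log-decays $d_{\mathrm{base}}$; (ii) the position-adaptive rescaling $\log w^{\mathrm{eff}}_{t,k} = \log w_{t,k}\, t^{-\alpha_k}$ (Definition~\ref{def:position_adaptive_ssm}); and (iii) the unchanged remainder of the layer, namely $F$, $G$ and the recurrence~\eqref{eq:general_recurrence}. I will show that (i) costs $O(N)$ per layer and is independent of $T$. I will show that (ii) costs $O(TN)$ per layer and $O(N)$ per step. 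Because the original layer is $\Theta(TNd)$ with $d\ge 1$, both additions are absorbed. The lower bound $\Omega(TNd)$ is inherited, since the modified layer still executes the original recurrence in full.

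\textbf{Step 1 (reparameterization).} $\Phi$ is a prefix sum: $p_k = \theta + \sum_{j<k}\zeta(\delta_j)$. It needs $N-1$ softplus evaluations and $N-1$ additions, so it costs $O(N)$ time and memory. It is computed once per forward pass, not per token, because $d_{\mathrm{base}}$ does not depend on $x_t$. The output is a vector in $\R^N$ with the same shape as the original $d_{\mathrm{base}}$, so $h(d_{\mathrm{base}},x_t)$ is evaluated exactly as before. The backward pass through the cumulative sum is likewise $O(N)$.

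\textbf{Step 2 (position-adaptive scaling).} For each $t$ and $k$, computing the scaled gate requires $t^{-\alpha_k} = \exp(-\alpha_k\log t)$ and one multiplication of the already-computed $\log w_{t,k}$. Here $\alpha_k$ is a fixed length-$N$ vector and $\log t$ is a scalar per position, so the total cost is $O(TN)$ extra operations. The scaled gate still lies in $(0,1)$, because $\log w_{t,k}<0$ and $t^{-\alpha_k}>0$. Hence $w^{\mathrm{eff}}_t$ is a valid decay vector of the same shape $\R^N$. The recurrence~\eqref{eq:general_recurrence} is applied with $w_t$ replaced by $w^{\mathrm{eff}}_t$, and $F,G$ are unchanged because they do not depend on $w_t$ (Definition~\ref{def:linear_recurrence}). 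It follows that $S_t\in\R^{N\times d}$ keeps its shape. Any chunked or parallel-scan kernel that consumes an arbitrary sequence of diagonal gates also runs unchanged.

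\textbf{Step 3 (inference).} In autoregressive decoding, the step at position $t$ needs only the counter $t$, which is a scalar and not part of the state. The extra work is the $O(N)$ rescaling from Step 2, added to the original $\Theta(Nd)$ update, which gives $\Theta(Nd)$ per step. The cached $p$ from Step 1 costs nothing per step.

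\textbf{Main obstacle.} The step that needs care is arguing that the position-dependent gates do not break the architecture's efficient parallel algorithm. Some kernels, such as Mamba-2's SSD or RetNet's chunkwise form, exploit time-invariant or scalar-structured decays. My plan is to observe that each target architecture (Definition~\ref{def:post_compatible}) already admits data-dependent, and hence time-varying, per-step gates. Its kernel therefore computes cumulative log-decay products $\sum_{j}\log w^{\mathrm{eff}}_{j,k}$ for arbitrary gate sequences. Substituting the rescaled log-gates therefore changes only the input values, not the algorithmic structure. For a purely time-invariant architecture (RetNet), the rescaled decay is still a per-position diagonal gate, and the GLA-style chunkwise algorithm handles it at the same asymptotic cost.
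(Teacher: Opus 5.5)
Your proposal is correct and follows essentially the same route as the paper: the prefix-sum reparameterization costs $O(N)$ and the element-wise position scaling costs $O(TN)$, so both are absorbed into the $\Theta(TNd)$ cost of the unchanged recurrence, and the state shape is untouched. You go beyond the paper's two-line proof by explicitly treating the per-step inference cost and compatibility with chunked and parallel-scan kernels; the paper only asserts that rescaling $\log w_t$ by a precomputed $s \in \R^{T \times N}$ is an operation every diagonal recurrence already performs.
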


\begin{proof}
The Spectral Reparameterization map replaces the parameterization of $d_{\mathrm{base}} \in \R^N$, not its dimensionality: a prefix sum over $N$ scalars is $O(N)$, absorbed into the $\Omega(N \cdot d)$ projection cost.  Position-adaptive scaling multiplies $\log w_t$ element-wise by a precomputed matrix $s \in \R^{T \times N}$, an operation every diagonal linear recurrence already performs, so neither the complexity class nor the state dimensionality changes.
\end{proof}

\begin{proposition}[Expressiveness Preservation: Surjectivity]
\label{thm:surjectivity}
Let $\Theta_{\mathrm{orig}} = \R^N$ denote the parameter space of independently initialized base decay rates, and let $\Theta_{\mathrm{PoST}} = \R \times \R^{N-1}$ denote the PoST parameter space $(\theta, \delta_1, \ldots, \delta_{N-1})$.  The PoST map $\phi: \Theta_{\mathrm{PoST}} \to \R^N_<$ defined by $\phi(\theta, \delta)_k = \theta + \sum_{j<k} \softplus(\delta_j)$ is a surjection onto the set of strictly ordered vectors
\[
    \R^N_< := \{p \in \R^N : p_1 < p_2 < \cdots < p_N\}.
\]
In particular, for any target decay spectrum $p^* \in \R^N_<$, there exist PoST parameters $(\theta^*, \delta^*)$ such that $\phi(\theta^*, \delta^*) = p^*$.
\end{proposition}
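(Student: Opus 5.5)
The plan is to exhibit an explicit right inverse of $\phi$. Fix a target $p^* \in \R^N_<$, so $p_1^* < p_2^* < \cdots < p_N^*$. Set $\theta^* := p_1^*$ and, for each $k \in [N-1]$, define the positive increment $g_k := p_{k+1}^* - p_k^* > 0$. The only nonlinear ingredient is $\softplus$, so the key fact to record is that $\zeta(x) = \log(1+e^x)$ is a bijection from $\R$ onto $(0,\infty)$. It is continuous and strictly increasing, since $\zeta'(x) = e^x/(1+e^x) > 0$. It also satisfies $\zeta(x) \to 0$ as $x \to -\infty$ and $\zeta(x) \to \infty$ as $x \to \infty$. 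Its inverse is $\zeta^{-1}(y) = \log(e^y - 1)$, which is well defined precisely because $y > 0$ gives $e^y - 1 > 0$.

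Define $\delta_k^* := \zeta^{-1}(g_k) = \log\bigl(e^{p_{k+1}^* - p_k^*} - 1\bigr)$, which is a finite real number for every $k$. Then verify $\phi(\theta^*, \delta^*) = p^*$ by a telescoping sum. For each $k$,
\begin{align*}
\phi(\theta^*,\delta^*)_k = p_1^* + \sum_{j<k} \zeta(\delta_j^*) = p_1^* + \sum_{j=1}^{k-1} (p_{j+1}^* - p_j^*) = p_k^*.
\end{align*}
The case $k=1$ is the empty sum. Alternatively, one can argue by induction on $k$ using the recurrence of Definition~\ref{def:post_map}.

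For completeness, I also note the reverse inclusion: $\phi$ does map into $\R^N_<$, because each increment $\zeta(\delta_j)$ is positive, as already observed after Definition~\ref{def:post_map}. So the image is exactly $\R^N_<$. In fact $\phi$ is a bijection (indeed a diffeomorphism) onto it, since the construction above is forced: $\theta$ must equal $p_1$, and each $\delta_k$ must equal $\zeta^{-1}(p_{k+1}-p_k)$.

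There is no real obstacle here. The only point needing care is the range of $\softplus$: it attains every positive value but never zero. This is why surjectivity holds onto strictly ordered vectors but not onto non-strictly ordered ones, and it should be stated explicitly. A brief remark can also note that if one additionally requires $\theta > 0$ (valid decay rates), then the image is $\{p \in \R^N_< : p_1 > 0\}$ by the same argument.
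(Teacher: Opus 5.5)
Your proposal is correct and follows essentially the same argument as the paper: set $\theta^* = p_1^*$ and $\delta_j^* = \softplus^{-1}(p_{j+1}^* - p_j^*)$, which is well defined because strict ordering makes each gap positive, then telescope. Your additional remarks (that $\phi$ lands in $\R^N_<$, that it is in fact a bijection, and the variant with $\theta > 0$) are accurate extras that the paper does not spell out.
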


\begin{proof}
Given $p^* \in \R^N_<$, set $\theta^* = p^*_1$ and $\delta^*_j = \softplus^{-1}(p^*_{j+1} - p^*_j)$ for $j = 1, \ldots, N-1$.  The inverse $\softplus^{-1}(y) = \log(\exp(y) - 1)$ is well-defined for $y > 0$, which is guaranteed since $p^*$ is strictly ordered.  Then $\phi(\theta^*, \delta^*) = p^*$.
\end{proof}

\begin{corollary}[No Loss of Representational Power]
\label{cor:no_loss}
Unless the optimal base decay rates are non-ordered (i.e., $d^*_{\mathrm{base}} \notin \R^N_<$), the PoST-modified architecture can represent any function that the original architecture can represent.  When $d^*_{\mathrm{base}} \notin \R^N_<$, PoST \emph{intentionally} restricts the parameter space to prevent minimum gap collapse (Lemma~\ref{thm:collapse}).
\end{corollary}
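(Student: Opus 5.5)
The corollary follows from Proposition~\ref{thm:surjectivity}, used as a reduction. The key point is that the PoST modification changes only the parameterization of $d_{\mathrm{base}}$ (Definition~\ref{def:post_compatible}); the maps $h$, $F$, $G$ and every other parameter are left unchanged. I would therefore describe the function class of the original architecture as the image of the map
\[
(d_{\mathrm{base}}, \omega) \mapsto f_{d_{\mathrm{base}},\omega},
\]
where $\omega$ collects all non-decay parameters. The PoST-modified architecture realizes the functions $f_{\phi(\theta,\delta),\omega}$ for $(\theta,\delta) \in \R\times\R^{N-1}$. The claim then reduces to comparing $\R^N$ with the image of $\phi$.

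\textbf{Ordered case.} Fix any target function realized by parameters $(d^*_{\mathrm{base}}, \omega^*)$ with $d^*_{\mathrm{base}} \in \R^N_<$. By Proposition~\ref{thm:surjectivity}, there exist $(\theta^*,\delta^*)$ with $\phi(\theta^*,\delta^*) = d^*_{\mathrm{base}}$. Keeping $\omega^*$ unchanged gives the same decay gates $w_t$ at every position, hence the same recurrence~\eqref{eq:general_recurrence} and the same outputs $y_t$. Because this holds pointwise for every input sequence, the two functions are identical.

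\textbf{Unordered case.} Here the content is only interpretive. The image of $\phi$ is exactly $\R^N_<$: every output of $\phi$ is strictly increasing because $\softplus>0$, and Proposition~\ref{thm:surjectivity} gives the reverse inclusion. So vectors outside $\R^N_<$ are excluded by construction. I would then cite Proposition~\ref{thm:nondegeneracy} and Lemma~\ref{thm:collapse}, which show that this exclusion is precisely the mechanism preventing $\mu_{ij}\to 1$.

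\textbf{Main obstacle.} The hard part is the scope of ``can represent.'' Two issues need care.
\begin{itemize}
\item \emph{Channel relabeling.} If $d^*_{\mathrm{base}}$ has distinct but unsorted entries, a permutation of channels, applied consistently to $F$ and $G$ (and hence to their parameters), sorts it into $\R^N_<$ without changing the function. I would state this as a remark: the genuinely lost set is only the vectors with ties, where channels coincide.
\item \emph{Position-adaptive factor.} I would read the corollary as concerning the reparameterization alone. The exact-recovery argument above applies to the Spectral Reparameterization component; the position-adaptive scaling would need to be set aside, or treated as part of $h$, for the equality of functions to hold literally.
\end{itemize}
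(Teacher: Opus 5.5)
Your proposal is correct and follows essentially the paper's own argument: Proposition~\ref{thm:surjectivity} realizes any ordered target $d^*_{\mathrm{base}} \in \R^N_<$ exactly while leaving every other parameter untouched, and the exclusion of non-ordered vectors is the deliberate design restriction tied to Lemma~\ref{thm:collapse}. Your caveat that this exact-recovery argument covers only the Spectral Reparameterization, not the position-adaptive factor, is right and makes explicit a restriction the paper's proof leaves unstated; your relabeling remark, however, needs an extra assumption that you should state, namely that $F$, $G$, and $h$ are equivariant under a consistent permutation of channels and their parameters, which Definition~\ref{def:linear_recurrence} does not provide when channels are coupled through head or group structure.
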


\begin{proof}
By Proposition~\ref{thm:surjectivity}, the Spectral Reparameterization map is a surjection onto $\R^N_<$.  Therefore, for any target spectrum $p^* \in \R^N_<$, the parameterization can realize it exactly.  The only functions excluded are those requiring a non-ordered spectrum $p^* \notin \R^N_<$; this restriction is by design, as non-ordered spectra correspond to degenerate configurations eliminated by the minimum gap collapse analysis (Lemma~\ref{thm:collapse}).
\end{proof}

\section{Instantiations}
\label{sec:post}

PoST applies to any PoST-compatible diagonal linear recurrence (Definition~\ref{def:post_compatible}).  In this section, we provide a universal drop-in module (Section~\ref{subsec:generic_post}) and then instantiate PoST on five concrete architectures (Mamba-2, RWKV-7, Gated DeltaNet, GLA, and RetNet), with GLA and RetNet sharing an identical reparameterization under PoST (Section~\ref{subsec:arch_specific}).


\subsection{Architecture-Agnostic PoST Module}
\label{subsec:generic_post}

For any PoST-compatible diagonal linear recurrence, the following module provides a universal drop-in replacement for the base decay parameterization:

\begin{algorithm}[!ht]
\caption{PoST Decay Module (architecture-agnostic drop-in)}
\label{alg:post_generic}
\begin{algorithmic}[1]
\Require Base log-decay $d_{\mathrm{base}} \in \R^N$ (learnable or data-dependent), position index $t \ge 1$.
\Ensure Position-adaptive decay factor $w_t \in (0, 1)^N$
\Statex
\State {\color{blue} /* Step 1: Spectral Reparameterization: enforce geometric ordering (Proposition~\ref{thm:nondegeneracy}) */}
\State $g_j \gets \softplus(\delta_j)$ for $j = 1, \ldots, N-1$ \Comment{Definition~\ref{def:post_map}}
\State $p_k \gets \theta + \sum_{j=1}^{k-1} g_j$ for $k = 1, \ldots, N$ \Comment{$p_1 < \cdots < p_N$, Proposition~\ref{thm:surjectivity}}
\State $d_{\mathrm{base},k} \gets -\exp(p_k)$ for $k = 1, \ldots, N$ \Comment{Theorem~\ref{thm:optimality}: geometric spacing}
\Statex
\State {\color{blue} /* Step 2: Position-adaptive scaling (Proposition~\ref{thm:complexity_preservation}: $O(N)$ overhead) */}
\State $\bar{G} \gets (p_N - p_1) / (N - 1)$ \Comment{Mean spectral gap}
\State $\alpha_k \gets \clamp\bigl(\tfrac{N-k}{N-1} + \tfrac{(p_k - p_1) - (k-1)\bar{G}}{\log T_{\mathrm{train}}}, 0, 1\bigr)$ for $k = 1, \ldots, N$ \Comment{Proposition~\ref{thm:taper_general}}
\State $d_{\mathrm{eff},k} \gets d_{\mathrm{base},k} / t^{\alpha_k}$ \Comment{No length dependence}
\Statex
\State {\color{blue} /* Step 3: Compute decay factor */}
\State $w_{t,k} \gets \exp(d_{\mathrm{eff},k})$ \Comment{$w_t \in (0, 1)^N$}
\State \Return $w_t$
\end{algorithmic}
\end{algorithm}

This module can be inserted into any architecture that computes $\diag(w_t) \cdot S_{t-1}$ as part of its recurrence.  The only requirement is that the decay operates channel-wise (diagonally), which is satisfied by all PoST-compatible architectures (Definition~\ref{def:post_compatible}).


\subsection{Mamba-2 PoST}
\label{subsec:mamba_post}

We now instantiate PoST on the Mamba-2 architecture~\citep{dg24}, our primary experimental platform.  This requires understanding the SSM-specific mechanism by which Mamba-2 computes its decay gates.

\paragraph{SSM discretization.}
Mamba-2 arrives at the diagonal linear recurrence~\eqref{eq:general_recurrence} via a continuous-time ODE $\dot{h} = \Lambda h + Bx$ with diagonal $\Lambda = \diag(\lambda_1, \ldots, \lambda_N)$, $\lambda_k < 0$, discretized with a Zero-Order Hold step $\Delta > 0$.  This yields decay gates $w_{k,t} = e^{\lambda_k \cdot \Delta_{k,t}}$, where $\Delta_{k,t} = \softplus(\mathtt{dt\_bias}_k + \mathtt{dt\_proj}(x_t)_k)$ is input-dependent.  The decay rate is determined entirely by the product $(-\lambda_k) \cdot \Delta_{k,t}$, i.e.\ the log-decay $p_k$ times the modulation factor.

\paragraph{Structured State Space Duality (SSD).}
Mamba-2~\citep{dg24} connects diagonal linear recurrences to structured attention through the algebraic theory of semiseparable matrices.  The input--output map of a length-$L$ sequence can be written as $y = Mx$, where $M$ is $N$-semiseparable.  Efficient SSD computation \textbf{requires that $\lambda_k$ be constant within each chunk} to maintain the semiseparable factorization.

\paragraph{Implementation.}
The modification requires two changes to a standard Mamba-2 forward pass:
\begin{itemize}
    \item {\bf Part 1.} Replace the independent $A$ parameterization with \emph{Spectral Reparameterization} (Definition~\ref{def:post_map}), a cumulative sum of Softplus-transformed gap parameters.
    \item {\bf Part 2.} Compute the position-adaptive scale factor $s_{t,k} = t^{-\alpha_k}$ and pass it to the SSD kernel, which multiplies $A$ by $s$ when computing the decay: $\bar{A}_{k,t} = \exp(A_k \cdot s_{t,k} \cdot \Delta_{k,t})$.  Since $A$ only enters the decay gate (the input gain $\Delta_t \cdot B_t \cdot x_t$ and the $D$-skip $D \cdot x_t$ are independent of $A$), no compensation is needed.
\end{itemize}

\paragraph{Training and inference.}
The same mechanism applies during both training and inference.  During generation, the position counter $t$ increments naturally with each new token; the spectral allocation grows automatically without needing to know the total sequence length in advance.

Algorithm~\ref{alg:post_forward} gives the complete forward pass of a single Mamba-2 PoST layer, highlighting the two PoST modifications: (1)~the Spectral Reparameterization for computing $A$ (lines~\ref{line:post_A_start}--\ref{line:post_A_end}), and (2)~the position-adaptive $A$-scaling (lines~\ref{line:a_scale_start}--\ref{line:a_scale_end}).

\begin{algorithm}[!ht]
\caption{Mamba-2 PoST Layer Forward Pass}
\label{alg:post_forward}
\begin{algorithmic}[1]
\Require Input $u \in \R^{B \times L \times D}$, learnable parameters $\theta \in \R$, $\delta \in \R^{N-1}$, $W_{\mathrm{in}} \in \R^{D \times d_{\mathrm{proj}}}$, $W_{\mathrm{conv}} \in \R^{d_{\mathrm{xBC}} \times w}$, $b_{\mathrm{dt}} \in \R^{H}$, $W_{\mathrm{out}} \in \R^{d_{\mathrm{inner}} \times D}$, $D_{\mathrm{skip}} \in \R^{H}$, training length $T_{\mathrm{train}}$, position offset $t_0 \ge 0$.
\Ensure Output $o \in \R^{B \times L \times D}$
\Statex
\State {\color{blue} /* Spectral Reparameterization for $A$ (Definition~\ref{def:post_map}) */} \label{line:post_A_start}
\State $g_j \gets \softplus(\delta_j)$ for $j = 1, \ldots, N-1$ \Comment{Proposition~\ref{thm:nondegeneracy}: gaps $> 0$}
\State $p_k \gets \theta + \sum_{j=1}^{k-1} g_j$ for $k = 1, \ldots, N$ \Comment{Strict ordering: $p_1 < \cdots < p_N$}
\State $A_k \gets -\exp(p_k)$ for $k = 1, \ldots, N$ \label{line:post_A_end} \Comment{Theorem~\ref{thm:optimality}: geometric spacing}
\Statex
\State {\color{blue} /* Standard Mamba-2 input projection and causal convolution~\citep{dg24} */}
\State $[z, x_{\mathrm{BC}}, \mathtt{dt}_{\mathrm{raw}}] \gets \mathrm{split}(u \cdot W_{\mathrm{in}})$ \Comment{$z \in \R^{B \times L \times d_{\mathrm{inner}}}$}
\State $x_{\mathrm{BC}} \gets \mathrm{CausalConv1d}(x_{\mathrm{BC}}, W_{\mathrm{conv}})$ \Comment{Activation: SiLU}
\State $[x, B, C] \gets \mathrm{split}(x_{\mathrm{BC}})$ \Comment{$x \in \R^{B \times L \times d_{\mathrm{inner}}}$, $B, C \in \R^{B \times L \times G \times d_{\mathrm{state}}}$}
\Statex
\State {\color{blue} /* Input-dependent discretization */}
\State $\Delta_{t} \gets \softplus\bigl(\mathtt{dt}_{\mathrm{raw}} + b_{\mathrm{dt}}\bigr)$ \Comment{$\Delta \in \R^{B \times L \times H}$, per-token per-head}
\Statex
\State {\color{blue} /* Position-adaptive $A$-scaling (Definition~\ref{def:position_adaptive_ssm}) */} \label{line:a_scale_start}
\State $\bar{G} \gets (p_N - p_1) / (N - 1)$ \Comment{Mean spectral gap}
\State $\alpha_k \gets \clamp\bigl(\tfrac{N-k}{N-1} + \tfrac{(p_k - p_1) - (k-1)\bar{G}}{\log T_{\mathrm{train}}}, 0, 1\bigr)$ for $k = 1, \ldots, N$ \Comment{Proposition~\ref{thm:taper_general}}
\State $\mathbf{t} \gets [t_0 + 1, t_0 + 2, \ldots, t_0 + L]$ \Comment{$1$-indexed positions}
\State $s_{l,k} \gets \mathbf{t}_l^{-\alpha_k}$ for $l \in [L], k \in [N]$ \label{line:a_scale_end} \Comment{$s \in \R^{L \times N}$, Eq.~\eqref{eq:position_adaptive_decay}}
\Statex
\State {\color{blue} /* Structured state space dual scan (decay scaled by $s$, input unchanged) */}
\State $\bar{A}_{k,t} \gets \exp(A_k \cdot s_{t,k} \cdot \Delta_{k,t})$ for all $k, t$ \Comment{$A$ scaled by $s$: only decay affected}
\State $y \gets \SSD(x, \bar{A}, B, C, D_{\mathrm{skip}})$ \Comment{Standard scan, no compensation needed}
\Statex
\State {\color{blue} /* Gated output projection */}
\State $o \gets W_{\mathrm{out}}^\top \bigl(\mathrm{RMSNorm}(y) \circ \sigma(z)\bigr)$ \Comment{$\sigma$: SiLU gate}
\State \Return $o$
\end{algorithmic}
\end{algorithm}

\paragraph{Complexity analysis.}
The position-adaptive scaling (lines~\ref{line:a_scale_start}--\ref{line:a_scale_end}) adds $O(L \cdot N)$ element-wise operations atop the standard Mamba-2 forward pass of $O(L \cdot N \cdot d_{\mathrm{state}})$.  Since $d_{\mathrm{state}} \ge 64$ in practice, the overhead is negligible ($<1\%$ wall-clock time).  The Spectral Reparameterization $A$ computation (lines~\ref{line:post_A_start}--\ref{line:post_A_end}) replaces a table lookup with a cumulative sum of $N-1$ scalars, which is $O(N)$ per layer.

Additional analysis of impulse response invariance, state energy scaling, and normalization compatibility under $A$-scaling is deferred to Appendix~\ref{app:theory_details}.


\subsection{RWKV-7 PoST}
\label{subsec:rwkv_post}

We now instantiate PoST on RWKV-7~\citep{pzg+25}, a non-SSM gated linear recurrence whose sigmoid decay gate and per-channel ($N{=}1$) state dimension distinguish it structurally from Mamba-2.

\paragraph{Decay mechanism.}
RWKV-7 computes per-channel log-decay as
\[
    w_{t,k} = -\lambda \cdot \sigma(\ell_{t,k}),
    \qquad
    \ell_{t,k} = w_{0,k} + [\mathrm{LoRA}(x_t)]_k,
    \qquad
    \lambda = e^{-1/2} \approx 0.6065,
\]
where $w_{t,k} < 0$ is the per-step log-decay factor (the recurrence multiplies state by $e^{w_{t,k}}$), $w_{0,k}$ is a learnable per-channel logit, $[\mathrm{LoRA}(x_t)]_k$ is a data-dependent modulation (\texttt{bias=True}), and $\sigma$ is the sigmoid.  The baseline initializes $w_0$ via a hand-crafted power-law curve with no ordering guarantee.

\paragraph{Sigmoid-gated taper.}
Since RWKV-7's decay passes through a sigmoid gate rather than a bare exponential, the log-timescale proxy for the spectrum-adaptive taper (Proposition~\ref{thm:taper_general}) acquires a nonlinear correction.

\begin{corollary}[Sigmoid-Gated Taper]
\label{cor:sigmoid_taper}
Let $w_{0,1} < w_{0,2} < \cdots < w_{0,C}$ be PoST-parameterized base logits and suppose the per-step log-decay factor is
$w_{t,k} = -\lambda \cdot \sigma\!\bigl(w_{0,k} + f_k(x_t)\bigr)$,
where $\lambda > 0$ is a fixed scale, $\sigma$ is the sigmoid, and $f_k$ is a data-dependent modulation.  Then the log-timescale proxy required by Proposition~\ref{thm:taper_general} is
\begin{align}
\label{eq:sigmoid_pk}
    p_k = \log \sigma(w_{0,k}) = w_{0,k} - \log(1 + e^{w_{0,k}}),
\end{align}
and the spectrum-adaptive taper~\eqref{eq:general_taper} is evaluated with cumulative offsets $c_k = \log \sigma(w_{0,k}) - \log \sigma(w_{0,1})$.
\end{corollary}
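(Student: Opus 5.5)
The plan is to reduce the sigmoid-gated case to the bare-exponential setting of Proposition~\ref{thm:taper_general} by identifying the channel's effective log-decay rate and taking its logarithm as the log-timescale proxy.

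\emph{Step 1: identify the rate.} At zero modulation ($f_k \equiv 0$), the per-step multiplier on the state is $e^{w_{t,k}} = \exp\bigl(-\lambda\,\sigma(w_{0,k})\bigr)$. In the notation of Section~\ref{subsec:linear_recurrence}, the log-decay rate is therefore
\begin{align*}
    p_k^{\mathrm{rate}} = -w_{t,k} = \lambda\,\sigma(w_{0,k}),
\end{align*}
and the timescale is $\tau_k = 1/(\lambda\sigma(w_{0,k}))$ (Definition~\ref{def:timescale}). Proposition~\ref{thm:taper_general} only involves the logarithms of the rates, through the offsets $c_k = \log p_k - \log p_1$. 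Taking logs gives $\log p_k^{\mathrm{rate}} = \log\lambda + \log\sigma(w_{0,k})$. The constant $\log\lambda$ cancels in every difference $c_k$, so we may take $p_k := \log\sigma(w_{0,k})$ as the log-timescale proxy. The closed form $\log\sigma(x) = x - \log(1+e^x)$ is the elementary identity $\sigma(x) = e^x/(1+e^x)$.

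\emph{Step 2: check the hypotheses of Proposition~\ref{thm:taper_general}.} That proposition requires strictly ordered rates. Here $\log\sigma$ is strictly increasing, since its derivative is $1-\sigma(x) > 0$. The PoST ordering $w_{0,1} < \cdots < w_{0,C}$ then transfers to $\log\sigma(w_{0,1}) < \cdots < \log\sigma(w_{0,C})$, and hence to the rates $p_k^{\mathrm{rate}}$ themselves. We then substitute these log-rates into formula~\eqref{eq:general_taper}. This gives $c_k = \log\sigma(w_{0,k}) - \log\sigma(w_{0,1})$, with $\bar G = c_C/(C-1)$.

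\emph{Step 3: justify applying the taper to the modulated gate.} By Definition~\ref{def:position_adaptive_ssm}, position-adaptive scaling divides the log-decay $w_{t,k}$ by $t^{\alpha_k}$, and this factor multiplies whatever data-dependent rate is present. The taper is calibrated on the base spectrum, with $f_k$ treated as a perturbation, in the spirit of Remark~\ref{rem:modulation_preserves}.

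The main obstacle is this last point. Because of the sigmoid, $f_k$ does not enter as a pure multiplicative shift: $\log\sigma(w_{0,k}+f_k) - \log\sigma(w_{0,k})$ depends on $k$. The proxy is therefore exact only at $f_k = 0$, or when modulation shifts all logits equally in the saturated regime $w_{0,k} \ll 0$, where $\log\sigma(x) \approx x$. I would state the corollary as a statement about the base-spectrum proxy, which is what Proposition~\ref{thm:taper_general} needs. I would note that in the regime $w_{0,k} \ll 0$ the proxy reduces to $w_{0,k}$, and in that regime PoST's arithmetic spacing of logits gives geometric spacing of rates directly.
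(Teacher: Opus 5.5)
Your argument is correct and follows the paper's proof. Like the paper, you set $f_k = 0$, read off the rate $\lambda\,\sigma(w_{0,k})$ (equivalently $\tau_k = 1/(\lambda\,\sigma(w_{0,k}))$), and observe that the shared constant $\log\lambda$ cancels in the offsets $c_k$, leaving $\log\sigma(w_{0,k})$ as the proxy. Your monotonicity check via $(\log\sigma)' = 1-\sigma > 0$, your separation of the rate $p_k^{\mathrm{rate}}$ from the proxy (which avoids reading $c_k = \log p_k - \log p_1$ as a logarithm of the negative number $\log\sigma(w_{0,k})$), and your caveat about $f_k \neq 0$ are sound additions that the paper's short proof leaves implicit.
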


\begin{proof}
Setting $f_k = 0$, the base timescale of channel~$k$ is $\tau_k = 1/\bigl(\lambda \cdot \sigma(w_{0,k})\bigr)$, so $\log \tau_k = -\log \lambda - \log \sigma(w_{0,k})$.  The constant $-\log \lambda$ is shared across all channels.  Matching the convention of Proposition~\ref{thm:taper_general}, where inter-channel offsets $c_k = \log p_k - \log p_1$ enter the taper formula, gives $p_k = \log \sigma(w_{0,k})$, from which the cumulative offsets follow.
\end{proof}

\begin{remark}[Exponential-gate limit and additive logit-space scaling]
\label{rem:exp_gate_limit}
When $w_{0,k} \ll 0$ (slow channels), $\sigma(w_{0,k}) \approx e^{w_{0,k}}$, so $p_k \approx w_{0,k}$ and the sigmoid correction vanishes, recovering the exponential-gate case used by Mamba-2.  This motivates the practical implementation: rather than scaling the log-decay \emph{outside} the sigmoid (which would compress the content-gate modulation range), we subtract $\alpha_k \log t$ \emph{inside} the logit:
\begin{align*}
    \ell_{\mathrm{eff},t,k} = \underbrace{w_{0,k}}_{\text{base logit}} + \underbrace{f_k(x_t)}_{\text{content gate}} - \underbrace{\alpha_k \log t}_{\text{PoST taper}},
    \qquad
    w_{t,k} = -\lambda \cdot \sigma(\ell_{\mathrm{eff},t,k}).
\end{align*}
For slow channels, $\sigma(\ell) \approx e^{\ell}$, so $w_{t,k} \approx -\lambda\, e^{w_{0,k} - \alpha_k\log t}$, yielding the effective timescale $\tau_k(t) \propto t^{\alpha_k}$ and matching the exponential-gate theory exactly.  Fast channels ($\ell \gg 0$) are sigmoid-saturated and largely unaffected, maintaining a constant short-range timescale~$\tau \approx 1/\lambda$.
\end{remark}

\paragraph{Implementation.}
PoST replaces $w_0$ with Spectral Reparameterization (Definition~\ref{def:post_map}) and applies the taper via additive logit-space scaling:
\begin{equation}
    \ell_{\mathrm{eff},t,k}
    = w_{0,k} + [\mathrm{LoRA}(x_t)]_k - \alpha_k \log t,
    \qquad
    w_{t,k} = -\lambda \cdot \sigma(\ell_{\mathrm{eff},t,k}),
    \label{eq:rwkv_post_additive}
\end{equation}
where the taper exponents use $p_k = \log\sigma(w_{0,k})$ via Corollary~\ref{cor:sigmoid_taper}. The LoRA bias is initialized to a structural zigzag pattern for intra-head micro-allocation; the per-channel macro operating point is governed by~$w_0$.

\paragraph{Macro-micro decomposition.}
Unlike Mamba-2, which uses a large state dimension ($N{=}128$) per head, RWKV-7 operates with an $N{=}1$ scalar state per channel, relying on intra-head timescale variance for representation capacity.  We formalize this by separating the spectrum into \emph{macro-allocation} (the strictly ordered base logits $w_{0,1} < \cdots < w_{0,C}$ governed by the PoST map) and \emph{micro-allocation} (the structural zigzag bias retained from vanilla RWKV-7).  The taper exponents $\alpha_k$ are derived from the macro-anchors alone ($\alpha_1 = 1$, $\alpha_C = 0$).

\paragraph{Initialization.}
The logit-space cumsum $w_{0,k} = \theta_w + \sum_{j<k} \softplus(\delta_{w,j})$ operates in logit space rather than $\log|w|$ space.  Since $\log\sigma(\ell) \approx \ell$ for $\ell \ll -1$, this achieves the same geometric coverage with negligible error while avoiding numerically unstable inverse-sigmoid computations.  The PoST map parameters are initialized so that the resulting logits are linearly spaced between two analytically determined endpoints:
\begin{align}
\label{eq:rwkv_init}
    w_{0,1}^{\mathrm{init}} = \sigma^{-1}\!\bigl((\lambda \cdot T_{\mathrm{train}})^{-1}\bigr),
    \qquad
    w_{0,C}^{\mathrm{init}} = 0.5,
    \qquad
    w_{0,k}^{\mathrm{init}} = w_{0,1}^{\mathrm{init}} + \frac{k-1}{C-1}\bigl(w_{0,C}^{\mathrm{init}} - w_{0,1}^{\mathrm{init}}\bigr),
\end{align}
where $\sigma^{-1}(x) = \log(x/(1{-}x))$ is the logit function.  This ensures:
\begin{itemize}
  \item \emph{Slow channel} ($k{=}1$, $\alpha_1{=}1$): $\sigma(w_{0,1}) = 1/(\lambda \cdot T_{\mathrm{train}})$, so $\tau_1(t) = t/(\lambda \cdot \sigma(w_{0,1})) = t \cdot T_{\mathrm{train}}$; at $t = T_{\mathrm{train}}$, $\tau_1 \approx T_{\mathrm{train}}^2 / T_{\mathrm{train}} = T_{\mathrm{train}}$.
  \item \emph{Fast channel} ($k{=}C$, $\alpha_C{=}0$): $\sigma(0.5) \approx 0.622$, so $\tau_C = 1/(\lambda \cdot 0.622) \approx 2.65$ (constant $\approx 1$ step), matching vanilla RWKV-7.
\end{itemize}
The LoRA bias is initialized to the vanilla zigzag $b_n = 2.5 \cdot z_n$, where $z_n = u_n|u_n|$ with $u_n = \bigl((n \bmod d_h) - \tfrac{d_h-1}{2}\bigr)\big/\tfrac{d_h-1}{2}$ is the signed-quadratic intra-head pattern with head dimension~$d_h$.

Algorithm~\ref{alg:rwkv_post} gives the complete time-mixing forward pass; all other RWKV-7 components are unchanged.

\begin{algorithm}[!ht]
\caption{RWKV-7 PoST: Time-Mixing Forward Pass}
\label{alg:rwkv_post}
\begin{algorithmic}[1]
\Require Input $x \in \R^{B \times T \times C}$, PoST parameters $\theta_w \in \R$, $\delta_w \in \R^{C-1}$, LoRA weights, position offset $t_0 \ge 0$.
\Ensure Output $y \in \R^{B \times T \times C}$
\Statex
\State {\color{blue} /* Step 1: Spectral Reparameterization (Definition~\ref{def:post_map}) */}
\State $g_j \gets \softplus(\delta_{w,j})$ for $j = 1, \ldots, C-1$
\State $w_{0,k} \gets \theta_w + \sum_{j=1}^{k-1} g_j$ for $k = 1, \ldots, C$ \Comment{$w_{0,1} < \cdots < w_{0,C}$}
\Statex
\State {\color{blue} /* Step 2: Taper exponents (Corollary~\ref{cor:sigmoid_taper}) */}
\State $p_k \gets \log\sigma(w_{0,k})$; \quad $\bar{G} \gets (p_C - p_1)/(C-1)$
\State $\alpha_k \gets \clamp\bigl(\tfrac{C-k}{C-1} + \tfrac{(p_k - p_1)-(k-1)\bar{G}}{\log T_{\mathrm{train}}},0,1\bigr)$ for $k=1,\ldots,C$
\Statex
\State {\color{blue} /* Step 3: Additive logit-space position scaling (Eq.~\ref{eq:rwkv_post_additive}) */}
\State $x_w \gets x + (x_{\mathrm{prev}} - x) \circ \mu_w$ \Comment{Token-shift mixing}
\State $\ell_{l,k} \gets w_{0,k} + [\mathrm{LoRA}(x_{w,l})]_k - \alpha_k\cdot\log(t_0+l)$ for $l \in [T]$
\State $w_{l,k} \gets -\lambda\cdot\sigma(\ell_{l,k})$ \Comment{$w \in (-\lambda,0)^{B\times T\times C}$}
\Statex
\State {\color{blue} /* Step 4: Standard RWKV-7 WKV recurrence */}
\State Compute $r, k, v, a, \hat{\kappa}$ via standard RWKV-7 projections
\State $y \gets \mathrm{WKV7}(r, w, k, v, \hat{\kappa}, a)$
\State \Return $y$
\end{algorithmic}
\end{algorithm}


\subsection{Gated DeltaNet PoST}
\label{subsec:gdn_post}

We additionally instantiate PoST on Gated DeltaNet~\citep{ykh25}, demonstrating compatibility with architectures utilizing matrix-valued linear attention with data-dependent forget gates.

\paragraph{Decay mechanism.}
Gated DeltaNet uses an exponential forget gate with data-dependent modulation to update its matrix-valued hidden state. The log-decay mechanism operates directly in the log space, similar to Mamba-2.

\paragraph{Implementation.}
Spectral Reparameterization applies identically to Mamba-2 (Algorithm~\ref{alg:post_generic}). We replace the per-head learnable bias with strictly ordered rates generated by the cumulative-softplus map (Definition~\ref{def:post_map}). The position-adaptive scaling factor is applied directly inside the exponential gate, ensuring scale-free retention while preserving fine-grained data-dependent modulation.


\subsection{Other Architecture Instantiations}
\label{subsec:arch_specific}

Both GLA and RetNet~\citep{syc+23} use a fixed per-head scalar decay $\gamma_h \in (0,1)$.  Since both architectures share the same decay structure, applying PoST yields an identical reparameterization; PoST-GLA and PoST-RetNet reduce to the same model and are reported together in our experiments (Table~\ref{tab:mqar}).  Full pseudocode is in Appendix~\ref{app:arch_pseudocode}.

\section{Experiments}
\label{sec:experiments}

We evaluate the PoST framework through three complementary experimental settings: \textbf{Multi-Query Associative Recall (MQAR)}~\citep{aet+24}, a controlled synthetic benchmark that tests associative recall under length extrapolation; \textbf{zero-shot language modeling benchmarks}, which confirm that the spectral reparameterization consistently improves general language modeling capabilities; and \textbf{Needle-In-A-Haystack (NIAH)}, which tests both single-needle and multi-needle long-range verbatim retrieval.  We compare PoST-enhanced models against their standard baselines on Mamba-2~\citep{dg24}, RWKV-7, and Gated DeltaNet.


\subsection{Multi-Query Associative Recall}
\label{sec:mqar}

\paragraph{Setup.}
The MQAR task~\citep{aet+24} embeds $K$ key--value pairs in a sequence of total length $T$ and queries the model to retrieve all $K$ values.
We set $K = T/4$ and train $2$-layer models at $T_{\mathrm{train}} = 512$ using a four-stage curriculum that ramps $K$ from $16$ to $128$, then evaluate at $T \in \{512, 1024, 2048, 4096\}$ ($1\times$--$8\times$), so that both the number of stored associations and the distractor length grow simultaneously at test time.
We compare five architectures (Mamba-2~\citep{dg24}, RWKV-7~\citep{pzg+25}, Gated DeltaNet~\citep{ykh25}, Gated Linear Attention (GLA)~\citep{yws+24}, and RetNet~\citep{syc+23}) together with their PoST-enhanced counterparts, across model widths $d \in \{512, 256\}$, sweeping learning rates and reporting the best per model.
To ensure a fair comparison, all architectures use the same base number of heads at each $d_{\mathrm{model}}$; state-size equalization is achieved by adjusting $d_{\mathrm{state}}$ (Mamba-2); see Appendix~\ref{app:mqar_state}.
All training uses BF16 mixed precision.
All experiments use the Zoology framework~\citep{aet+24}.
Full experimental details, including curriculum schedule, sweep axes, and test configurations, are in Appendix~\ref{app:mqar}.

\paragraph{Results.}
Table~\ref{tab:mqar} summarizes the results; accuracy curves are in Appendix~\ref{app:mqar_results}.

\begin{table}[!ht]
\centering
\caption{\textbf{MQAR capacity accuracy (\%)} at each test length $T$ with $K{=}T/4$ key--value pairs.
All models trained at $T{=}512$; longer lengths are out-of-distribution.
GLAPoST and RetNetPoST converge to the same model under the PoST framework.
Across almost all settings, PoST outperforms its baseline; the sole exception is GLA at state$=64K$, where the baseline edges ahead ($71.5$ vs.\ $69.7$ avg); GLAPoST recovers the lead at state$=32K$ and $16K$.}
\label{tab:mqar}
\small
\resizebox{\linewidth}{!}{%
\begin{tabular}{l|cccc|c|cccc|c|cccc|c}
\toprule
 & \multicolumn{5}{c|}{state$=64K$} & \multicolumn{5}{c|}{state$=32K$} & \multicolumn{5}{c}{state$=16K$} \\
\cmidrule(lr){2-6} \cmidrule(lr){7-11} \cmidrule(lr){12-16}
\textbf{Model} & 512 & 1K & 2K & 4K & \textbf{Avg} & 512 & 1K & 2K & 4K & \textbf{Avg} & 512 & 1K & 2K & 4K & \textbf{Avg} \\
\midrule
Mamba-2              & \textbf{100.0} & 96.8 & 62.2 & 18.9 & 69.5 & 99.2 & 85.2 & 41.3 & 11.6 & 59.4 & 99.3 & 80.6 & 31.2 & 5.7 & 54.2 \\
$\quad$+PoST       & \textbf{100.0} & \textbf{97.4} & \textbf{68.3} & \textbf{25.1} & \textbf{72.7} & \textbf{99.8} & \textbf{92.1} & \textbf{51.6} & \textbf{13.2} & \textbf{64.2} & \textbf{99.6} & \textbf{87.8} & \textbf{44.1} & \textbf{12.7} & \textbf{61.0} \\
\midrule
RWKV-7               & \textbf{100.0} & \textbf{100.0} & 96.1 & 39.2 & 83.8 & \textbf{100.0} & \textbf{100.0} & 80.1 & 9.5 & 72.4 & \textbf{100.0} & 95.2 & 46.0 & 10.8 & 63.0 \\
$\quad$+PoST       & \textbf{100.0} & \textbf{100.0} & \textbf{98.5} & \textbf{52.9} & \textbf{87.8} & \textbf{100.0} & \textbf{100.0} & \textbf{98.0} & \textbf{28.5} & \textbf{81.6} & \textbf{100.0} & \textbf{99.3} & \textbf{70.9} & \textbf{18.8} & \textbf{72.2} \\
\midrule
Gated DeltaNet                  & \textbf{100.0} & \textbf{100.0} & 92.0 & 42.4 & 83.6 & \textbf{100.0} & 96.4 & 56.7 & 15.9 & 67.2 & 99.8 & 82.7 & 31.7 & 7.4 & 55.4 \\
$\quad$+PoST       & \textbf{100.0} & \textbf{100.0} & \textbf{95.3} & \textbf{48.4} & \textbf{85.9} & \textbf{100.0} & \textbf{99.9} & \textbf{88.9} & \textbf{39.6} & \textbf{82.1} & \textbf{99.9} & \textbf{86.5} & \textbf{35.7} & \textbf{8.7} & \textbf{57.7} \\
\midrule
GLA                  & \textbf{100.0} & \textbf{97.8} & \textbf{67.2} & \textbf{20.8} & \textbf{71.5} & \textbf{100.0} & \textbf{97.7} & 50.3 & 7.6 & 63.9 & 99.8 & 88.5 & 38.7 & 7.8 & 58.7 \\
$\quad$+PoST       & \textbf{100.0} & 96.0 & 62.1 & 20.7 & 69.7 & 99.9 & 93.9 & \textbf{54.8} & \textbf{16.9} & \textbf{66.4} & \textbf{99.9} & \textbf{93.1} & \textbf{50.7} & \textbf{12.2} & \textbf{64.0} \\
\midrule
RetNet               & 99.9 & 47.1 & 2.3 & 0.0 & 37.3 & \textbf{99.9} & 63.2 & 6.0 & 0.3 & 42.3 & 96.8 & 16.8 & 0.7 & 0.0 & 28.6 \\
$\quad$+PoST       & \textbf{100.0} & \textbf{96.0} & \textbf{62.1} & \textbf{20.7} & \textbf{69.7} & \textbf{99.9} & \textbf{93.9} & \textbf{54.8} & \textbf{16.9} & \textbf{66.4} & \textbf{99.9} & \textbf{93.1} & \textbf{50.7} & \textbf{12.2} & \textbf{64.0} \\
\bottomrule
\end{tabular}%
}
\end{table}


\subsection{Language Model Pretraining and Evaluation}
\label{sec:lm_pretrain}

\paragraph{Setup.}
We pretrain Mamba-2, RWKV-7, and Gated DeltaNet language models on FineWeb-Edu~\citep{lbds24} at context length $T_{\mathrm{train}} = 2{,}048$ at ${\sim}$180M parameters, with Mamba-2 additionally evaluated at ${\sim}$440M.
Within each scale, the models share identical hyperparameters and differ \emph{only} in decay parameterization: the baseline uses the default initialization, while PoST uses the Spectral Reparameterization (Definition~\ref{def:post_map}) with position-adaptive decay scaling (Definition~\ref{def:position_adaptive_ssm}).
Full architecture and training details are in Appendix~\ref{app:lm_eval}.

\paragraph{Zero-Shot Evaluation.}
We evaluate all models on seven standard zero-shot benchmarks using the Language Model Evaluation Harness~\citep{gta+24}.  Table~\ref{tab:lm_eval} reports the results.

\begin{table}[!ht]
\centering
\caption{\textbf{Downstream zero-shot evaluations.}
PoST achieves consistently better average performance than the baseline across all benchmarks at 180M and 440M scales, indicating that the spectral reparameterization consistently, though modestly, improves general language modeling capabilities.}
\label{tab:lm_eval}
\small
\resizebox{\linewidth}{!}{%
\begin{tabular}{@{}l cc ccccccc@{}}
\toprule
\textbf{Model}
  & \multicolumn{2}{c}{LAMBADA} & HellaSwag & PIQA
  & ARC-Easy & ARC-Challenge
  & WinoGrande & OpenBookQA & Avg \\
  & {\scriptsize acc$\uparrow$} & {\scriptsize ppl$\downarrow$} & {\scriptsize acc$_\text{n}\uparrow$} & {\scriptsize acc$\uparrow$}
  & {\scriptsize acc$\uparrow$} & {\scriptsize acc$_\text{n}\uparrow$}
  & {\scriptsize acc$\uparrow$} & {\scriptsize acc$_\text{n}\uparrow$} & \\
\midrule
Mamba-2 180M
  & \textbf{21.6} & \textbf{145.4} & 31.1 & 62.9
  & \textbf{50.4} & 24.7
  & 49.6 & \textbf{30.6} & 38.7 \\
$\quad$+PoST
  & 21.5 & 148.2 & \textbf{31.3} & \textbf{63.2}
  & 50.1 & \textbf{24.9}
  & \textbf{50.6} & 30.0 & \textbf{38.8} \\
\midrule
RWKV-7 180M
  & 27.9 & \textbf{69.6} & 32.1 & \textbf{63.1}
  & 49.7 & \textbf{25.7}
  & 51.3 & 29.0 & 39.8 \\
\quad+PoST
  & \textbf{28.3} & 71.9 & 32.1 & 62.9
  & \textbf{52.1} & 25.3
  & \textbf{51.8} & \textbf{32.0} & \textbf{40.6} \\
\midrule
Gated DeltaNet 180M
  & 23.8 & \textbf{94.5} & \textbf{31.9} & 62.7
  & 49.6 & 24.2
  & \textbf{51.1} & 30.6 & 39.1 \\
$\quad$+PoST
  & \textbf{25.2} & 95.6 & 31.5 & \textbf{62.9}
  & \textbf{51.5} & \textbf{25.3}
  & 50.7 & \textbf{31.8} & \textbf{39.8} \\
\midrule
Mamba-2 440M
  & 24.1 & 77.3 & \textbf{37.7} & 65.3
  & \textbf{57.7} & \textbf{27.2}
  & 50.4 & \textbf{32.8} & 42.2 \\
$\quad$+PoST
  & \textbf{28.0} & \textbf{62.6} & 37.5 & 65.3
  & 56.6 & 26.2
  & \textbf{51.4} & 32.6 & \textbf{42.5} \\
\bottomrule
\end{tabular}%
}
\end{table}

As detailed in Table~\ref{tab:lm_eval}, these results confirm that the PoST spectral reparameterization consistently, though modestly, improves average downstream performance alongside its gains in long-context retrieval.

\paragraph{Empirical Timescale Analysis.}
\label{sec:empirical_timescales}

\begin{figure}[!ht]
    \centering
    \includegraphics[width=\linewidth]{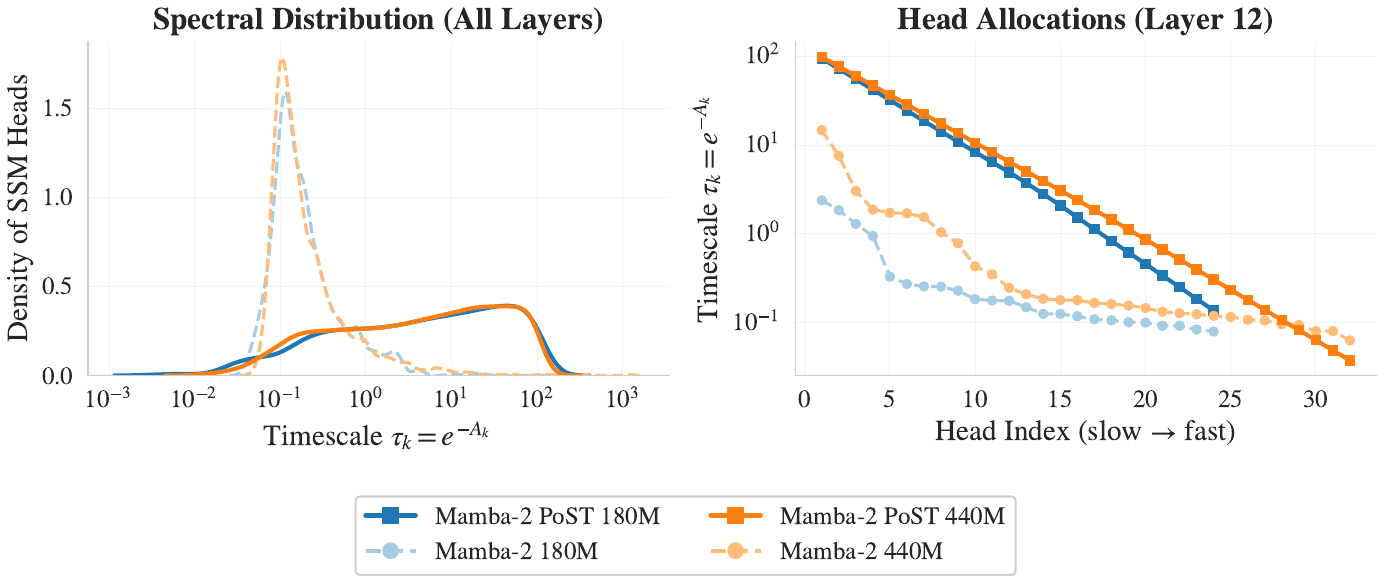}
    \caption{\textbf{Empirical timescale distribution $\tau_k = e^{-A_k}$ across trained models.} \textbf{(Left)} A kernel density estimate (KDE) over all depths shows Mamba-2 models suffering from a severe minimum gap collapse (density clumping into narrow spikes), while PoST strictly enforces a broad geometric long-tail distribution. \textbf{(Right)} Head timescale allocation within a single representative layer (Layer 12). Mamba-2 models flatten out (allocating many heads to identical timescales), wasting capacity. PoST forms a perfect straight line on the log-scale, empirically proving rigorous geometric spacing.}
    \label{fig:post_timescales}
\end{figure}

\begin{figure}[h!]
    \centering
    \includegraphics[width=\linewidth]{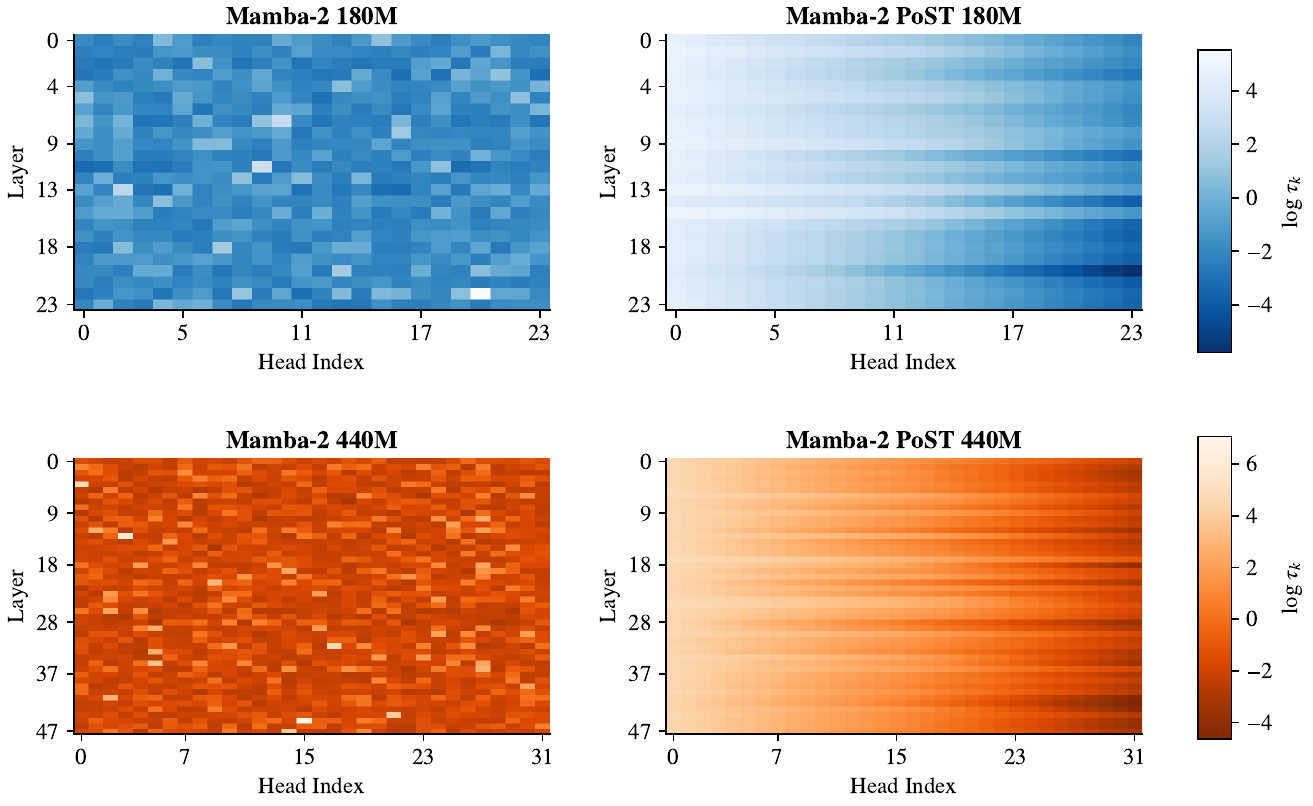}
    \caption{\textbf{Layer$\times$Head heatmap of learned log-timescales $\log\tau_k = -A_k$ across all layers.}
    Each cell encodes the log-timescale of a single SSM head at its \emph{actual} model index at a given layer (top to bottom); no manual reordering is applied.
    \textbf{Top row (180M):} The baseline Mamba-2 heatmap is nearly uniform in color throughout: all heads at every layer collapse to a narrow band of fast timescales, wasting state capacity.  The PoST counterpart displays a smooth left-to-right gradient because the Spectral Reparameterization \emph{structurally} enforces $p_1 < p_2 < \cdots < p_N$ via a cumulative-softplus construction: the ordering is an intrinsic property of the learned weights, not a visualization artifact.  This gradient is consistent across \emph{every} layer, confirming that geometric spectral order is a global, depth-invariant property.
    \textbf{Bottom row (440M):} The same contrast holds at larger scale (48 layers, 32 heads).  The PoST model preserves a wider dynamic range (larger spread between minimum and maximum $\log\tau_k$) than the baseline, directly reflecting the broader effective memory horizon predicted by Theorem~\ref{thm:optimality}.}
    \label{fig:post_heatmap}
\end{figure}

To verify that PoST structurally enforces the optimal geometric memory allocation derived in Section~\ref{sec:theory}, we analyze the learned timescales $\tau_k = e^{-A_k}$ of the 180M and 440M pretrained models. As visualized in Figure~\ref{fig:post_timescales} (Left), empirical inspections of pre-trained Mamba-2 models reveal this severe gap collapse: density plots show that the vast majority of heads collapse toward similar fast timescales, wasting state capacity and leaving critical low-frequency gaps. In Figure~\ref{fig:post_timescales} (Right), we confirm that Spectral Reparameterization strictly enforces a wide, geometrically spaced memory distributed across all available heads (forming a perfect linear progression on a log scale). This validates that PoST avoids the severe head redundancy seen in standard initializations and fully utilizes the model's state capacity.

Figure~\ref{fig:post_heatmap} extends this analysis to the full joint Layer$\times$Head structure, displaying \emph{raw} head indices without any sorting.  The baseline heatmap is scattered and nearly uniform across both axes, confirming that this minimum gap collapse is a pervasive, depth-invariant pathology: every layer independently collapses to similar fast timescales, leaving slow-timescale memory entirely unserviced.  PoST eliminates this pathology: the smooth color gradient across head-index and layer dimensions is \emph{not} a product of sorting; it emerges directly from the cumulative-softplus Spectral Reparameterization, which ties the ordering of learned weights to their head index by construction.  This provides direct visual confirmation of the layer-invariant non-degeneracy guaranteed by Proposition~\ref{thm:nondegeneracy}.

\begin{figure}[!ht]
    \centering
    \includegraphics[width=\linewidth]{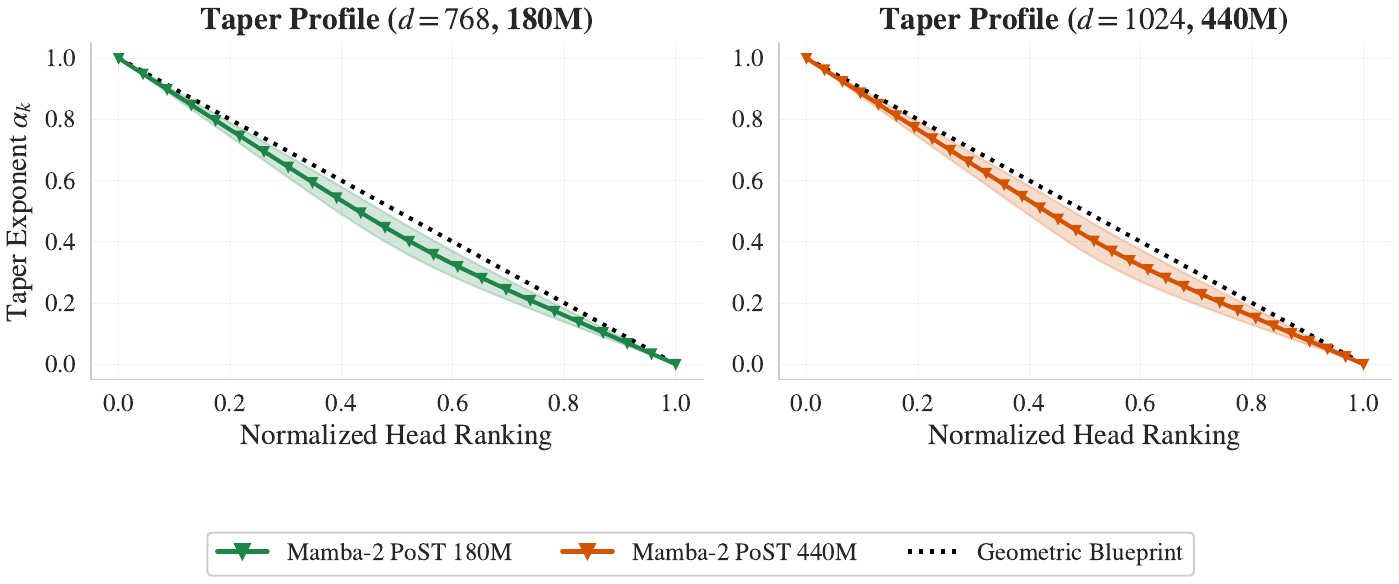}
    \caption{\textbf{Learned Normalization Taper $\alpha_k$}. We examine the empirical $\alpha_k$ distributions across all layers of the pre-trained Mamba-2 PoST 180M and 440M models. Rather than fixing $\alpha_k$ to the linear $(N-k)/(N-1)$ blueprint (dashed black line), PoST allows data-dependent parameter adjustments and adaptively recomputes the optimal $\alpha_k$ for each head to enforce rigid geometric spacing (Proposition~\ref{thm:taper_general}). Across both model scales, the empirical average taper strongly aligns closely with the strict linear ideal, revealing that natural language optimization inherently converges to the hierarchically distributed memory blueprint.}
    \label{fig:post_alphas_comparison}
\end{figure}

As shown in Figure~\ref{fig:post_alphas_comparison}, the position-adaptive parameterization functions precisely as the theoretical blueprint intends. While PoST allows the spectrum itself to remain learnable through optimization on the FineWeb-Edu dataset, the resulting adapted $\alpha_k$ values (computed via Equation~\ref{eq:general_taper}) follow the theoretical linear curve. This provides unambiguous empirical evidence that optimization on natural language gravitates toward uniform memory allocation across sequence hierarchies.

\paragraph{Long-Context Retrieval: NIAH.}
\label{sec:niah}
We evaluate the pretrained models on the \textbf{NIAH} (Needle-In-A-Haystack) benchmark, which embeds a target ``needle'' sentence within a long distractor context and asks the model to retrieve it verbatim.
We test both \emph{single-needle} variants (Single-1/2/3) and \emph{multi-needle} variants (MultiKey, MultiQuery, MultiValue) at $T \in \{1{,}024, 2{,}048, 4{,}096\}$.
Table~\ref{tab:retrieval} presents the results.

\begin{table}[!ht]
\centering
\caption{\textbf{NIAH long-context retrieval results.}
Single-needle and multi-needle variants evaluated at $T \in \{1\text{K}, 2\text{K}, 4\text{K}\}$.
PoST significantly improves retrieval for Mamba-2 at both 180M and 440M scales, particularly as context length grows. For Gated DeltaNet, PoST yields a moderate overall improvement. For RWKV-7, whose baseline already achieves strong retrieval, performance remains highly competitive.}
\label{tab:retrieval}
\small
\resizebox{\linewidth}{!}{%
\begin{tabular}{@{}l ccc ccc ccc ccc ccc ccc c@{}}
\toprule
& \multicolumn{9}{c}{\textbf{Single-Needle}} & \multicolumn{9}{c}{\textbf{Multi-Needle}} & \\
\cmidrule(lr){2-10} \cmidrule(lr){11-19}
& \multicolumn{3}{c}{Single-1} & \multicolumn{3}{c}{Single-2} & \multicolumn{3}{c}{Single-3} & \multicolumn{3}{c}{MultiKey} & \multicolumn{3}{c}{MultiQuery} & \multicolumn{3}{c}{MultiValue} & \\
\cmidrule(lr){2-4} \cmidrule(lr){5-7} \cmidrule(lr){8-10} \cmidrule(lr){11-13} \cmidrule(lr){14-16} \cmidrule(lr){17-19}
\textbf{Model} & 1K & 2K & 4K & 1K & 2K & 4K & 1K & 2K & 4K & 1K & 2K & 4K & 1K & 2K & 4K & 1K & 2K & 4K & Avg \\
\midrule
Mamba-2 180M & 44.0 & 5.6 & 0.2 & 15.4 & 3.6 & 0.8 & 0.0 & 0.0 & 0.0 & 9.0 & 6.0 & 1.6 & 8.5 & 5.5 & 1.4 & 4.9 & 4.5 & 1.7 & 6.3 \\
$\quad$+PoST & \textbf{95.6} & \textbf{47.4} & \textbf{2.0} & \textbf{71.0} & \textbf{13.2} & \textbf{8.4} & \textbf{4.8} & \textbf{0.6} & \textbf{1.8} & \textbf{17.0} & \textbf{16.0} & \textbf{6.4} & \textbf{12.6} & \textbf{12.6} & \textbf{3.1} & \textbf{11.1} & \textbf{12.0} & \textbf{3.5} & \textbf{18.8} \\
\midrule
RWKV-7 180M & 99.6 & \textbf{97.4} & \textbf{62.6} & 66.8 & \textbf{11.8} & \textbf{12.4} & 0.0 & 0.0 & 0.0 & 17.8 & \textbf{16.8} & \textbf{8.8} & 14.1 & \textbf{13.1} & \textbf{3.3} & \textbf{16.2} & \textbf{11.5} & \textbf{7.0} & \textbf{25.5} \\
$\quad$+PoST & \textbf{99.8} & 93.6 & 57.8 & \textbf{90.2} & 11.6 & 5.4 & \textbf{3.0} & \textbf{0.2} & 0.0 & 17.8 & 14.0 & 5.8 & \textbf{16.1} & 7.5 & 0.8 & 15.3 & 10.8 & 3.0 & 25.1 \\
\midrule
Gated DeltaNet 180M & \textbf{100.0} & 97.8 & \textbf{85.8} & 78.6 & 13.2 & 4.4 & 7.6 & 1.6 & 0.8 & 17.2 & 22.8 & 6.0 & \textbf{21.5} & \textbf{18.9} & \textbf{7.8} & 12.6 & 18.6 & 4.5 & 28.9 \\
$\quad$+PoST & 99.0 & \textbf{99.6} & 77.6 & \textbf{98.0} & \textbf{14.4} & \textbf{12.6} & \textbf{10.2} & \textbf{1.8} & \textbf{1.0} & \textbf{21.4} & \textbf{25.2} & \textbf{12.2} & 12.8 & 5.6 & 0.9 & \textbf{16.9} & \textbf{19.1} & \textbf{8.8} & \textbf{29.8} \\
\midrule
Mamba-2 440M & 98.2 & 63.8 & \textbf{30.4} & 94.8 & 24.2 & 2.6 & 31.4 & 13.6 & \textbf{4.0} & \textbf{16.2} & 13.4 & \textbf{3.4} & \textbf{14.3} & 12.0 & \textbf{3.2} & \textbf{8.6} & 4.0 & \textbf{1.2} & 24.4 \\
$\quad$+PoST & \textbf{99.8} & \textbf{77.6} & 16.2 & \textbf{98.8} & \textbf{34.2} & \textbf{7.2} & \textbf{60.4} & \textbf{30.4} & 1.4 & 15.2 & \textbf{19.6} & 2.2 & 9.4 & \textbf{15.4} & 1.1 & 3.5 & \textbf{10.5} & 0.2 & \textbf{28.0} \\
\bottomrule
\end{tabular}%
}
\end{table}

NIAH retrieval reveals a clear architecture-dependent pattern. As shown in Table~\ref{tab:retrieval}, PoST significantly improves single-needle and multi-needle retrieval for Mamba-2 at both 180M and 440M scales, with gains becoming more pronounced on harder variants (Single-3 and multi-needle tasks) and at longer contexts. For Gated DeltaNet, PoST yields a moderate overall improvement ($28.9 \to 29.8$ avg). For RWKV-7, whose baseline already achieves the strongest retrieval among the tested architectures ($25.5$ avg), PoST performs comparably ($25.1$ avg); the small difference falls within the variance expected from the spectral restructuring not providing additional benefit when the baseline already maintains a well-distributed decay spectrum via its sigmoid gate and power-law initialization. These results suggest that PoST provides the largest gains for architectures whose baseline parameterization is most susceptible to spectral collapse (Mamba-2), while preserving performance for architectures with more robust native spectral properties.

\paragraph{Remark.}
Within each model pair, the models are trained with identical hyperparameters, data, and compute; the only difference is the decay parameterization and the position-adaptive scaling.
The zero-shot results demonstrate that the spectral restructuring consistently improves performance on standard benchmarks, while the single- and multi-needle NIAH results show that the gains from PoST manifest significantly on memory-intensive and long-context tasks for Mamba-2, directly isolating the long-range memory advantage predicted by the theory.
MQAR (Section~\ref{sec:mqar}) provides further complementary evidence in a controlled synthetic setting.


\subsection{Discussion}
\label{sec:discussion}

The experimental settings provide complementary evidence for the benefits of spectrally structured decay parameterization.
MQAR isolates the role of spectral structure in a controlled environment where the number of stored associations and the distractor length are precisely varied, directly testing associative recall under length extrapolation.
The zero-shot LM benchmarks show that the PoST reparameterization achieves consistent, though modest, improvements over the baseline, confirming that the spectral restructuring enhances general language modeling capabilities.
The NIAH tasks provide the strongest evidence for Mamba-2: as highlighted in Table~\ref{tab:retrieval}, PoST significantly outperforms the standard Mamba-2 baseline on single- and multi-needle retrieval at both 180M and 440M scales, particularly as context length and target count grow. For Gated DeltaNet, gains are moderate, while RWKV-7 performs comparably to its already strong baseline. This architecture-dependent pattern suggests that PoST provides the largest benefit when baseline spectral structure is poorly conditioned, as is the case for Mamba-2's standard S4D-Real initialization.

\paragraph{Limitations and ongoing work.}
The current LM and NIAH evaluations use 180M and 440M-parameter models trained on $4$--$9$B tokens.
We are actively scaling PoST to \textbf{1.5B parameters} trained on \textbf{30B tokens} for evaluation at a scale where architectural differences are more pronounced.
\section{Conclusion}
\label{sec:conclusion}
We introduced \textbf{Position-Adaptive Spectral Tapering (PoST)}, a comprehensive framework for sequential memory that prevents minimum-gap collapse via Spectral Reparameterization and achieves minimax-optimal state utilization via Position-Adaptive Scaling.  The framework is grounded in an information-theoretic derivation of optimal timescale allocation under approximate logarithmic equipartition, with a formal robustness guarantee showing graceful degradation when the equipartition condition holds only approximately.  In practice, the entire framework reduces to a two-line change in any compatible recurrent layer's forward pass, preserving both complexity and expressiveness.  Experiments across five major architectures (Mamba-2, RWKV-7, Gated DeltaNet, GLA, and RetNet) on MQAR, alongside full zero-shot language modeling and NIAH retrieval evaluations at 180M and 440M scales, confirm that PoST consistently improves zero-shot language modeling across all tested architectures and yields significant long-range retrieval gains for architectures with poorly conditioned baseline spectra, particularly Mamba-2.  We view PoST as a broadly applicable ``spectral hygiene'' primitive for the growing family of linear recurrent sequence models. Our implementation is open-sourced at \url{https://github.com/SiLifen/PoST}.

\section*{Acknowledgments}
The author thanks his parents for generously funding the computational resources used in this work.

\ifdefined\isarxiv
\bibliographystyle{alpha}
\bibliography{ref}
\else
\bibliographystyle{alpha}
\bibliography{ref}

@article{bm05,
  title     = {On approximation of functions by exponential sums},
  author    = {Beylkin, Gregory and Monz{\'o}n, Lucas},
  journal   = {Applied and Computational Harmonic Analysis},
  volume    = {19},
  number    = {1},
  pages     = {17--48},
  year      = {2005},
  publisher = {Elsevier}
}

@inproceedings{vsp+17,
  title     = {Attention is all you need},
  author    = {Vaswani, Ashish and Shazeer, Noam and Parmar, Niki and Uszkoreit, Jakob and Jones, Llion and Gomez, Aidan N and Kaiser, {\L}ukasz and Polosukhin, Illia},
  booktitle = {Advances in Neural Information Processing Systems},
  volume    = {30},
  year      = {2017}
}

@inproceedings{ggr22,
  title     = {Efficiently Modeling Long Sequences with Structured State Spaces},
  author    = {Gu, Albert and Goel, Karan and R{\'e}, Christopher},
  booktitle = {International Conference on Learning Representations},
  year      = {2022}
}

@article{gd23,
  title   = {Mamba: Linear-time sequence modeling with selective state spaces},
  author  = {Gu, Albert and Dao, Tri},
  journal = {arXiv preprint arXiv:2312.00752},
  year    = {2023}
}

@article{dg24,
  title   = {Transformers are SSMs: Generalized Models and Efficient Algorithms Through Structured State Space Duality},
  author  = {Dao, Tri and Gu, Albert},
  journal = {arXiv preprint arXiv:2405.21060},
  year    = {2024}
}

@book{t19,
  title     = {Approximation theory and approximation practice, extended edition},
  author    = {Trefethen, Lloyd N},
  year      = {2019},
  publisher = {SIAM}
}

@article{p65,
  title     = {Spacings},
  author    = {Pyke, Ronald},
  journal   = {Journal of the Royal Statistical Society: Series B (Methodological)},
  volume    = {27},
  number    = {3},
  pages     = {395--436},
  year      = {1965},
  publisher = {Wiley Online Library}
}

@article{gr89,
  title   = {Equilibrium distributions and degree of rational approximation of analytic functions},
  author  = {Gonchar, Andrei A and Rakhmanov, Evguenii A},
  journal = {Sbornik: Mathematics},
  volume  = {62},
  number  = {2},
  pages   = {305--348},
  year    = {1989}
}

@inproceedings{aet+24,
  title     = {Zoology: Measuring and Improving Recall in Efficient Language Models},
  author    = {Arora, Simran and Eyuboglu, Sabri and Timalsina, Aman and Johnson, Isys and Poli, Michael and Zou, James and Rudra, Atri and R{\'e}, Christopher},
  booktitle = {International Conference on Learning Representations},
  year      = {2024}
}

@article{ep94,
  title     = {Entropy and Long-Range Correlations in Literary English},
  author    = {Ebeling, Werner and P{\"o}schel, Thorsten},
  journal   = {Europhysics Letters},
  volume    = {26},
  number    = {4},
  pages     = {241--246},
  year      = {1994},
  publisher = {IOP Publishing}
}

@article{lt17,
  title   = {Criticality in Formal Languages and Statistical Physics},
  author  = {Lin, Henry W and Tegmark, Max},
  journal = {Entropy},
  volume  = {19},
  number  = {7},
  pages   = {299},
  year    = {2017}
}

@article{vc78,
  title     = {``{$1/f$} noise'' in music: Music from {$1/f$} noise},
  author    = {Voss, Richard F and Clarke, John},
  journal   = {The Journal of the Acoustical Society of America},
  volume    = {63},
  number    = {1},
  pages     = {258--263},
  year      = {1978},
  publisher = {Acoustical Society of America}
}

@book{a66,
  title     = {Lectures on functional equations and their applications},
  author    = {Acz{\'e}l, J{\'a}nos},
  year      = {1966},
  publisher = {Academic Press}
}

@book{gs64,
  title     = {Generalized Functions, Volume~1: Properties and Operations},
  author    = {Gel'fand, Israel M. and Shilov, Georgi E.},
  year      = {1964},
  publisher = {Academic Press}
}

@article{w75,
  title     = {The renormalization group: Critical phenomena and the {Kondo} problem},
  author    = {Wilson, Kenneth G},
  journal   = {Reviews of Modern Physics},
  volume    = {47},
  number    = {4},
  pages     = {773--840},
  year      = {1975},
  publisher = {APS}
}

@article{k90,
  title     = {Scaling and universality in statistical physics},
  author    = {Kadanoff, Leo P},
  journal   = {Physica A: Statistical Mechanics and its Applications},
  volume    = {163},
  number    = {1},
  pages     = {1--14},
  year      = {1990},
  publisher = {Elsevier}
}

@article{kmc+20,
  title   = {Scaling laws for neural language models},
  author  = {Kaplan, Jared and McCandlish, Sam and Henighan, Tom and Brown, Tom B and Chess, Benjamin and Child, Rewon and Gray, Scott and Radford, Alec and Wu, Jeffrey and Amodei, Dario},
  journal = {arXiv preprint arXiv:2001.08361},
  year    = {2020}
}

@inproceedings{lh19,
  title     = {Decoupled weight decay regularization},
  author    = {Loshchilov, Ilya and Hutter, Frank},
  booktitle = {International Conference on Learning Representations},
  year      = {2019}
}

@article{ggr20,
  title   = {{HiPPO}: Recurrent Memory with Optimal Polynomial Projections},
  author  = {Gu, Albert and Dao, Tri and Ermon, Stefano and Rudra, Atri and R{\'e}, Christopher},
  journal = {Advances in Neural Information Processing Systems},
  volume  = {33},
  year    = {2020}
}

@article{ggr22b,
  title   = {On the Parameterization and Initialization of Diagonal State Space Models},
  author  = {Gu, Albert and Gupta, Ankit and Goel, Karan and R{\'e}, Christopher},
  journal = {Advances in Neural Information Processing Systems},
  volume  = {35},
  year    = {2022}
}

@article{sgb+23,
  title   = {Simplified State Space Layers for Sequence Modeling},
  author  = {Smith, Jimmy T.H. and Warrington, Andrew and Linderman, Scott W.},
  journal = {International Conference on Learning Representations},
  year    = {2023}
}

@article{ggb22,
  title   = {Diagonal State Spaces are as Effective as Structured State Spaces},
  author  = {Gupta, Ankit and Gu, Albert and Berant, Jonathan},
  journal = {Advances in Neural Information Processing Systems},
  volume  = {35},
  year    = {2022}
}

@article{pbm+23,
  title   = {{RWKV}: Reinventing {RNNs} for the Transformer Era},
  author  = {Peng, Bo and Alcaide, Eric and Anthony, Quentin and Albalak, Alon and Arcadinho, Samuel and Biderman, Stella and Cao, Huanqi and Cheng, Xin and Chung, Michael and others},
  journal = {Findings of the Association for Computational Linguistics: EMNLP},
  year    = {2023}
}

@article{syc+23,
  title   = {Retentive Network: A Successor to Transformer for Large Language Models},
  author  = {Sun, Yutao and Dong, Li and Huang, Shaohan and Ma, Shuming and Xia, Yuqing and Xue, Jilong and Wang, Jianyong and Wei, Furu},
  journal = {arXiv preprint arXiv:2307.08621},
  year    = {2023}
}

@inproceedings{psa+22,
  title     = {Train Short, Test Long: Attention with Linear Biases Enables Input Length Generalization},
  author    = {Press, Ofir and Smith, Noah A. and Lewis, Mike},
  booktitle = {International Conference on Learning Representations},
  year      = {2022}
}

@article{slk+24,
  title   = {{RoFormer}: Enhanced Transformer with Rotary Position Embedding},
  author  = {Su, Jianlin and Ahmed, Murtadha and Lu, Yu and Pan, Shengfeng and Bo, Wen and Liu, Yunfeng},
  journal = {Neurocomputing},
  volume  = {568},
  pages   = {127063},
  year    = {2024}
}

@article{bpa23,
  title   = {{NTK}-Aware Scaled {RoPE} Allows {LLaMA} Models to Have Extended (8k+) Context Size},
  author  = {bloc97},
  journal = {Reddit post, r/LocalLLaMA},
  year    = {2023},
  note    = {\url{https://www.reddit.com/r/LocalLLaMA/comments/14lz7j5/}}
}

@article{pda+24,
  title   = {{YaRN}: Efficient Context Window Extension of Large Language Models},
  author  = {Peng, Bowen and Quesnelle, Jeffrey and Fan, Honglu and Shippole, Enrico},
  journal = {International Conference on Learning Representations},
  year    = {2024}
}

@article{aks+25,
  title   = {{MambaExtend}: A Training-Free Approach to Improve Long Context Extension of {Mamba}},
  author  = {Azizi, Seyedarmin and Kundu, Souvik and Sadeghi, Mohammad Erfan and Pedram, Massoud},
  journal = {International Conference on Learning Representations},
  year    = {2025}
}

@article{yxf+25,
  title   = {{LongMamba}: Enhancing {Mamba}'s Long-Context Capabilities via Training-Free Receptive Field Enlargement},
  author  = {Ye, Zhifan and Xia, Kejing and Fu, Yonggan and Dong, Xin and Hong, Jihoon and Yuan, Xiangchi and Diao, Shizhe and Kautz, Jan and Molchanov, Pavlo and Lin, Yingyan Celine},
  journal = {arXiv preprint arXiv:2504.16053},
  year    = {2025}
}

@article{bza+25,
  title   = {{DeciMamba}: Exploring the Length Extrapolation Potential of {Mamba}},
  author  = {Ben-Kish, Assaf and Zimerman, Itamar and Abu-Hussein, Shady and Cohen, Nadav and Globerson, Amir and Wolf, Lior and Giryes, Raja},
  journal = {International Conference on Learning Representations},
  year    = {2025}
}

@article{sba+25,
  title   = {Uncovering the Spectral Bias in Diagonal State Space Models},
  author  = {Solozabal, Ruben and Bojkovic, Velibor and AlQuabeh, Hilal and Inui, Kentaro and Tak{\'a}{\v{c}}, Martin},
  journal = {arXiv preprint arXiv:2508.20441},
  year    = {2025}
}

@article{yws+24,
  title   = {Gated Linear Attention Transformers with Hardware-Efficient Training},
  author  = {Yang, Songlin and Wang, Bailin and Shen, Yikang and Panda, Rameswar and Kim, Yoon},
  journal = {International Conference on Machine Learning},
  year    = {2024}
}

@article{dmr+24,
  title   = {Griffin: Mixing Gated Linear Recurrences with Local Attention for Efficient Language Models},
  author  = {De, Soham and Smith, Samuel L. and Fernando, Anushan and Botev, Aleksandar and Cristian-Muraru, George and Gu, Albert and Haroun, Ruba and Berrada, Leonard and Chen, Yutian and Srinivasan, Srivatsan and Desjardins, Guillaume and Doucet, Arnaud and Budden, David and Teh, Yee Whye and Pascanu, Razvan and De Freitas, Nando and Gulcehre, Caglar},
  journal = {arXiv preprint arXiv:2402.19427},
  year    = {2024}
}

@inproceedings{llc+26,
  title     = {Mamba-3: Improved Sequence Modeling Using State Space Principles},
  author    = {Lahoti, Aakash and Li, Kevin and Chen, Berlin and Wang, Caitlin and Bick, Aviv and Kolter, J. Zico and Dao, Tri and Gu, Albert},
  booktitle = {International Conference on Learning Representations},
  year      = {2026},
  note      = {OpenReview: \url{https://openreview.net/forum?id=HwCvaJOiCj}}
}

@article{pga+24,
  title   = {Eagle and Finch: {RWKV} with Matrix-Valued States and Dynamic Recurrence},
  author  = {Peng, Bo and Goldstein, Daniel and Anthony, Quentin and Albalak, Alon and Alcaide, Eric and Biderman, Stella and Cheah, Eugene and Du, Xingjian and Ferdinan, Teddy and others},
  journal = {arXiv preprint arXiv:2404.05892},
  year    = {2024}
}

@article{pzg+25,
  title   = {{RWKV-7} ``Goose'' with Expressive Dynamic State Evolution},
  author  = {Peng, Bo and Zhang, Ruichong and Goldstein, Daniel and Alcaide, Eric and Du, Xingjian and Hou, Haowen and Lin, Jiaju and Liu, Jiaxing and Lu, Janna and Merrill, William and Song, Guangyu and Tan, Kaifeng and Utpala, Saiteja and Wilce, Nathan and Wind, Johan S. and Wu, Tianyi and Wuttke, Daniel and Zhou-Zheng, Christian},
  journal = {arXiv preprint arXiv:2503.14456},
  year    = {2025}
}

@misc{gta+24,
  title        = {A Framework for Few-Shot Language Model Evaluation},
  author       = {Gao, Leo and Tow, Jonathan and Abbasi, Baber and Biderman, Stella and Black, Sid and DiPofi, Anthony and Foster, Charles and Golding, Laurence and Hsu, Jeffrey and Le Noac'h, Alain and others},
  year         = {2024},
  publisher    = {Zenodo},
  howpublished = {\url{https://github.com/EleutherAI/lm-evaluation-harness}},
  doi          = {10.5281/zenodo.10256836}
}

@inproceedings{ykh25,
  author    = {Songlin Yang and Jan Kautz and Ali Hatamizadeh},
  title     = {Gated Delta Networks: Improving Mamba2 with Delta Rule},
  booktitle = {International Conference on Learning Representations (ICLR)},
  year      = {2025},
  url       = {https://arxiv.org/abs/2412.06464}
}

@article{lbds24,
  title   = {{FineWeb-Edu}: The Finest Collection of Educational Content the {Web} has to Offer},
  author  = {Lozhkov, Anton and Ben Allal, Loubna and von Werra, Leandro and Wolf, Thomas},
  journal = {Hugging Face Blog},
  year    = {2024},
  note    = {\url{https://huggingface.co/spaces/HuggingFaceFW/blogpost-fineweb-v1}}
}

@article{dma+24,
  title   = {The {Llama} 3 Herd of Models},
  author  = {Dubey, Abhimanyu and Jauhri, Abhinav and Pandey, Abhinav and Kadian, Abhishek and Al-Dahle, Ahmad and Letman, Aiesha and Mathur, Akhil and Schelten, Alan and Yang, Amy and Fan, Angela and others},
  journal = {arXiv preprint arXiv:2407.21783},
  year    = {2024}
}
\fi


\newpage
\onecolumn
\appendix

\begin{center}
    \textbf{\LARGE Appendix }
\end{center}


\paragraph{Roadmap.}
The appendix is organized as follows.
Appendix~\ref{app:related} surveys related work on length extrapolation, spectral parameterizations, and linear recurrent architectures.
Appendix~\ref{app:theory_details} contains the full proofs and auxiliary lemmas for the results in Section~\ref{sec:theory} and Section~\ref{sec:methodology}.
Appendix~\ref{app:mqar} provides the complete MQAR experimental setup, including curriculum schedule, hyperparameter sweeps, and state-size equalization.
Appendix~\ref{app:lm_eval} gives additional language model pretraining and evaluation details.
Appendix~\ref{app:arch_pseudocode} presents architecture-specific pseudocode for applying PoST to RetNet and GLA.

\section{Related Work}
\label{app:related}

\paragraph{State space models and initialization.}
S4~\citep{ggr22} introduced structured state space models for long-range sequence modeling, using the HiPPO framework~\citep{ggr20} to initialize the transition matrix~$A$ from orthogonal polynomial projections.  The diagonal simplification S4D~\citep{ggr22b} showed that restricting to real-valued diagonal $A$ with linearly spaced eigenvalues ($\lambda_k = -(k+1)$) preserves most of the performance.  Subsequent models (S5~\citep{sgb+23}, DSS~\citep{ggb22}) continued to use independently parameterized, linearly spaced eigenvalues.  More recently, S4D-DFouT~\citep{sba+25} studies spectral bias in diagonal SSMs and proposes placing poles in the discrete Fourier domain for more uniform frequency coverage.  A common limitation of all these approaches is that spectral structure is imposed only at initialization and may be lost during training; moreover, they target the \emph{frequency response} of individual modes rather than the \emph{timescale allocation} that governs memory horizons.  PoST differs in two respects: Spectral Reparameterization enforces geometric spectral ordering \emph{throughout training} via a cumulative-softplus parameterization, and Position-Adaptive Scaling dynamically adjusts the spectrum to the observed context length.

\paragraph{Selective and input-dependent SSMs.}
Mamba~\citep{gd23} made the SSM parameters ($B$, $C$, $\Delta$) input-dependent, enabling content-aware gating.  Mamba-2~\citep{dg24} connected SSMs to structured attention via Structured State Space Duality.  Mamba-3~\citep{llc+26} further extends this line with improved discretization and state dynamics.  RWKV~\citep{pbm+23} and RetNet~\citep{syc+23} take complementary approaches to linear-time sequence modeling with element-wise decay.  These works focus on the architecture and computation of recurrent models; PoST is orthogonal, addressing the spectral structure of the decay spectrum within any diagonal linear recurrence.

\paragraph{Context extension for Mamba models.}
Several recent works address Mamba's degradation on sequences longer than those seen during training.  MambaExtend~\citep{aks+25} learns a single position-independent scaling factor per layer that uniformly rescales $\Delta$.  LongMamba~\citep{yxf+25} categorizes hidden channels into local and global based on receptive field length, then filters unimportant tokens from global channels to mitigate memory decay.  DeciMamba~\citep{bza+25} introduces a context-extension method built on a hidden filtering mechanism within the S6 layer, compressing the effective input to fit the model's trained receptive field.  All three methods are post-hoc, training-free interventions applied to frozen models.  PoST differs in three respects: it is active \emph{throughout training} (so learned weights co-adapt with the spectral structure), it provides \emph{position-adaptive} per-channel scaling via a closed-form formula (Proposition~\ref{thm:taper_general}), and it derives the target spectrum from first principles (Theorem~\ref{thm:optimality}).  Furthermore, PoST applies to any diagonal linear recurrence, not only Mamba.

\paragraph{Length extrapolation in Transformers.}
ALiBi~\citep{psa+22}, RoPE~\citep{slk+24} with NTK-Aware scaling~\citep{bpa23}, and YaRN~\citep{pda+24} modify positional encodings to extend the context window of Transformers.  The analogy to PoST is instructive: just as RoPE-based methods scale the frequency basis of positional encodings, PoST scales the timescale basis of the SSM decay spectrum.  However, PoST is grounded in approximation theory rather than positional encoding heuristics.

\paragraph{Power-law correlations and approximation theory.}
The theoretical foundation of PoST rests on the observation that natural language exhibits long-range correlations with approximate power-law decay~\citep{ep94, lt17}, echoing the broader $1/f$ noise literature~\citep{vc78}.  Our Condition~\ref{cond:hierarchical} formalizes this self-similar structure.  The connection between geometric pole placement and minimax-optimal approximation of power-law functions draws on classical results in rational approximation theory~\citep{gr89, bm05, t19}.  Beylkin and Monz\'{o}n~\citep{bm05} showed that exponential sums with geometrically spaced exponents achieve near-optimal approximation of smooth functions, a result we leverage in Theorem~\ref{thm:optimality}.  To our knowledge, PoST is the first work to connect these approximation-theoretic results to the spectral management of state space models.

\section{Theory Details}
\label{app:theory_details}

This appendix collects detailed proofs and analysis supplementing the theoretical results in Sections~\ref{sec:theory} and~\ref{sec:methodology}.

\subsection{State Energy Analysis}
\label{app:energy_analysis}

\begin{theorem}[Energy Scaling under the Linear Taper]
\label{thm:energy_scaling}
Mode~$k$ driven by unit-variance white noise up to position $t$ has expected energy (in the continuous-time approximation, valid up to $O(1/\tau_k)$ relative error for $\tau_k \gg 1$)
\begin{align*}
    E_k(t) = \frac{t^{\alpha_k}}{2 \ell_{t,k}} \bigl( 1 - \exp(-2 \ell_{t,k} \cdot t^{1-\alpha_k}) \bigr).
\end{align*}
The energy ratio between positions~$t_2$ and~$t_1$ (for $t_2 > t_1 > 0$) satisfies:
\begin{itemize}
    \item {\bf Part 1.} $\alpha_k = 0$: $E_k(t_2)/E_k(t_1) \to 1$ (position-invariant).
    \item {\bf Part 2.} $\alpha_k = 1$: $E_k(t_2)/E_k(t_1) = t_2/t_1$ (linear growth).
    \item {\bf Part 3.} General: scales asymptotically as $(t_2/t_1)^{\alpha_k}$ for $t_1, t_2 \gg 1$, and is strictly bounded between $(t_2/t_1)^{\alpha_k}$ and $(t_2/t_1)^1$.
\end{itemize}
\end{theorem}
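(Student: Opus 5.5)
We work in the continuous-time idealization suggested by Remark~\ref{rem:discretization}, freezing the effective decay rate of mode~$k$ at its position-$t$ value $\ell_{t,k}\,t^{-\alpha_k}$ (Corollary~\ref{cor:fractional_invariance}). The state is then $h_k(t)=\int_0^t \exp\bigl(-\ell_{t,k}t^{-\alpha_k}(t-u)\bigr)\,dW(u)$, driven by unit-variance white noise. By the It\^o isometry (equivalently, summing squared impulse-response coefficients), the expected energy is
\begin{align*}
E_k(t)=\int_0^t \exp\bigl(-2\ell_{t,k}t^{-\alpha_k}s\bigr)\,ds=\frac{t^{\alpha_k}}{2\ell_{t,k}}\Bigl(1-\exp\bigl(-2\ell_{t,k}t^{1-\alpha_k}\bigr)\Bigr).
\end{align*}
This gives the closed form. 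To justify replacing the discrete sum $\sum_{s=0}^{t-1}w^{2s}$ by the integral, compare the geometric sum $(1-w^{2t})/(1-w^2)$ with the integral. Since $1-w^2=2p+O(p^2)$ with $p=1/\tau_k$, the relative discrepancy is $O(1/\tau_k)$. For the ratios we treat $\ell_{t,k}=\ell_k$ as position-independent (base modulation held fixed).

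\textbf{Case analysis.} Write $x(t):=2\ell_k t^{1-\alpha_k}$ and $g(x):=1-e^{-x}$, so that $E_k(t)\propto t^{\alpha_k}g(x(t))$.

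Part~2 ($\alpha_k=1$). Here $x$ is constant, so the $g$ factors cancel and the ratio is exactly $t_2/t_1$.

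Part~1 ($\alpha_k=0$). Here $x(t)=2\ell_k t$ and the ratio is $g(2\ell_k t_2)/g(2\ell_k t_1)$. As $t_1\to\infty$ both factors tend to $1$, giving position invariance in the stated limit sense.

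Part~3 (general $\alpha_k\in(0,1)$). The ratio equals $(t_2/t_1)^{\alpha_k}\,g(x_2)/g(x_1)$ with $x_2>x_1$.
\begin{itemize}
\item \emph{Lower bound.} Since $g$ is increasing, $g(x_2)/g(x_1)\ge 1$, and strictly so when $x_2>x_1$, which gives the strict lower bound $(t_2/t_1)^{\alpha_k}$.
\item \emph{Asymptotics.} As $t_1\to\infty$ both $g$ values tend to $1$, so the ratio behaves like $(t_2/t_1)^{\alpha_k}$.
\item \emph{Upper bound.} Use that $g(x)/x$ is strictly decreasing on $(0,\infty)$, by concavity of $g$ with $g(0)=0$. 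Hence
\begin{align*}
\frac{g(x_2)}{g(x_1)}<\frac{x_2}{x_1}=\Bigl(\frac{t_2}{t_1}\Bigr)^{1-\alpha_k},
\end{align*}
and multiplying by $(t_2/t_1)^{\alpha_k}$ gives the strict upper bound $t_2/t_1$.
\end{itemize}

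\textbf{Main obstacle.} The delicate part is not the algebra but the modeling step. The true discrete product of position-varying gates $\prod_j w^{\mathrm{eff}}_{j,k}$ differs from the frozen-rate idealization. We would state explicitly that $E_k(t)$ is defined via the frozen-rate approximation of Remark~\ref{rem:discretization}, and only sketch that the $1+O(\alpha_k s/t)$ discrepancy perturbs the ratios by lower-order terms. A secondary issue is Part~1: the ratio equals $1$ only asymptotically, so it must be read as a limit $t_1\to\infty$ (or, equivalently, $\ell_k t_1\gg1$), not as an identity.
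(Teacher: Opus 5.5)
Your proposal is correct and follows essentially the same route as the paper. The shared steps are the continuous-time variance $\frac{\tau}{2}(1-e^{-2t/\tau})$ with an $O(1/\tau)$ discrete correction, substitution of the frozen timescale $\tau_k(t)=t^{\alpha_k}/\ell_{t,k}$, and two monotonicity facts: $1-e^{-x}$ is increasing (lower bound), and $(1-e^{-x})/x$ is decreasing, i.e.\ $E_k(t)/t$ is decreasing (upper bound). Your explicit assumptions that $\ell_{t,k}$ is held fixed and that Part~1 is a limit statement are exactly what the paper's argument uses implicitly.
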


\begin{proof}
In continuous time, a mode with timescale $\tau$ driven by unit-variance white noise accumulates expected energy $E = \frac{\tau}{2}(1 - e^{-2t/\tau})$.  In discrete time the exact variance is $(1 - e^{-2t/\tau})/(1 - e^{-2/\tau})$; since $(1 - e^{-2/\tau})^{-1} = \tau/2 + 1/2 + O(1/\tau)$, the continuous-time formula incurs a relative error of $O(1/\tau)$, negligible for long-lived modes ($\tau \gg 1$).  Using this approximation and substituting $\tau = \tau_k(t) = t^{\alpha_k}/\ell_{t,k}$:

Part~1: $\tau_k$ is constant, $E_k \to \tau_k/2$.
Part~2: $\tau_k(t) = t/\ell_{t,k}$, so $E_k(t) = \frac{t}{2\ell_{t,k}}(1 - e^{-2\ell_{t,k}})$, linear in $t$.
Part~3: For $\alpha_k \in (0, 1)$, the function $x \mapsto (1 - e^{-cx})/x$ is strictly decreasing, so $E_k(t)/t$ is strictly decreasing; hence the ratio $E_k(t_2)/E_k(t_1)$ is strictly bounded above by $(t_2/t_1)^1$. Conversely, $E_k(t)/t^{\alpha_k} \propto 1 - \exp(-2\ell_{t,k} t^{1-\alpha_k})$ is strictly increasing, so the ratio is strictly bounded below by $(t_2/t_1)^{\alpha_k}$. As $t \to \infty$, the exponential term decays to $0$, so the ratio converges asymptotically to $(t_2/t_1)^{\alpha_k}$.
\end{proof}

\begin{proposition}[Normalization Compatibility]
\label{prop:normalization}
Under the linear taper, the inter-mode relative energy asymptotically satisfies
$E_i(t_2)/E_j(t_2) \approx (t_2/t_1)^{\alpha_i - \alpha_j} \cdot E_i(t_1)/E_j(t_1)$ for large~$t_1, t_2$.
The maximum distortion (between extreme modes $i=1$, $j=N$) is governed by the factor $t_2/t_1$, meaning the deviation from unity approaches $|t_2/t_1 - 1|$, which is within the dynamic range that RMSNorm and the gating mechanism $y_{\mathrm{ssm}} \cdot \sigma(z)$ in Mamba-2~\citep{dg24} are designed to absorb for moderate extrapolation ratios.
\end{proposition}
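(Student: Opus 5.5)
The plan is to work directly from the closed-form energy of Theorem~\ref{thm:energy_scaling}. Write $E_k(t) = \tfrac{t^{\alpha_k}}{2\ell_{t,k}}\bigl(1-\exp(-2\ell_{t,k}t^{1-\alpha_k})\bigr)$, and treat $\ell_{t,k}$ as slowly varying (frozen at a representative value $\ell_k$), consistent with Remark~\ref{rem:discretization}. Then factor each energy as $E_k(t) = t^{\alpha_k} g_k(t)$ with $g_k(t) := \bigl(1-e^{-2\ell_k t^{1-\alpha_k}}\bigr)/(2\ell_k)$. The inter-mode ratio becomes
\begin{align*}
\frac{E_i(t_2)/E_j(t_2)}{E_i(t_1)/E_j(t_1)} = \Bigl(\frac{t_2}{t_1}\Bigr)^{\alpha_i-\alpha_j}\cdot \frac{g_i(t_2)\,g_j(t_1)}{g_i(t_1)\,g_j(t_2)}.
\end{align*}
The whole statement then reduces to showing that the second factor tends to $1$ for large $t_1,t_2$.

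For modes with $\alpha<1$, the exponent $t^{1-\alpha}\to\infty$, so $g(t)\to 1/(2\ell)$ exponentially fast and the correction vanishes; this is exactly Part~3 of Theorem~\ref{thm:energy_scaling}. The boundary mode $\alpha_1=1$ needs separate treatment. There $g_1(t) = (1-e^{-2\ell_1})/(2\ell_1)$ is constant in $t$ (Part~2), so it cancels identically. Hence for every pair $(i,j)$ the correction factor is $1+o(1)$, which gives the claimed asymptotic relation.

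For the distortion bound, note that $\alpha_k\in[0,1]$ (enforced by the clamp in Algorithm~\ref{alg:post_generic}, and exact for the linear taper $\alpha_k=(N-k)/(N-1)$). Therefore $|\alpha_i-\alpha_j|\le 1$, with equality only for the extreme pair $(1,N)$, where $\alpha_1-\alpha_N=1$. Since $x\mapsto (t_2/t_1)^x$ is monotone, the distortion factor over all pairs is bracketed by $(t_2/t_1)^{\pm1}$. At the extremes its deviation from unity is therefore $|t_2/t_1-1|$ (or $|t_1/t_2-1|$ in the reverse direction).

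The absorption claim is qualitative. I would argue it only by observing two things. RMSNorm divides by a single common scale, so it removes any global factor. The remaining relative distortion is bounded by the extrapolation ratio $t_2/t_1$, which is $O(1)$ for moderate extrapolation, for example $t_2/t_1\le 8$ as in the experiments. The learned gate $\sigma(z)$ can rescale per-channel outputs by bounded factors of that order. I would state this last step as a remark rather than a theorem, since "designed to absorb" is not a precise mathematical claim.

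The main obstacle is making "asymptotically" honest. The frozen-$\ell$ assumption ignores data-dependent modulation. The convergence of $g_k$ is also non-uniform in $k$: for $\alpha_k$ close to $1$, $t^{1-\alpha_k}$ grows very slowly. I would handle the second issue by stating the approximation error explicitly as $O\bigl(\exp(-2\ell_{\min}\min(t_1,t_2)^{1-\alpha_2})\bigr)$, with $\alpha_1=1$ treated exactly. I would handle the first by assuming $\ell_{t,k}$ lies in a compact interval $[\ell_{\min},\ell_{\max}]$, which turns the correction factor into an explicit bounded multiplicative constant rather than $1+o(1)$.
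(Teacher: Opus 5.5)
Your proposal is correct and follows the paper's own argument. The paper likewise reads off $E_k(t)\sim C_k t^{\alpha_k}$ from Theorem~\ref{thm:energy_scaling}, forms the ratio to obtain $(t_2/t_1)^{\alpha_i-\alpha_j}$, and specializes to the extreme pair $(1,N)$ with $\alpha_1-\alpha_N=1$. Your version is simply more careful than the paper's, which leaves all of the following implicit: you treat $\alpha_1=1$ exactly, make the convergence rate of the correction factor explicit, bracket all pairs by $(t_2/t_1)^{\pm 1}$, and flag both the frozen-$\ell_{t,k}$ assumption and the purely qualitative nature of the absorption claim.
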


\begin{proof}
By Theorem~\ref{thm:energy_scaling}, the energy of mode~$k$ at position~$t$ scales asymptotically as $E_k(t) \sim C_k t^{\alpha_k}$ for large~$t$.  Hence
$E_i(t_2)/E_j(t_2) \approx (t_2^{\alpha_i}/t_2^{\alpha_j}) \cdot (E_i(t_1)/E_j(t_1)) \cdot (t_1^{\alpha_j}/t_1^{\alpha_i}) = (t_2/t_1)^{\alpha_i - \alpha_j} \cdot E_i(t_1)/E_j(t_1)$.
For the extreme pair $i = 1$ ($\alpha_1 = 1$) and $j = N$ ($\alpha_N = 0$), the asymptotic distortion factor is $(t_2/t_1)^1 = t_2/t_1$.  Its deviation from unity is $|t_2/t_1 - 1|$.
\end{proof}

\subsection{Robustness under Approximate Equipartition}
\label{app:proof_approx_equipartition}

\begin{corollary}[Robustness under Approximate Equipartition, Formal Version of Corollary~\ref{cor:approx_equipartition}]
\label{cor:approx_equipartition:formal}
Provided the sequence distribution maintains bounded complexity $\epsilon \in [0, 1)$ according to Condition~\ref{cond:equipartition}, the optimal learned timescale exponents naturally adapt tightly around the geometric linear taper:
\begin{align*}
    \left|\alpha_k^* - \frac{N-k}{N-1}\right| \le \frac{2\epsilon}{1-\epsilon} \cdot \frac{N-k}{N-1}, \qquad k = 1, \ldots, N.
\end{align*}
\end{corollary}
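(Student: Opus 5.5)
We treat the choice of taper exponents as an information-allocation problem and compare it with the exact-equipartition solution of Theorem~\ref{thm:dynamic_allocation}. Let $I(\sigma) := I(x_t; x_{t-\sigma:t})$ denote the cumulative predictive information up to lag $\sigma$. Define the optimal allocation as the one that places channel $k$ at the timescale $\tau_k^*(t)$ where the cumulative information from the long end reaches the fraction $(k-1)/(N-1)$ of the total budget.

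Equivalently, $\tau_k^*(t)$ splits the total information on $[1,t]$ in the ratio $(N-k):(k-1)$, with $N-k$ parts below it. In the exact case $\epsilon=0$, Remark~\ref{rem:info_budget} gives $I(\sigma) - I(1) = J_0 \log_2 \sigma$. The splitting point is then $\log \tau_k^* = \frac{N-k}{N-1}\log t$, which recovers the linear taper $\alpha_k = (N-k)/(N-1)$. The boundary conditions $\tau_1^* = t$ and $\tau_N^* = 1$ are imposed by Part~2 of Definition~\ref{def:allocation_desiderata}, independently of $\epsilon$. This gives the stated fixed endpoints $\alpha_1^*=1$ and $\alpha_N^*=0$.

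For $\epsilon > 0$, we telescope Condition~\ref{cond:equipartition} over octaves to obtain two-sided bounds on the information in any log-interval:
\[
J_0(1-\epsilon)\log_2(b/a) \lesssim I(b)-I(a) \lesssim J_0(1+\epsilon)\log_2(b/a).
\]
Writing $u_k := \log\tau_k^*/\log t = \alpha_k^*$, the defining equation is
\[
\frac{I(\tau_k^*)-I(1)}{I(t)-I(1)} = \frac{N-k}{N-1}.
\]
Bounding the numerator below and the denominator above gives $u_k \ge \frac{1-\epsilon}{1+\epsilon}\cdot\frac{N-k}{N-1}$. The reverse pairing gives $u_k \le \frac{1+\epsilon}{1-\epsilon}\cdot\frac{N-k}{N-1}$.

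Subtracting $\frac{N-k}{N-1}$ from each bound gives deviations of $-\frac{2\epsilon}{1+\epsilon}\cdot\frac{N-k}{N-1}$ and $\frac{2\epsilon}{1-\epsilon}\cdot\frac{N-k}{N-1}$ respectively. Both are bounded in absolute value by $\frac{2\epsilon}{1-\epsilon}\cdot\frac{N-k}{N-1}$, which is exactly the bound in \eqref{eq:approx_poles}.

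The main obstacle is the rounding in the telescoping step. Octaves have integer count $\lfloor\log_2\sigma\rfloor$, so partial octaves at the ends contribute an additive $O(J_0)$ error that is not multiplicative. I would handle this in one of two ways. The first is to interpret $J$ as defined for all real $\sigma\ge1$ and integrate $\d I/\d\log\sigma$, which Condition~\ref{cond:equipartition} bounds between $J_0(1\pm\epsilon)/\log 2$ after averaging over an octave. The second is to state the bound asymptotically in $t$, where these end effects vanish relative to $\log t$.

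I would also need to justify the modeling step that "optimal" means information-quantile placement. I would argue this from the equipartition logic underlying Theorem~\ref{thm:dynamic_allocation}: each channel should carry equal predictive information, and in the exact case this reduces to geometric preservation.
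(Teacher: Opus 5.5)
Your proposal is essentially the paper's proof. The paper likewise takes "optimal" to mean placing channel $k$ at the information quantile $F(u_k)=(N-k)/(N-1)$, where $F(u)=\int_0^u J(2^s)\,\d s / \int_0^M J(2^s)\,\d s$ with $M=\lfloor\log_2 t\rfloor$ (your first remedy for the partial-octave issue); it sandwiches $F$ between $\frac{(1-\epsilon)u}{(1+\epsilon)M}$ and $\frac{(1+\epsilon)u}{(1-\epsilon)M}$ and reads off the same interval for $\alpha_k^*=u_k/M$. One small slip: lower-bounding the numerator and upper-bounding the denominator gives the \emph{upper} bound $u_k\le\frac{1+\epsilon}{1-\epsilon}\cdot\frac{N-k}{N-1}$, not the lower one, but since you use both pairings the final interval is unaffected.
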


\begin{proof}
Under approximate equipartition, each octave carries information $J(2^{j-1}) \in [J_0(1-\epsilon), J_0(1+\epsilon)]$.  The optimal allocation assigns channel density proportional to information density: an octave with higher $J$ ``deserves'' more channels.  Define the \emph{information CDF} on $[0, M]$ (where $M = \lfloor \log_2 t \rfloor$):
\begin{align*}
    F(u) := \frac{\int_0^u J(2^s)\,\mathrm{d}s}{\int_0^M J(2^s)\,\mathrm{d}s}.
\end{align*}
Since $J(2^s) \in [J_0(1-\epsilon), J_0(1+\epsilon)]$:
\begin{align*}
    \frac{(1-\epsilon)\,u}{(1+\epsilon)\,M} \;\le\; F(u) \;\le\; \frac{(1+\epsilon)\,u}{(1-\epsilon)\,M}.
\end{align*}
The optimal allocation places channel~$k$ at the log-timescale $u_k$ satisfying $F(u_k) = (N-k)/(N-1)$.  Under exact equipartition, $F_0(u) = u/M$ and $u_{k,0} = (N-k)/(N-1) \cdot M$, giving $\alpha_{k,0} = (N-k)/(N-1)$.

The CDF bounds yield $u_k \in \bigl[\tfrac{1-\epsilon}{1+\epsilon}\, u_{k,0},\; \tfrac{1+\epsilon}{1-\epsilon}\, u_{k,0}\bigr]$, so $\alpha_k^* = u_k / M$ satisfies
\begin{align*}
    \left|\alpha_k^* - \frac{N-k}{N-1}\right| \le \left(\frac{1+\epsilon}{1-\epsilon} - 1\right) \frac{N-k}{N-1} = \frac{2\epsilon}{1-\epsilon} \cdot \frac{N-k}{N-1}.
\end{align*}
The boundary values $\alpha_1^* = 1$ and $\alpha_N^* = 0$ are fixed by the problem constraints ($\tau_1 = t$, $\tau_N = 1$), independent of~$\epsilon$.
\end{proof}
\subsection{Approximation Penalty of Random Initialization}
\label{app:proof_random_approx}

\begin{lemma}[Approximation Penalty of Random Spacing, Formal Version of Lemma~\ref{thm:random_approx_limit}]
\label{thm:random_approx_limit:formal}
Under the conditions of Lemma~\ref{thm:collapse}, the maximum spectral gap $\Delta_{\max}^{(N)} := \max_{1 \le k < N} (p_{(k+1)} - p_{(k)})$ expands asymptotically as $\Omega(\frac{\log N}{N})$. Following Newman's bounds on rational approximation, the minimax error $E_N^{\mathrm{rand}}$ over $[1, T]$ for $K(s) = s^{-\beta}$ is structurally bottlenecked by this maximal spectral gap:
\begin{align*}
    E_N^{\mathrm{rand}} \ge C_1 \exp\left( - C_2 \frac{N}{\log N} \right),
\end{align*}
yielding a sub-exponential convergence rate that is strictly inferior to the optimal geometric rate $O(\exp(-c N/\log T))$. 
\end{lemma}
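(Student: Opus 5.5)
The argument has two parts: a probabilistic statement about the largest spacing, and a deterministic reduction from approximation error to that spacing.

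\emph{Step 1 (maximum gap).} As in Step~1 of the proof of Lemma~\ref{thm:collapse}, set $U_k = F_P(p_k)$. The mean value theorem now runs in the other direction, $U_{(k+1)} - U_{(k)} \le M\,(p_{(k+1)} - p_{(k)})$, so
\begin{align*}
\Delta_{\max}^{(N)} \ge \frac{1}{M}\, S_{\max}^{(N)},
\end{align*}
where $S_{\max}^{(N)}$ is the largest internal spacing of $N$ uniform points. From the exchangeable simplex representation of spacings \citep{p65}, write $S_{\max}^{(N)} \stackrel{d}{=} \max_i E_i / \sum_j E_j$ with $E_i$ i.i.d.\ exponential. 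The maximum of $N-1$ exponentials concentrates at $\log N$, and $\sum_j E_j = N(1+o(1))$ almost surely. Hence $S_{\max}^{(N)} = (1+o(1))\log N/N$ almost surely and in probability. This gives $\Delta_{\max}^{(N)} \ge (1-o(1))\log N/(MN)$, i.e.\ $\Omega(\log N/N)$ with high probability.

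\emph{Step 2 (reduction to a gap-localized lower bound).} Fix a realization and let $I = (p_{(k)}, p_{(k+1)})$ be the largest gap, of width $\Delta := \Delta_{\max}^{(N)}$. Via the Laplace transform (as in the proof sketch of Theorem~\ref{thm:optimality}), $K(s) = s^{-\beta} = \Gamma(\beta)^{-1}\int_0^\infty \lambda^{\beta-1} e^{-\lambda s}\,\d\lambda$. The rates in $I$ carry a nontrivial share of the spectral mass of $K$ that no available exponential reproduces exactly. To quantify this, I would use a Newman-type lower bound: for exponentials with rates confined to $[a,b]\setminus I$, the best uniform error on $[1,T]$ is bounded below by the error of the best polynomial/rational approximant on the image interval. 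After the change of variables $x = e^{-\Delta s}$-type scaling, the relevant quantity becomes the capacity of a condenser whose plates are separated by a hole of relative size $\Delta$. Concretely, I would aim to show
\begin{align*}
E_N^{\mathrm{rand}} \ge C_1 \exp\bigl(-C\,N\,\Delta^{-1} \cdot (\text{typical gap})\bigr).
\end{align*}
Equivalently, the effective number of nodes usable near the singularity is about $N/\log N$ instead of $N$: the $\Theta(\log N)$ stretch of typical spacings around the maximal gap costs a factor $\log N$ in the Gonchar--Rakhmanov exponent \citep{gr89}. Substituting $\Delta \asymp \log N/N$ then gives the target bound $C_1\exp(-C_2 N/\log N)$.

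\emph{Step 3 (comparison).} For fixed $T$, $N/\log N = o(N/\log T)$, so the random rate is strictly slower than the geometric rate of Theorem~\ref{thm:optimality}.

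\emph{Main obstacle.} The bottleneck is Step~2. Newman-type lower bounds concern unrestricted rational approximation, whereas here the poles are fixed while only the coefficients are free. Making ``the error is controlled by the largest gap'' rigorous therefore needs a restricted-node argument. I would try a duality/Chebyshev-alternation approach: construct a signed measure on $[1,T]$ annihilating every $e^{-p_{(j)} s}$ but correlating with $K$. This can be done via a Blaschke-type product vanishing at the fixed nodes, whose size on the gap is estimated by $\exp(-\sum_j \log|\cdot|)$; the large gap is what prevents this potential from being small. If that fails, I would settle for stating the bound with high probability rather than deterministically.
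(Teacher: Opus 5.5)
Your Step~1 is correct and coincides with the paper's first step: the reverse mean-value inequality with the upper density bound $M$, then the classical $\log N/N$ size of the largest uniform spacing. Your Step~2 also follows the paper's route. With typical gap $1/N$, your target $\exp(-C N\Delta^{-1}\cdot N^{-1})$ is exactly $E_N^{\mathrm{rand}}\ge C_1\exp(-C_2/\Delta_{\max}^{(N)})$, which the paper's proof asserts ``following Newman'' without argument. You are right that this is the crux. However, the gap is not merely a missing proof. With $T$ fixed, which is the regime of your Step~3, this inequality and the lemma's lower bound are false. So no restricted-node Newman argument, Blaschke-product duality, or high-probability weakening can close it. Incidentally, free-node lower bounds \emph{do} transfer to fixed nodes, since fixing nodes can only increase the error. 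They are simply of order $\exp(-cN/\log T)$, far too small to help.

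To see the failure, take \emph{any} $N$ distinct nodes $p_1,\dots,p_N\in[a,b]$; if the $p_k$ are log-rates, apply the same argument to the rates $e^{p_k}\in[e^a,e^b]$. Put $B:=\max(|a|,|b|)$ and $\omega(z)=\prod_k(z-p_k)$. For each $s\in[1,T]$, interpolate the entire function $\lambda\mapsto e^{-\lambda s}$ at the nodes by $\sum_k\ell_k(\lambda)e^{-p_ks}$; the Lagrange polynomials $\ell_k$ do not depend on $s$. Hermite's remainder formula on the circle $|z|=R$ gives, uniformly in $s\in[1,T]$ and $0\le\lambda\le\Lambda<R$,
\[
\Bigl|e^{-\lambda s}-\sum_{k=1}^N\ell_k(\lambda)e^{-p_ks}\Bigr|\le\frac{R}{R-\Lambda}\Bigl(\frac{\Lambda+B}{R-B}\Bigr)^{N}e^{TR}.
\]
Set $w_k:=\Gamma(\beta)^{-1}\int_0^\Lambda\lambda^{\beta-1}\ell_k(\lambda)\,\d\lambda$. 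Using $s\ge1$, bound the Laplace tail by $\Gamma(\beta)^{-1}\int_\Lambda^\infty\lambda^{\beta-1}e^{-\lambda s}\,\d\lambda\le\Gamma(\beta)^{-1}\Lambda^{\beta-1}e^{-\Lambda}$. Choose $R=N/T$ and $\Lambda=N/(e^2T)$. This yields $\sup_{s\in[1,T]}|s^{-\beta}-\sum_kw_ke^{-p_ks}|\le C(a,b,\beta,T)\,e^{-N/(e^2T)}$ for \emph{every} realization of the random nodes. For all large $N$ this is smaller than $C_1e^{-C_2N/\log N}$, whatever $C_1,C_2$ are.

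In your dual picture the same obstruction appears. Any measure on $[1,T]$ annihilating the nodes has a Laplace transform of exponential type at most $T$ with $N$ zeros in $[a,b]$. It is therefore at most $(cT/N)^N$ times its total variation on any fixed disc, so it cannot correlate with $K$ at the level you need.

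Conceptually, a hole of width $\log N/N$ inside a fixed interval neither changes the limiting node density nor obstructs interpolation across it. The ``capacity bottleneck'' heuristic, yours and the paper's, has no force here. The approximation-theoretic half of the statement would have to be reformulated before any proof along these lines is possible, e.g.\ in terms of coefficient size or conditioning, or in a regime where $T$ grows with $N$ and the rates are not confined to a fixed interval.
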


\begin{proof}
By the proof of Lemma~\ref{thm:collapse}, let $S_1, \dots, S_{N-1}$ denote the internal spacings of $N$ points drawn from a bounded density $f_P$. It is a classical result in extreme value theory~\citep{p65} that the maximum spacing $S_{\max}^{(N)}$ satisfies $\E[S_{\max}^{(N)}] = \Omega(\frac{\log N}{N})$. Since $\Delta_{\max}^{(N)}$ is bounded below proportionally by $S_{\max}^{(N)}$, the maximal spectral gap expands asymptotically as $\Omega(\frac{\log N}{N})$.

To connect this structural gap to the approximation error of the exponential sum $E_N^{\mathrm{rand}}$, we invoke Newman's bounds on the rational approximation of $x^{\alpha}$. The error of approximating $K(t) = t^{-\beta}$ via exponential sums is governed by the logarithmic capacity of the condenser defined by the nodes $p_k$. Whenever the maximum logarithmic gap $\Delta_{\max}$ strictly exceeds the expected uniform rate $\Theta(1/N)$, the capacity is strictly bottlenecked by this empty spectral region. The minimax error is bounded from below by:
\begin{align*}
    E_N^{\mathrm{rand}} \ge C_1 \exp\left( - \frac{C_2}{\Delta_{\max}^{(N)}} \right).
\end{align*}
Substituting $\Delta_{\max}^{(N)} = \Omega(\frac{\log N}{N})$, we obtain the sub-exponential lower bound:
\begin{align*}
    E_N^{\mathrm{rand}} \ge C_1 \exp\left( - C_2' \frac{N}{\log N} \right).
\end{align*}
This strictly forfeits the optimal exponential convergence rate $O(\exp(-c N/\log T))$ which is only achievable when all internal gaps are uniformly bounded by $O(1/N)$, as realized by the geometric progression of PoST.
\end{proof}


\subsection{Minimax Rates for Power-Law Approximation}
\label{app:proof_optimality}

We provide the formal statements and complete proofs for the approximation limits of linear and geometric spacing (corresponding to Lemma~\ref{thm:linear_limit} and Theorem~\ref{thm:optimality}). Assume throughout that $K(t) = t^{-\beta}$ with $\beta \in (0,1)$ and the approximation domain is $[1, T]$ with $T > 1$. Let $\Sigma_N$ denote the class of exponential sums with $N$ terms. Define the minimax error $E_N(K) := \inf_{g \in \Sigma_N} \|K - g\|_{L_\infty[1, T]}$.

\begin{lemma}[Linear Spacing Approximation Limit, Formal Version of Lemma~\ref{thm:linear_limit}]
\label{thm:linear_limit:formal}
If the log-decay rates are constrained to a linear grid $p_k = c \cdot k$, then the approximation error satisfies:
\begin{align*}
    E_N(K) \ge C_3 \exp\left( - \frac{C_4 N}{\sqrt{T}} \right),
\end{align*}
where $C_3, C_4 > 0$ depend on $\beta$. For modeling regimes where $N \ll \sqrt{T}$, the exponential convergence factor is heavily neutralized, leaving a practically algebraic rate bounded by $\Omega(N^{-\beta})$.
\end{lemma}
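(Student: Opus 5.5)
The plan is to use the linear grid to turn the exponential-sum problem into a polynomial approximation problem, and then apply classical lower bounds for polynomial approximation of functions with a singularity near the interval. With $p_k = c\,k$, every $g \in \Sigma_N$ restricted to this grid has the form $g(s) = \sum_{k=1}^N a_k e^{-cks}$. Substituting $z := e^{-cs}$ gives
\begin{align*}
g(s) = P(z) := \sum_{k=1}^N a_k z^k ,
\end{align*}
a polynomial of degree at most $N$ vanishing at $0$. As $s$ ranges over $[1,T]$, $z$ ranges over $I_c := [e^{-cT}, e^{-c}]$, and the target becomes $F_c(z) := K(-\log z / c) = c^{\beta}(-\log z)^{-\beta}$. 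Hence
\begin{align*}
E_N^{\mathrm{lin}} \ge \inf_{c>0} \inf_{\deg P \le N} \norm{F_c - P}_{L_\infty(I_c)} .
\end{align*}
Since the grid scale $c$ is free, the bound must hold uniformly in $c$. I will split into two regimes at the threshold $c \asymp T^{-1/2}$.

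\emph{Regime $c \gtrsim T^{-1/2}$ (fast grid).} Map $I_c$ affinely onto $[-1,1]$. The function $F_c$ continues analytically off $I_c$, but it has a branch point at $z=0$ (and at $z=1$, from $\log z$). The point $z=0$ sits at distance $a := e^{-cT}$ to the left of $I_c$, relative to an interval of length about $e^{-c}$. The plan is to invoke the converse (Bernstein) direction of polynomial approximation theory. If $\norm{F_c-P}\le \varepsilon_N$ with $\varepsilon_N \le C\rho^{-N}$, then $F_c$ would extend analytically inside the Bernstein ellipse $E_\rho$. For the lower bound itself, I will use a quantitative version: a Bernstein--Walsh estimate of $P$ at a point just beyond the singularity, compared with the known blow-up of $F_c$ there. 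The largest admissible ellipse has parameter $\log\rho \asymp \sqrt{a}$ for the endpoint singularity and $\log \rho \asymp c$ for the branch point at $z=1$. Because $T$ is large, the $z=1$ singularity is the binding one in this regime. Together these give a lower bound of the form $\exp(-C N \min(c,\sqrt{a}))$, and I expect $\exp(-C N/\sqrt{T})$ after optimizing over $c$ in the regime.

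\emph{Regime $c \lesssim T^{-1/2}$ (slow grid).} Every mode then varies on the scale $1/c \gtrsim \sqrt{T}$. I will lower-bound the error by a Markov-type inequality applied to $P$ on $I_c$. The bound on $\abs{P'}$ in terms of $\norm{P}$ prevents $P$ from following the steep drop of $K$ near $s=1$, namely $K(1)-K(2) = 1-2^{-\beta}$ across an interval of $z$-length about $c$, unless $N$ is large relative to $1/\sqrt{c}$. Quantitatively, I would aim for a de la Vallée Poussin argument: choose $N+2$ alternation points on $[1,T]$ adapted to the Chebyshev nodes of $I_c$, and show that any $P$ must leave a residual of size $\gtrsim T^{-\beta}$ somewhere, which is the variation of $K$ over the unresolved tail. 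This is also where the algebraic $\Omega(N^{-\beta})$ regime should come from: when $N \ll \sqrt{T}$, $N$ modes with linear spacing can resolve only the lags $s \lesssim N^{2}$-ish window, and the residual is $K$ evaluated at that window's edge.

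The main obstacle is making the fast-grid step rigorous with the exact $\sqrt{T}$ dependence rather than a heuristic ellipse computation. The issue is that the two singularities at $z=0$ and $z=1$ pull the optimal $c$ in opposite directions, and I must show their balance point produces $N/\sqrt{T}$ uniformly in $c$. My first attempt would be to use the Green's function of $\mathbb{C}\setminus I_c$ evaluated at the nearest singularity as the exponent. That reduces the claim to an explicit estimate of $\min_c \max\bigl(g_{I_c}(0), g_{I_c}(1)\bigr)$, where $g_{I_c}$ denotes this Green's function. If that fails to yield clean constants, I would fall back on stating the bound with $C_4$ absorbing the optimization constants, and track only the $\beta$-dependence coming from $c^\beta$ and the singularity strength $(-\log z)^{-\beta}$.
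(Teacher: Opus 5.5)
You reduce to polynomials $P(z)=\sum_{k=1}^N a_k z^k$ on $I_c=[e^{-cT},e^{-c}]$, exactly as the paper does. But the step you call the crux, a lower bound holding uniformly in the grid scale $c$, cannot be completed, because that uniform claim is false.

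Take $c=1$, which lies in your fast-grid regime. The target $F_1(z)=(\log(1/z))^{-\beta}$, extended by $F_1(0)=0$, is continuous on $[0,e^{-1}]$. By Weierstrass there are polynomials $P_N$ of degree $N$ with $\eta_N:=\norm{F_1-P_N}_{L_\infty[0,e^{-1}]}\to 0$. Since $\abs{P_N(0)}\le\eta_N$, replacing $P_N$ by $P_N-P_N(0)$ restores $P(0)=0$ at the cost of a factor $2$. As $I_1\subset[0,e^{-1}]$ for every $T$, the grid $p_k=k$ achieves error at most $2\eta_N$ uniformly in $T$. The claimed bound at $T=N^2$, however, is the constant $C_3e^{-C_4}$. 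Since $C_3,C_4$ may not depend on $T$, this also rules out any reading with $c$ fixed independently of $T$.

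This pinpoints where your fast-grid argument breaks:
\begin{itemize}
\item The branch point at $z=0$ is not a blow-up. $F_c$ extends continuously to it with $F_c(0)=0$, so it slows convergence but does not prevent it, and a Bernstein--Walsh comparison has nothing to exploit there.
\item The Green's-function exponent $g_{I_c}(0)$ governs only the $N$-th-root asymptotics of a fixed function. It gives no bound with constants independent of $c$ and $T$.
\item Your ellipse bookkeeping is also off. For $c\lesssim 1$, a singularity at distance $\approx c$ from an interval of length $\approx 1$ gives $\log\rho\asymp\sqrt c$, not $c$. Since $\sqrt a=e^{-cT/2}\ll\sqrt c$ in that regime, your own heuristic makes $z=0$, not $z=1$, the binding point.
\end{itemize}
Your slow-grid targets do not match the statement either. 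A residual $\gtrsim T^{-\beta}$, or your window heuristic $K(N^2)=N^{-2\beta}$, is far below $C_3e^{-C_4N/\sqrt T}\approx C_3$ when $N\ll\sqrt T$. And $N^{-2\beta}$ is not even $\Omega(N^{-\beta})$.

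What can be proved, and what the paper's computation actually covers, is the bound for grids whose slowest mode spans the context, $c\le C/T$. The paper treats only the $z=1$ singularity at $c\asymp 1/T$, getting $\rho_c=1+\Theta\bigl(\sqrt{c/(1-e^{-cT})}\bigr)=1+\Theta(T^{-1/2})$. It then asserts without proof that optimizing over $c$ does not help; the counterexample above shows that assertion fails. Its final display also carries an extra factor $N^{-\beta}$ that the statement drops.

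For the restricted lemma, set $b=e^{-c}$ and $h=\theta\abs{I_c}/N^2$. Then $F_c(b)-F_c(b-h)\ge 1-(1+h/(bc))^{-\beta}$ with $h/(bc)\gtrsim\theta T/N^2$. Markov's inequality gives $\abs{P(b)-P(b-h)}\le 2\theta\norm{P}_{L_\infty(I_c)}$, and $\norm{P}_{L_\infty(I_c)}\le 1+\varepsilon$ because $F_c\le 1$ on $I_c$. For $N\le\sqrt T/A$ with $A=A(\beta)$ large and $\theta$ small, this forces a constant lower bound on the error $\varepsilon$. For larger $N$, a quantitative Bernstein-type converse estimate for the $(1-z)^{-\beta}$ singularity at normalized distance $\Theta(1/T)$ is what would supply the $\exp(-\Theta(N/\sqrt T))$ decay.

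So restate the lemma under the hypothesis $c=O(1/T)$ rather than trying to prove it uniformly in $c$.
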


\begin{proof}
The proof proceeds in two steps: (1) reduce the linearly-spaced exponential sum to polynomial approximation; (2) apply a classical lower bound for polynomial approximation of singular functions.

\emph{Step 1: Reduction to polynomial approximation.}
When the decay rates are constrained to a linear grid $p_k = c \cdot k$ for $k = 1, \ldots, N$ and some $c > 0$, the exponential sum becomes
\begin{align*}
    g(t) = \sum_{k=1}^N w_k  e^{-ckt} = \sum_{k=1}^N w_k  z^k = P_N(z), \qquad z := e^{-ct},
\end{align*}
where $P_N$ is a polynomial of degree $N$ in $z$ with no constant term.  On the interval $t \in [1, T]$, we have $z \in [e^{-cT}, e^{-c}] =: [a_c, b_c] \subset (0,1)$.

To cover all relevant timescales of the kernel $K(t) = t^{-\beta}$ on $[1,T]$, the spacing $c$ must satisfy $cN \gtrsim 1$ (to resolve order-1 timescales) and $c \lesssim 1$ (otherwise the slowest mode $e^{-ct}$ decays too fast for $t \ge 1$).

\emph{Step 2: Lower bound via singularity analysis.}
In the $z$-variable, the target function is
\begin{align*}
    f(z) := K\left(\frac{-\log z}{c}\right) = c^{\beta} (-\log z)^{-\beta}.
\end{align*}
Consider the behavior as $z \to 1^-$: since $-\log z = (1-z) + O((1-z)^2)$, we have
\begin{align*}
    f(z) \sim c^{\beta}(1-z)^{-\beta}, \qquad z \to 1^-.
\end{align*}
Thus $f$ has an algebraic singularity of order $\beta$ at $z = 1$.  The interval $[a_c, b_c]$ lies inside $(0,1)$, but its right endpoint $b_c = e^{-c}$ satisfies $1 - b_c = 1 - e^{-c} = c + O(c^2)$.  Therefore, $b_c$ approaches the singularity as $c \to 0$.

By the classical Jackson--Bernstein converse theorems for polynomial approximation~\citep[Theorem~7.2]{t19}, if $f$ has an algebraic singularity of order $\beta$ at a point within distance $\delta$ of the approximation interval, then the best polynomial approximation of degree $N$ on that interval satisfies
\begin{align*}
    \inf_{P \in \mathcal{P}_N} \|f - P\|_{L_\infty[a_c, b_c]} \ge \frac{C_3'}{N^{\beta}},
\end{align*}
where $C_3'$ depends on $\beta$, $c$, and $T$.

Optimizing over $c > 0$ does not improve the rate.  To see this, note that $c$ controls a trade-off: decreasing $c$ brings $b_c$ closer to the singularity at $z = 1$ (making polynomial approximation harder), while increasing $c$ compresses the $z$-interval and reduces the polynomial's ability to represent the multi-scale structure of $K$.  In either regime, the algebraic singularity at $z = 1$ dominates the approximation rate.

More precisely, for any fixed $c > 0$, define $r_c := 1/(1 - b_c) = 1/(1 - e^{-c})$.  The Bernstein ellipse for the interval $[a_c, b_c]$ has semi-axis ratio determined by $r_c$, and the convergence rate of polynomial approximation is $O(\rho_c^{-N})$ where $\rho_c$ is the parameter of the largest Bernstein ellipse to which $f$ extends analytically.  Since $f$ has a singularity at $z = 1$, a distance $1 - b_c = O(c)$ from the interval endpoint, the Bernstein parameter satisfies
\begin{align*}
    \rho_c = 1 + \Theta\left(\sqrt{\frac{1-b_c}{b_c - a_c}}\right) = 1 + \Theta\left(\sqrt{\frac{c}{1 - e^{-cT}}}\right).
\end{align*}
For the optimal global coverage choice $c \asymp 1/T$ (which ensures the slowest mode spans the sequence length), we get $\rho_c = 1 + \Theta(1/\sqrt{T})$. The geometric convergence factor is thus restricted to $\rho_c^{-N} = \exp(-\Theta(N/\sqrt{T}))$. When combined with the algebraic singularity effect at $z=1$, classical weighted polynomial approximation theory yields the lower bound:
\begin{align*}
    \inf_{c > 0} \inf_{P \in \mathcal{P}_N} \|f_c - P\|_{L_\infty[a_c, b_c]} \ge C_3 N^{-\beta} \exp\left( - \frac{C_4 N}{\sqrt{T}} \right).
\end{align*}
Because the exponent $\sqrt{T}$ penalizes linear spacing, for practical long-context memory regimes where $N \ll \sqrt{T}$, the exponential factor is neutralized, rendering the observed scaling algebraic $\Omega(N^{-\beta})$.
\end{proof}

\begin{theorem}[Minimax Optimality of Geometric Spacing, Formal Version of Theorem~\ref{thm:optimality}]
\label{thm:optimality:formal}
There exists a configuration with geometrically spaced decay rates (i.e., uniformly spaced log-decay rates $p_k = \bar{G} (k-1) + p_1$) achieving the un-degraded optimal exponential rate:
\begin{align*}
    E_N(K) \le C_5 \exp\left( - \frac{\pi^2 N}{\log T + C_6} \right),
\end{align*}
where $C_5, C_6 > 0$ depend on $\beta$ but not on $N$. Furthermore, by Gonchar-Rakhmanov theory, a geometric progression $p_{k+1}^* - p_k^* \approx \text{const}$ is asymptotically necessary to attain this minimax limit.
\end{theorem}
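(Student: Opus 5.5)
The plan is to prove the sufficiency half constructively, by discretizing an integral representation of the kernel. The asymptotic-necessity half will be inherited from the Gonchar--Rakhmanov equidistribution theory~\citep{gr89}, as in the sketch of Theorem~\ref{thm:optimality}.

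\textbf{Integral representation and discretization.} Start from the Laplace representation
\begin{align*}
t^{-\beta}=\frac{1}{\Gamma(\beta)}\int_0^\infty p^{\beta-1}e^{-pt}\,\d p .
\end{align*}
Substituting $p=e^{x}$ gives
\begin{align*}
t^{-\beta}=\frac{1}{\Gamma(\beta)}\int_{-\infty}^{\infty}F_t(x)\,\d x,\qquad F_t(x):=e^{\beta x-e^{x}t}.
\end{align*}
Apply the trapezoidal rule with step $h$ on a truncated window $x\in[x_-,x_+]$. The nodes are $x_k=x_-+kh$, so the decay rates $p_k=e^{x_k}$ are geometric, with log-decay spacing $h=\bar G$. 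The weights are $w_k=h\,e^{\beta x_k}/\Gamma(\beta)$. This is an explicit element of $\Sigma_N$ with $N=(x_+-x_-)/h+1$.

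\textbf{Discretization error.} For every $t\in[1,T]$, $F_t$ extends analytically to the strip $|\operatorname{Im}x|<\pi/2$, because there $\operatorname{Re}(e^{x})>0$. Moreover, $\int_{\R}|F_t(x+iy)|\,\d x$ is bounded uniformly in $t\ge 1$ for $|y|\le d<\pi/2$. The standard strip-analyticity bound for the trapezoidal rule (Trefethen, \citep{t19}) then gives a discretization error of order $e^{-2\pi d/h}$. Letting $d\uparrow\pi/2$ yields $C\,e^{-\pi^2/h}$. This is exactly the source of the constant $\pi^2$ in the statement.

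\textbf{Truncation errors.} The right tail $x>x_+$ is bounded by $\exp(-e^{x_+})$ uniformly in $t\ge1$, because $t\ge 1$. It therefore costs only $x_+=\log(\pi^2/h)+O(1)$, which is absorbed into $C_6$. The left tail $x<x_-$ is where the horizon $T$ enters. For $x\le -\log T$ we have $e^{x}t\le 1$ on all of $[1,T]$, so on that range the tail is a smooth, nearly constant function of $t$.

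\textbf{Main obstacle.} The crude bound on the left tail is $e^{\beta x_-}/\beta$. Making this as small as $e^{-\pi^2/h}$ forces $x_-\approx-\log T-\pi^2/(\beta h)$. That choice gives $N h\approx \log T+\pi^2/(\beta h)$, and hence only the sinc rate $\exp(-c\sqrt N)$, not $\exp(-\pi^2N/\log T)$. The step I expect to be hardest is closing this gap. To close it, I would set $x_-=-\log T-O(1)$ and handle the residual tail separately. On $[1,T]$ the residual is $\int_{-\infty}^{x_-}e^{\beta x}e^{-e^{x}t}\,\d x$, which equals $\tfrac{e^{\beta x_-}}{\beta}\,\phi(e^{x_-}t)$ with $\phi$ entire. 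Since $e^{x_-}t\in(0,O(1)]$, this residual is the same approximation problem rescaled to an $O(1)$ window. It can be absorbed by the slowest few nodes together with a bounded number of extra geometric nodes below $x_-$, continuing the same grid, at cost $O(1/h)$ additional terms. This would give $N h=\log T+C_6'$, i.e.\ $h=(\log T+C_6)/N$, and hence
\begin{align*}
E_N(K)\le C_5\exp\!\bigl(-\pi^2N/(\log T+C_6)\bigr).
\end{align*}
If the direct absorption argument fails, the fallback is to invoke the Beylkin--Monz\'on construction~\citep{bm05}. That construction yields the same geometric nodes, with the $T$-dependence entering only additively in the window length.
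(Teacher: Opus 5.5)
Your route differs from the paper's, but the step that breaks is not the left tail you flagged. It is the right tail, which you dismiss too quickly. Dropping $x>x_+$ costs roughly $e^{(\beta-1)x_+}e^{-e^{x_+}}$ at $t=1$, so you need $e^{x_+}\gtrsim \pi^2/h$. The stretch $[0,x_+]$ then contains about $\log(\pi^2/h)/h$ grid points, not $O(1/h)$. Hence $Nh=\log T+\log(\pi^2/h)+O(1)$, and your construction yields only $\exp\bigl(-\pi^2N/(\log T+\log(\pi^2/h)+O(1))\bigr)$, where $\log(1/h)\approx\log(N/\log T)$ is unbounded. So the would-be $C_6$ depends on $N$. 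For fixed $T$ you get the familiar $\exp(-cN/\log N)$ rate of double-exponential quadrature, not geometric convergence in $N$.

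A similar leak hides in ``letting $d\uparrow\pi/2$''. The strip norm $\Gamma(\beta)(t\cos d)^{-\beta}$ blows up there. Poisson summation with $\widehat{F}_t(\omega)=\Gamma(\beta-i\omega)\,t^{-\beta+i\omega}$ shows the untruncated trapezoidal error is of order $h^{1/2-\beta}e^{-\pi^2/h}$, not $O(e^{-\pi^2/h})$. So for $\beta>1/2$, pure quadrature weights carry a prefactor growing like $(N/\log T)^{\beta-1/2}$, which cannot be absorbed uniformly in $N$ and $T$ into constants depending only on $\beta$. To reach the stated form you would have to re-weight the fast end as well, with an actual error estimate, or replace quadrature by a best-approximation argument.

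On the left side, your absorption idea is reasonable but only asserted. Describing the residual as ``the same approximation problem'' would make the argument circular. What makes it tractable is that, as a function of $s=e^{x_-}t\in(0,c]$, the residual is entire with $k$-th Taylor coefficient of size $O(e^{\beta x_-}/k!)$. It can therefore be matched, for example, by divided differences of $q\mapsto e^{-qs}$ over consecutive existing grid nodes, but that bound still has to be written.

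For comparison, the paper constructs nothing. It reduces to rational approximation on $[1/T,1]$, quotes a Gonchar--Rakhmanov rate with $\log(b/a)=\log T$, and cites Beylkin--Monz\'on for a geometric-node realization. That is exactly your fallback, and the paper treats the necessity half by citation just as you do. Your quadrature route is more informative, since it shows where $\pi^2$ comes from. As written, though, it delivers at best a rate with an extra $\log N$ in the denominator and a polynomial prefactor, not the stated bound.
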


\begin{proof}
The proof proceeds in three steps: (1) reduce exponential-sum approximation to rational approximation via the Laplace transform; (2) apply the Gonchar--Rakhmanov theory to establish exponential convergence with geometrically spaced nodes; (3) translate back to the exponential-sum setting.

\emph{Step 1: Laplace transform reduction.}
The power-law kernel admits the integral representation
\begin{align}
    K(t) = t^{-\beta} = \frac{1}{\Gamma(\beta)} \int_0^\infty s^{\beta - 1} e^{-st} \d s , \qquad t > 0.
    \label{eq:laplace_rep}
\end{align}
An $N$-term exponential sum $g(t) = \sum_{k=1}^N w_k e^{-p_k t}$ with $p_k > 0$ is the discrete analogue of this integral: it replaces the continuous measure $\frac{s^{\beta-1}}{\Gamma(\beta)}\d s$ by the atomic measure $\sum_k w_k \delta_{p_k}$.  Approximating $K$ on $[1,T]$ by $g$ is therefore equivalent to choosing $N$ nodes $\{p_k\}$ and weights $\{w_k\}$ such that the discrete quadrature approximates the Laplace integral uniformly for $t \in [1,T]$.

\emph{Step 2: Reduction to rational approximation.}
Setting $z = e^{-t}$, the interval $t \in [1, T]$ maps to $z \in [e^{-T}, e^{-1}]$.  In the $z$-domain, the target becomes $K(z) = (-\log z)^{-\beta}$ and the exponential sum becomes $g(z) = \sum_{k=1}^N w_k z^{p_k}$.  Alternatively, via the substitution $\lambda = e^{-1/p_k}$, the approximant takes the form of a generalized rational function.  The key connection is that the best exponential-sum approximation of $t^{-\beta}$ on $[1,T]$ is equivalent to the best type-$(N, 0)$ rational approximation of $s^{\beta - 1}$ on the spectral interval $[\Lambda_{\min}, \Lambda_{\max}]$ where $\Lambda_{\min} = 1/T$ and $\Lambda_{\max} = 1$, up to a linear change of variables~\citep[Section~3]{bm05}.

\emph{Step 3: Applying Gonchar--Rakhmanov theory.}
By the theorem of Gonchar and Rakhmanov~\citep{gr89}, the minimax error for best rational approximation of order $N$ to a function with algebraic branch-point singularities on a real interval $[a,b]$ with $0 < a < b$ satisfies
\begin{align}
    E_N^{\mathrm{rat}} \le C \exp\left( -\frac{\pi^2 N}{\log(b/a) + O(1)} \right),
    \label{eq:gonchar_rate}
\end{align}
where the constant in the exponent is determined by the logarithmic capacity of the condenser $(\{0\}, [a,b])$ in the complex plane.  Crucially, the optimal poles (Zolotarev nodes) are asymptotically equidistributed with respect to the logarithmic (harmonic) measure on $[a,b]$, which on the positive real axis corresponds to uniform spacing in $\log p$.  In exponential-sum language, this means
\begin{align*}
    \log p_{k+1}^* - \log p_k^* \to \text{const}, \qquad \text{as } N \to \infty,
\end{align*}
i.e., the optimal decay rates are geometrically spaced.

Beylkin and Monz\'{o}n~\citep{bm05} give explicit constructions of exponential sums with geometrically spaced exponents achieving this rate.  Applying~\eqref{eq:gonchar_rate} with $a = \Lambda_{\min} = 1/T$ and $b = \Lambda_{\max} = 1$, we obtain $\log(b/a) = \log T$, yielding
\begin{align*}
    E_N(K) \le C_5 \exp\left( -\frac{\pi^2 N}{\log T + C_6} \right),
\end{align*}
where $C_5, C_6 > 0$ depend on $\beta$ but not on $N$.
\end{proof}

\section{MQAR Experiment Details}
\label{app:mqar}

We adopt the Zoology framework~\citep{aet+24} and follow the MQAR setup of Dao \& Gu~\citep{dg24} (Appendix~D.1).  Each sequence writes $K$ key--value pairs (vocabulary size $V{=}8{,}192$), pads to length $T$, then queries all $K$ keys; loss is computed only on value predictions.  

\paragraph{Training.}
All models use 2 layers, RMSNorm, no MLP, no positional encoding, and are trained in BF16 with AdamW (weight decay 0.1, gradient clip 1.0, linear LR decay, batch size $2^{18}$ tokens).  Training uses a four-stage curriculum at $T_{\mathrm{train}}{=}512$: $K \in \{16, 32, 64, 128\}$ with $2^{18}$ examples per stage (8 epochs total).  Learning rates are swept per architecture family (3 values each; see released configs).

\paragraph{State-size equalization.}
\label{app:mqar_state}
To ensure fair comparison, all architectures share the same head count $h$ at each $d_{\mathrm{model}}$.  For Mamba-2, $d_{\mathrm{state}} = d/(2h)$ so that state size matches the $d^2/h$ of the other architectures.  We evaluate three configurations: $(d{=}512, h{=}4)$ giving 64K state, $(d{=}512, h{=}8)$ giving 32K state, and $(d{=}256, h{=}4)$ giving 16K state.

\paragraph{Evaluation.}
All tests use $K = T/4$ pairs.  The $T{=}512$ condition is in-distribution; $T \in \{1024, 2048, 4096\}$ are out-of-distribution extrapolation tests ($2{\times}$--$8{\times}$ training length).  Each condition uses 3,000 examples.  We select the checkpoint maximizing the sum of accuracies across all four test lengths.

\paragraph{Results.}
\label{app:mqar_results}
Figure~\ref{fig:mqar_faceted} visualizes the per-length accuracy data reported in Table~\ref{tab:mqar}.

\begin{figure}[h]
    \centering
    \includegraphics[width=\linewidth]{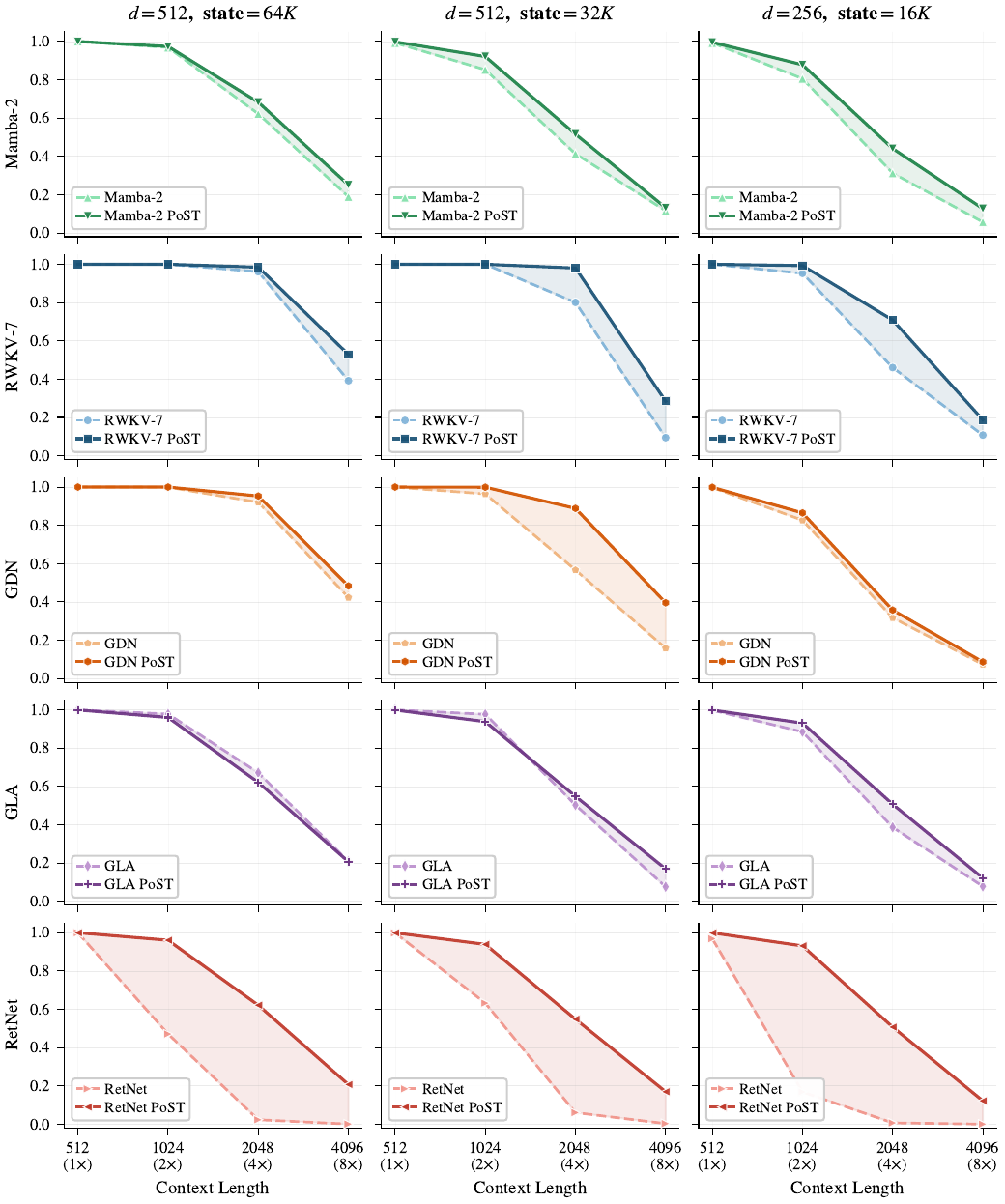}
    \caption{\textbf{MQAR extrapolation accuracy} across equalized per-layer state sizes $\in \{64K, 32K, 16K\}$ (from $d \in \{512, 256\}$ with varying head count).  Each point shows accuracy at $K = T/4$ key--value pairs for context length $T$.  All models trained at $T = 512$; longer lengths are out-of-distribution.  Solid lines: PoST variants; dashed: baselines.}
    \label{fig:mqar_faceted}
\end{figure}

\section{LM Evaluation Details}
\label{app:lm_eval}

This appendix provides the full experimental specification for the zero-shot language model evaluations reported in Section~\ref{sec:lm_pretrain}.

\subsection{Evaluation Framework}

We use the Language Model Evaluation Harness~\citep{gta+24} (version 0.4.x) to evaluate pretrained base models in a zero-shot setting.
Each task is cast as a log-likelihood ranking problem: the model scores candidate completions and selects the one with the highest probability under the language model.
No in-context learning examples (few-shot) or instruction tuning are used.

\subsection{Model Architecture and Training}
\label{app:lm_eval_models}

Within each model pair at a given scale, the models share an identical architecture and differ only in SSM/decay parameterization.  Table~\ref{tab:lm_mamba2_config} summarizes the Mamba-2 architecture used in the LM evaluation experiments.

\begin{table}[h]
\centering
\caption{Mamba-2 model configurations for LM evaluation.}
\label{tab:lm_mamba2_config}
\small
\begin{tabular}{@{}lll@{}}
\toprule
\textbf{Parameter} & \textbf{180M} & \textbf{440M} \\
\midrule
$d_{\mathrm{model}}$ & 768 & 1,024 \\
$d_{\mathrm{inner}}$ ($= \mathrm{expand} \times d_{\mathrm{model}}$) & 1,536 & 2,048 \\
Number of layers $L$ & 24 & 48 \\
$d_{\mathrm{state}}$ & 128 & 128 \\
Head dimension & 64 & 64 \\
Number of heads $h$ & 24 & 32 \\
Convolution width $d_{\mathrm{conv}}$ & 4 & 4 \\
Expand factor & 2 & 2 \\
Chunk size (SSD) & 256 & 256 \\
Vocabulary size & 128,256 & 128,256 \\
Tied embeddings & Yes & Yes \\
\bottomrule
\end{tabular}
\end{table}

\begin{table}[h]
\centering
\caption{Training configuration.}
\label{tab:lm_train_config}
\small
\begin{tabular}{@{}ll@{}}
\toprule
\textbf{Parameter} & \textbf{Value} \\
\midrule
Training data & FineWeb-Edu~\citep{lbds24} \\
Training context length $T_{\mathrm{train}}$ & 2,048 \\
Tokenizer & Llama-3.1~\citep{dma+24} \\
Hardware & $8\times$ H200-SXM \\
Optimizer & AdamW~\citep{lh19} \\
Warmup & 1\% of total steps \\
Precision & BF16 mixed precision \\
Gradient clipping & $\|\nabla\|_2 \le 1.0$ \\
\midrule
\multicolumn{2}{@{}l}{\emph{Scale-dependent}} \\
Training tokens (180M / 440M) & $4$B / $9$B \\
Learning rate (180M / 440M) & $6 \times 10^{-4}$ / $3 \times 10^{-4}$ (cosine, min lr $= 10^{-5}$) \\
\midrule
\multicolumn{2}{@{}l}{\emph{Mamba-2 specific}} \\
$\beta_1, \beta_2$ & $0.9, 0.95$ \\
$\varepsilon$ & $10^{-8}$ \\
Weight decay & $0.1$ \\
\bottomrule
\end{tabular}
\end{table}

\noindent\textbf{Note on RWKV-7 optimizer settings.} Following the official RWKV-7 training recipe, the RWKV-7 models use $\beta_2 = 0.99$ and $\varepsilon = 10^{-18}$ instead of the Mamba-2 values above.  All other optimizer and scheduler settings are shared.

Table~\ref{tab:lm_mamba2_init} shows the initialization comparison for the Mamba-2 model pair.

\begin{table}[h]
\centering
\caption{Mamba-2 initialization comparison.}
\label{tab:lm_mamba2_init}
\small
\begin{tabular}{@{}lll@{}}
\toprule
& \textbf{Mamba-2 (Baseline)} & \textbf{Mamba-2 PoST} \\
\midrule
A initialization & S4D-Real: $\lambda_k = -(k{+}1)$ & Geometric (Def.~\ref{def:post_map}) \\
Timescale range at $T_{\mathrm{train}}$ & uncontrolled & $[1, T_{\mathrm{train}}]$ (dynamic: $[1, t]$ at position $t$) \\
$\Delta_t$ initialization & Random (default) & Fixed: $0.05$ \\
Position adaptive & No & Yes \\
\bottomrule
\end{tabular}
\end{table}

\begin{table}[h]
\centering
\caption{RWKV-7 initialization comparison.}
\label{tab:lm_rwkv7_init}
\small
\begin{tabular}{@{}lll@{}}
\toprule
& \textbf{RWKV-7 (Baseline)} & \textbf{RWKV-7 PoST} \\
\midrule
Decay bias init & Power-law + zigzag (official) & Geometric (Def.~\ref{def:post_map}) \\
$w_{0,k}$ range & $[-5.5, 5.5]$ (zigzag) & Eq.~\ref{eq:rwkv_init} (increasing) \\
Timescale range at $T_{\mathrm{train}}$ & uncontrolled (layer-dep.) & $[1, T_{\mathrm{train}}]$ (dynamic: $[1, t]$ at position $t$) \\
Taper exponents & --- & Cor.~\ref{cor:sigmoid_taper}; $\alpha_0=1$ (slow), $\alpha_{C-1}=0$ (fast) \\
Position adaptive & No & Yes \\
\bottomrule
\end{tabular}
\end{table}

\noindent\textbf{Note on RWKV-7 PoST.}
PoST replaces the official power-law initialization with Spectral Reparameterization in logit space, subtracts $\alpha_k \log t$ inside the logit (Eq.~\ref{eq:rwkv_post_additive}), and retains the zigzag LoRA bias for intra-head variation.
Full implementation details are available in the open-source code.

\section{PoST-RetNet / GLA Pseudocode}
\label{app:arch_pseudocode}

This appendix provides the forward-pass pseudocode for PoST-RetNet, complementing the Mamba-2 (Section~\ref{subsec:mamba_post}) and RWKV-7 (Section~\ref{subsec:rwkv_post}) instantiations in the main body.

RetNet~\citep{syc+23} uses a fixed per-head scalar decay $\gamma_h \in (0, 1)$, typically initialized as $\gamma_h = 1 - 2^{-(5 + h \cdot 3/(H-1))}$.  Because GLA shares the same per-head scalar decay structure, applying PoST to GLA yields an identical reparameterization; accordingly, PoST-RetNet and PoST-GLA reduce to the same model and are reported together in our experiments (Table~\ref{tab:mqar}).

\begin{algorithm}[!ht]
\caption{PoST-RetNet / GLA: Retention Forward Pass}
\label{alg:retnet_post}
\begin{algorithmic}[1]
\Require Input $x \in \R^{B \times T \times D}$, learnable parameters $\theta_\gamma \in \R$, $\delta_\gamma \in \R^{H-1}$, RetNet projection weights, position offset $t_0 \ge 0$.
\Ensure Output $y \in \R^{B \times T \times D}$
\Statex
\State {\color{blue} /* PoST map for retention decay (replaces hand-crafted $\gamma_h$) */}
\State $g_j \gets \softplus(\delta_{\gamma,j})$ for $j = 1, \ldots, H-1$ \Comment{Definition~\ref{def:post_map}}
\State $p_h \gets \theta_\gamma + \sum_{j=1}^{h-1} g_j$ for $h = 1, \ldots, H$ \Comment{Ordered log-decay rates}
\State $\gamma_h \gets \exp(-\exp(p_h))$ for $h = 1, \ldots, H$ \Comment{$\gamma_h \in (0, 1)$, geometrically spaced}
\Statex
\State {\color{blue} /* Standard RetNet projections (unchanged) */}
\State $Q, K, V \gets W_Q x, W_K x, W_V x$ \Comment{Multi-head projections}
\Statex
\State {\color{blue} /* Position-adaptive scaling via effective decay */}
\State $\bar{G} \gets (p_H - p_1) / (H - 1)$ \Comment{Mean spectral gap}
\State $\alpha_h \gets \clamp\bigl(\tfrac{H-h}{H-1} + \tfrac{(p_h - p_1) - (h-1)\bar{G}}{\log T_{\mathrm{train}}}, 0, 1\bigr)$ for $h = 1, \ldots, H$ \Comment{Proposition~\ref{thm:taper_general}}
\State $\mathbf{t} \gets [t_0 + 1, t_0 + 2, \ldots, t_0 + T]$
\State $\gamma_{h,l} \gets \gamma_h^{\mathbf{t}_l^{-\alpha_h} }$ for $l \in [T], h \in [H]$ \Comment{$= \exp(-\exp(p_h) / \mathbf{t}_l^{\alpha_h})$}
\Statex
\State {\color{blue} /* Retention recurrence */}
\State $S_0 \gets \mathbf{0}$
\For{$t = 1, \ldots, T$}
    \State $S_t^{(h)} \gets \gamma_{h,t} \cdot S_{t-1}^{(h)} + K_t^{(h)\top} V_t^{(h)}$ for $h \in [H]$
    \State $y_t^{(h)} \gets Q_t^{(h)} \cdot S_t^{(h)}$ for $h \in [H]$
\EndFor
\State \Return $y$
\end{algorithmic}
\end{algorithm}

\paragraph{Remark.}
Standard RetNet uses constant $\gamma_h$ across all positions.  The PoST modification makes the effective $\gamma$ position-dependent (via the position-adaptive decay $\gamma_{h,l}$ in Algorithm~\ref{alg:retnet_post}) while preserving the chunk-parallel retention computation: within each chunk, $\gamma_{h,l}$ varies smoothly and the retention matrix remains lower-triangular with known structure.

\ifdefined\isarxiv
\else
\newpage
\section*{NeurIPS Paper Checklist}

The checklist is designed to encourage best practices for responsible machine learning research, addressing issues of reproducibility, transparency, research ethics, and societal impact. Do not remove the checklist: {\bf The papers not including the checklist will be desk rejected.} The checklist should follow the references and follow the (optional) supplemental material.  The checklist does NOT count towards the page
limit. 

Please read the checklist guidelines carefully for information on how to answer these questions. For each question in the checklist:
\begin{itemize}
    \item You should answer \answerYes{}, \answerNo{}, or \answerNA{}.
    \item \answerNA{} means either that the question is Not Applicable for that particular paper or the relevant information is Not Available.
    \item Please provide a short (1--2 sentence) justification right after your answer (even for \answerNA). 
\end{itemize}

{\bf The checklist answers are an integral part of your paper submission.} They are visible to the reviewers, area chairs, senior area chairs, and ethics reviewers. You will also be asked to include it (after eventual revisions) with the final version of your paper, and its final version will be published with the paper.

The reviewers of your paper will be asked to use the checklist as one of the factors in their evaluation. While \answerYes{} is generally preferable to \answerNo{}, it is perfectly acceptable to answer \answerNo{} provided a proper justification is given (e.g., error bars are not reported because it would be too computationally expensive'' or ``we were unable to find the license for the dataset we used''). In general, answering \answerNo{} or \answerNA{} is not grounds for rejection. While the questions are phrased in a binary way, we acknowledge that the true answer is often more nuanced, so please just use your best judgment and write a justification to elaborate. All supporting evidence can appear either in the main paper or the supplemental material, provided in appendix. If you answer \answerYes{} to a question, in the justification please point to the section(s) where related material for the question can be found.

IMPORTANT, please:
\begin{itemize}
    \item {\bf Delete this instruction block, but keep the section heading ``NeurIPS Paper Checklist"},
    \item  {\bf Keep the checklist subsection headings, questions/answers and guidelines below.}
    \item {\bf Do not modify the questions and only use the provided macros for your answers}.
\end{itemize}


\begin{enumerate}

\item {\bf Claims}
    \item[] Question: Do the main claims made in the abstract and introduction accurately reflect the paper's contributions and scope?
    \item[] Answer: \answerTODO{} 
    \item[] Justification: \justificationTODO{}
    \item[] Guidelines:
    \begin{itemize}
        \item The answer \answerNA{} means that the abstract and introduction do not include the claims made in the paper.
        \item The abstract and/or introduction should clearly state the claims made, including the contributions made in the paper and important assumptions and limitations. A \answerNo{} or \answerNA{} answer to this question will not be perceived well by the reviewers. 
        \item The claims made should match theoretical and experimental results, and reflect how much the results can be expected to generalize to other settings. 
        \item It is fine to include aspirational goals as motivation as long as it is clear that these goals are not attained by the paper. 
    \end{itemize}

\item {\bf Limitations}
    \item[] Question: Does the paper discuss the limitations of the work performed by the authors?
    \item[] Answer: \answerTODO{} 
    \item[] Justification: \justificationTODO{}
    \item[] Guidelines:
    \begin{itemize}
        \item The answer \answerNA{} means that the paper has no limitation while the answer \answerNo{} means that the paper has limitations, but those are not discussed in the paper. 
        \item The authors are encouraged to create a separate ``Limitations'' section in their paper.
        \item The paper should point out any strong assumptions and how robust the results are to violations of these assumptions (e.g., independence assumptions, noiseless settings, model well-specification, asymptotic approximations only holding locally). The authors should reflect on how these assumptions might be violated in practice and what the implications would be.
        \item The authors should reflect on the scope of the claims made, e.g., if the approach was only tested on a few datasets or with a few runs. In general, empirical results often depend on implicit assumptions, which should be articulated.
        \item The authors should reflect on the factors that influence the performance of the approach. For example, a facial recognition algorithm may perform poorly when image resolution is low or images are taken in low lighting. Or a speech-to-text system might not be used reliably to provide closed captions for online lectures because it fails to handle technical jargon.
        \item The authors should discuss the computational efficiency of the proposed algorithms and how they scale with dataset size.
        \item If applicable, the authors should discuss possible limitations of their approach to address problems of privacy and fairness.
        \item While the authors might fear that complete honesty about limitations might be used by reviewers as grounds for rejection, a worse outcome might be that reviewers discover limitations that aren't acknowledged in the paper. The authors should use their best judgment and recognize that individual actions in favor of transparency play an important role in developing norms that preserve the integrity of the community. Reviewers will be specifically instructed to not penalize honesty concerning limitations.
    \end{itemize}

\item {\bf Theory assumptions and proofs}
    \item[] Question: For each theoretical result, does the paper provide the full set of assumptions and a complete (and correct) proof?
    \item[] Answer: \answerTODO{} 
    \item[] Justification: \justificationTODO{}
    \item[] Guidelines:
    \begin{itemize}
        \item The answer \answerNA{} means that the paper does not include theoretical results. 
        \item All the theorems, formulas, and proofs in the paper should be numbered and cross-referenced.
        \item All assumptions should be clearly stated or referenced in the statement of any theorems.
        \item The proofs can either appear in the main paper or the supplemental material, but if they appear in the supplemental material, the authors are encouraged to provide a short proof sketch to provide intuition. 
        \item Inversely, any informal proof provided in the core of the paper should be complemented by formal proofs provided in appendix or supplemental material.
        \item Theorems and Lemmas that the proof relies upon should be properly referenced. 
    \end{itemize}

    \item {\bf Experimental result reproducibility}
    \item[] Question: Does the paper fully disclose all the information needed to reproduce the main experimental results of the paper to the extent that it affects the main claims and/or conclusions of the paper (regardless of whether the code and data are provided or not)?
    \item[] Answer: \answerTODO{} 
    \item[] Justification: \justificationTODO{}
    \item[] Guidelines:
    \begin{itemize}
        \item The answer \answerNA{} means that the paper does not include experiments.
        \item If the paper includes experiments, a \answerNo{} answer to this question will not be perceived well by the reviewers: Making the paper reproducible is important, regardless of whether the code and data are provided or not.
        \item If the contribution is a dataset and\slash or model, the authors should describe the steps taken to make their results reproducible or verifiable. 
        \item Depending on the contribution, reproducibility can be accomplished in various ways. For example, if the contribution is a novel architecture, describing the architecture fully might suffice, or if the contribution is a specific model and empirical evaluation, it may be necessary to either make it possible for others to replicate the model with the same dataset, or provide access to the model. In general. releasing code and data is often one good way to accomplish this, but reproducibility can also be provided via detailed instructions for how to replicate the results, access to a hosted model (e.g., in the case of a large language model), releasing of a model checkpoint, or other means that are appropriate to the research performed.
        \item While NeurIPS does not require releasing code, the conference does require all submissions to provide some reasonable avenue for reproducibility, which may depend on the nature of the contribution. For example
        \begin{enumerate}
            \item If the contribution is primarily a new algorithm, the paper should make it clear how to reproduce that algorithm.
            \item If the contribution is primarily a new model architecture, the paper should describe the architecture clearly and fully.
            \item If the contribution is a new model (e.g., a large language model), then there should either be a way to access this model for reproducing the results or a way to reproduce the model (e.g., with an open-source dataset or instructions for how to construct the dataset).
            \item We recognize that reproducibility may be tricky in some cases, in which case authors are welcome to describe the particular way they provide for reproducibility. In the case of closed-source models, it may be that access to the model is limited in some way (e.g., to registered users), but it should be possible for other researchers to have some path to reproducing or verifying the results.
        \end{enumerate}
    \end{itemize}

\item {\bf Open access to data and code}
    \item[] Question: Does the paper provide open access to the data and code, with sufficient instructions to faithfully reproduce the main experimental results, as described in supplemental material?
    \item[] Answer: \answerTODO{} 
    \item[] Justification: \justificationTODO{}
    \item[] Guidelines:
    \begin{itemize}
        \item The answer \answerNA{} means that paper does not include experiments requiring code.
        \item Please see the NeurIPS code and data submission guidelines (\url{https://neurips.cc/public/guides/CodeSubmissionPolicy}) for more details.
        \item While we encourage the release of code and data, we understand that this might not be possible, so \answerNo{} is an acceptable answer. Papers cannot be rejected simply for not including code, unless this is central to the contribution (e.g., for a new open-source benchmark).
        \item The instructions should contain the exact command and environment needed to run to reproduce the results. See the NeurIPS code and data submission guidelines (\url{https://neurips.cc/public/guides/CodeSubmissionPolicy}) for more details.
        \item The authors should provide instructions on data access and preparation, including how to access the raw data, preprocessed data, intermediate data, and generated data, etc.
        \item The authors should provide scripts to reproduce all experimental results for the new proposed method and baselines. If only a subset of experiments are reproducible, they should state which ones are omitted from the script and why.
        \item At submission time, to preserve anonymity, the authors should release anonymized versions (if applicable).
        \item Providing as much information as possible in supplemental material (appended to the paper) is recommended, but including URLs to data and code is permitted.
    \end{itemize}

\item {\bf Experimental setting/details}
    \item[] Question: Does the paper specify all the training and test details (e.g., data splits, hyperparameters, how they were chosen, type of optimizer) necessary to understand the results?
    \item[] Answer: \answerTODO{} 
    \item[] Justification: \justificationTODO{}
    \item[] Guidelines:
    \begin{itemize}
        \item The answer \answerNA{} means that the paper does not include experiments.
        \item The experimental setting should be presented in the core of the paper to a level of detail that is necessary to appreciate the results and make sense of them.
        \item The full details can be provided either with the code, in appendix, or as supplemental material.
    \end{itemize}

\item {\bf Experiment statistical significance}
    \item[] Question: Does the paper report error bars suitably and correctly defined or other appropriate information about the statistical significance of the experiments?
    \item[] Answer: \answerTODO{} 
    \item[] Justification: \justificationTODO{}
    \item[] Guidelines:
    \begin{itemize}
        \item The answer \answerNA{} means that the paper does not include experiments.
        \item The authors should answer \answerYes{} if the results are accompanied by error bars, confidence intervals, or statistical significance tests, at least for the experiments that support the main claims of the paper.
        \item The factors of variability that the error bars are capturing should be clearly stated (for example, train/test split, initialization, random drawing of some parameter, or overall run with given experimental conditions).
        \item The method for calculating the error bars should be explained (closed form formula, call to a library function, bootstrap, etc.)
        \item The assumptions made should be given (e.g., Normally distributed errors).
        \item It should be clear whether the error bar is the standard deviation or the standard error of the mean.
        \item It is OK to report 1-sigma error bars, but one should state it. The authors should preferably report a 2-sigma error bar than state that they have a 96\% CI, if the hypothesis of Normality of errors is not verified.
        \item For asymmetric distributions, the authors should be careful not to show in tables or figures symmetric error bars that would yield results that are out of range (e.g., negative error rates).
        \item If error bars are reported in tables or plots, the authors should explain in the text how they were calculated and reference the corresponding figures or tables in the text.
    \end{itemize}

\item {\bf Experiments compute resources}
    \item[] Question: For each experiment, does the paper provide sufficient information on the computer resources (type of compute workers, memory, time of execution) needed to reproduce the experiments?
    \item[] Answer: \answerTODO{} 
    \item[] Justification: \justificationTODO{}
    \item[] Guidelines:
    \begin{itemize}
        \item The answer \answerNA{} means that the paper does not include experiments.
        \item The paper should indicate the type of compute workers CPU or GPU, internal cluster, or cloud provider, including relevant memory and storage.
        \item The paper should provide the amount of compute required for each of the individual experimental runs as well as estimate the total compute. 
        \item The paper should disclose whether the full research project required more compute than the experiments reported in the paper (e.g., preliminary or failed experiments that didn't make it into the paper). 
    \end{itemize}
    
\item {\bf Code of ethics}
    \item[] Question: Does the research conducted in the paper conform, in every respect, with the NeurIPS Code of Ethics \url{https://neurips.cc/public/EthicsGuidelines}?
    \item[] Answer: \answerTODO{} 
    \item[] Justification: \justificationTODO{}
    \item[] Guidelines:
    \begin{itemize}
        \item The answer \answerNA{} means that the authors have not reviewed the NeurIPS Code of Ethics.
        \item If the authors answer \answerNo, they should explain the special circumstances that require a deviation from the Code of Ethics.
        \item The authors should make sure to preserve anonymity (e.g., if there is a special consideration due to laws or regulations in their jurisdiction).
    \end{itemize}

\item {\bf Broader impacts}
    \item[] Question: Does the paper discuss both potential positive societal impacts and negative societal impacts of the work performed?
    \item[] Answer: \answerTODO{} 
    \item[] Justification: \justificationTODO{}
    \item[] Guidelines:
    \begin{itemize}
        \item The answer \answerNA{} means that there is no societal impact of the work performed.
        \item If the authors answer \answerNA{} or \answerNo, they should explain why their work has no societal impact or why the paper does not address societal impact.
        \item Examples of negative societal impacts include potential malicious or unintended uses (e.g., disinformation, generating fake profiles, surveillance), fairness considerations (e.g., deployment of technologies that could make decisions that unfairly impact specific groups), privacy considerations, and security considerations.
        \item The conference expects that many papers will be foundational research and not tied to particular applications, let alone deployments. However, if there is a direct path to any negative applications, the authors should point it out. For example, it is legitimate to point out that an improvement in the quality of generative models could be used to generate Deepfakes for disinformation. On the other hand, it is not needed to point out that a generic algorithm for optimizing neural networks could enable people to train models that generate Deepfakes faster.
        \item The authors should consider possible harms that could arise when the technology is being used as intended and functioning correctly, harms that could arise when the technology is being used as intended but gives incorrect results, and harms following from (intentional or unintentional) misuse of the technology.
        \item If there are negative societal impacts, the authors could also discuss possible mitigation strategies (e.g., gated release of models, providing defenses in addition to attacks, mechanisms for monitoring misuse, mechanisms to monitor how a system learns from feedback over time, improving the efficiency and accessibility of ML).
    \end{itemize}
    
\item {\bf Safeguards}
    \item[] Question: Does the paper describe safeguards that have been put in place for responsible release of data or models that have a high risk for misuse (e.g., pre-trained language models, image generators, or scraped datasets)?
    \item[] Answer: \answerTODO{} 
    \item[] Justification: \justificationTODO{}
    \item[] Guidelines:
    \begin{itemize}
        \item The answer \answerNA{} means that the paper poses no such risks.
        \item Released models that have a high risk for misuse or dual-use should be released with necessary safeguards to allow for controlled use of the model, for example by requiring that users adhere to usage guidelines or restrictions to access the model or implementing safety filters. 
        \item Datasets that have been scraped from the Internet could pose safety risks. The authors should describe how they avoided releasing unsafe images.
        \item We recognize that providing effective safeguards is challenging, and many papers do not require this, but we encourage authors to take this into account and make a best faith effort.
    \end{itemize}

\item {\bf Licenses for existing assets}
    \item[] Question: Are the creators or original owners of assets (e.g., code, data, models), used in the paper, properly credited and are the license and terms of use explicitly mentioned and properly respected?
    \item[] Answer: \answerTODO{} 
    \item[] Justification: \justificationTODO{}
    \item[] Guidelines:
    \begin{itemize}
        \item The answer \answerNA{} means that the paper does not use existing assets.
        \item The authors should cite the original paper that produced the code package or dataset.
        \item The authors should state which version of the asset is used and, if possible, include a URL.
        \item The name of the license (e.g., CC-BY 4.0) should be included for each asset.
        \item For scraped data from a particular source (e.g., website), the copyright and terms of service of that source should be provided.
        \item If assets are released, the license, copyright information, and terms of use in the package should be provided. For popular datasets, \url{paperswithcode.com/datasets} has curated licenses for some datasets. Their licensing guide can help determine the license of a dataset.
        \item For existing datasets that are re-packaged, both the original license and the license of the derived asset (if it has changed) should be provided.
        \item If this information is not available online, the authors are encouraged to reach out to the asset's creators.
    \end{itemize}

\item {\bf New assets}
    \item[] Question: Are new assets introduced in the paper well documented and is the documentation provided alongside the assets?
    \item[] Answer: \answerTODO{} 
    \item[] Justification: \justificationTODO{}
    \item[] Guidelines:
    \begin{itemize}
        \item The answer \answerNA{} means that the paper does not release new assets.
        \item Researchers should communicate the details of the dataset\slash code\slash model as part of their submissions via structured templates. This includes details about training, license, limitations, etc. 
        \item The paper should discuss whether and how consent was obtained from people whose asset is used.
        \item At submission time, remember to anonymize your assets (if applicable). You can either create an anonymized URL or include an anonymized zip file.
    \end{itemize}

\item {\bf Crowdsourcing and research with human subjects}
    \item[] Question: For crowdsourcing experiments and research with human subjects, does the paper include the full text of instructions given to participants and screenshots, if applicable, as well as details about compensation (if any)? 
    \item[] Answer: \answerTODO{} 
    \item[] Justification: \justificationTODO{}
    \item[] Guidelines:
    \begin{itemize}
        \item The answer \answerNA{} means that the paper does not involve crowdsourcing nor research with human subjects.
        \item Including this information in the supplemental material is fine, but if the main contribution of the paper involves human subjects, then as much detail as possible should be included in the main paper. 
        \item According to the NeurIPS Code of Ethics, workers involved in data collection, curation, or other labor should be paid at least the minimum wage in the country of the data collector. 
    \end{itemize}

\item {\bf Institutional review board (IRB) approvals or equivalent for research with human subjects}
    \item[] Question: Does the paper describe potential risks incurred by study participants, whether such risks were disclosed to the subjects, and whether Institutional Review Board (IRB) approvals (or an equivalent approval/review based on the requirements of your country or institution) were obtained?
    \item[] Answer: \answerTODO{} 
    \item[] Justification: \justificationTODO{}
    \item[] Guidelines:
    \begin{itemize}
        \item The answer \answerNA{} means that the paper does not involve crowdsourcing nor research with human subjects.
        \item Depending on the country in which research is conducted, IRB approval (or equivalent) may be required for any human subjects research. If you obtained IRB approval, you should clearly state this in the paper. 
        \item We recognize that the procedures for this may vary significantly between institutions and locations, and we expect authors to adhere to the NeurIPS Code of Ethics and the guidelines for their institution. 
        \item For initial submissions, do not include any information that would break anonymity (if applicable), such as the institution conducting the review.
    \end{itemize}

\item {\bf Declaration of LLM usage}
    \item[] Question: Does the paper describe the usage of LLMs if it is an important, original, or non-standard component of the core methods in this research? Note that if the LLM is used only for writing, editing, or formatting purposes and does \emph{not} impact the core methodology, scientific rigor, or originality of the research, declaration is not required.
    \item[] Answer: \answerTODO{} 
    \item[] Justification: \justificationTODO{}
    \item[] Guidelines:
    \begin{itemize}
        \item The answer \answerNA{} means that the core method development in this research does not involve LLMs as any important, original, or non-standard components.
        \item Please refer to our LLM policy in the NeurIPS handbook for what should or should not be described.
    \end{itemize}

\end{enumerate}
\fi

\end{document}